\documentclass{article}

    \PassOptionsToPackage{numbers, compress}{natbib}
 \usepackage[preprint]{neurips_2026}

\usepackage[utf8]{inputenc} 
\usepackage[T1]{fontenc}    
\usepackage{hyperref}       
\usepackage{url}            
\usepackage{booktabs}       
\usepackage{amsfonts}       
\usepackage{nicefrac}       
\usepackage{microtype}      
\usepackage{xcolor}         
\usepackage{changepage}
\usepackage{caption}
\usepackage{amsmath}
\usepackage{amssymb}
\usepackage{mathtools}
\usepackage{amsthm}
\usepackage{bm} 
\usepackage{wrapfig}
\usepackage{etoc}
\usepackage{titletoc}
\usepackage{xcolor}
\usepackage{colortbl}

\usepackage{multirow}
\usepackage[table]{xcolor}

\definecolor{lightpink}{RGB}{235, 241, 250}
\definecolor{pink}{RGB}{245, 255, 80}
\usepackage[capitalize,noabbrev]{cleveref}

\theoremstyle{plain}
\newtheorem{theorem}{Theorem}[section]
\newtheorem{proposition}[theorem]{Proposition}

\theoremstyle{definition}

\theoremstyle{remark}

\definecolor{darkgreen}{RGB}{0,120,0}
\usepackage[textsize=tiny]{todonotes}

\usepackage{pifont}
\usepackage{enumitem}
\usepackage{booktabs} 

\usepackage{graphicx}
\usepackage{subcaption}

\title{BLEND: Balancing Personalization vs. Generalization \\ in Federated Vision–Language Models}

\author{%
\hspace{-2mm}  Payam Abdisarabshali \\
\hspace{-2mm}   University at Buffalo--SUNY \\
 \hspace{-2mm}  \texttt{payamabd@buffalo.edu} \\
  \And
\hspace{-8mm}   Fardis Nadimi \\
\hspace{-8mm}   University at Buffalo--SUNY \\
\hspace{-8mm}   \texttt{Fardisna@buffalo.edu} \\
  \And
\hspace{-8mm}  Kasra Borazjani  \hspace{-14mm}\\
\hspace{-8mm}   University at Buffalo--SUNY  \hspace{-14mm}\\
\hspace{-8mm}   \texttt{Kasrabor@buffalo.edu}  \hspace{-14mm}\\
  \And
\hspace{-4mm}   Naji Khosravan \\
\hspace{-4mm}   Adobe \\
\hspace{-4mm}   \texttt{naji.khosravan@gmail.com} \\
  \And
\hspace{-8mm}   Richeng Jin \\
\hspace{-8mm}   Zhejiang University \\
\hspace{-8mm}   \texttt{richengjin@zju.edu.cn} \\
  \And
\hspace{-8mm}   Seyyedali Hosseinalipour \\
\hspace{-8mm}   University at Buffalo--SUNY \\
\hspace{-8mm}   \texttt{alipour@buffalo.edu} \\
}

\begin{document}

\maketitle

\begin{abstract}
Federated vision–language models (FedVLMs) represent one of the emerging frontiers of machine learning, aiming to integrate federated learning within the fine-tuning pipelines of vision–language models (VLMs). In this work, we introduce {BLEND}, a new framework designed to jointly enhance the personalization and generalization capabilities of FedVLMs. {BLEND} proposes a selective VLM parameter fine-tuning and aggregation strategy that consists of (i) global vision and text adapters shared across clients, and (ii) a personalized vision projection adapter tailored to each client. {BLEND} further introduces a novel loss function that \textit{blends} the outputs of the global and personalized adapters to achieve personalization, while simultaneously promoting generalization through a principled anchor design implemented via the projection heads of the zero-shot VLM. Extensive experiments demonstrate that {BLEND} consistently outperforms state-of-the-art baselines, highlighting its ability to effectively balance personalization and generalization across diverse system settings. (Code is available at: \href{https://github.com/payamsiabd/BLEND.git}{https://github.com/payamsiabd/BLEND.git}) 
\end{abstract}

\vspace{-1.95mm}
\section{Introduction}\label{Sec:Intro}
\vspace{-1.5mm}
Vision–language models (VLMs) such as \textit{contrastive language-image pretraining} (CLIP)~\cite{radford2021learning} have rapidly become a major part of modern intelligence~\cite{luo2025mixture,fang2025fedpha}. In many applications, the data that fuel VLMs are inherently \textit{distributed} across numerous entities, which 
necessitates 
developing frameworks that enable the \textit{collaborative fine-tuning of VLMs across distributed clients}. This has given rise to a rapidly emerging research area, termed \textit{federated vision–language models (FedVLMs)}~\cite{li2024global, cui2024harmonizing, guo2023promptfl,nadimi2025multi,borazjani2025multi,abdisarabshali2025hierarchical}. 

In this work, we focus on \textit{balancing model generalization~vs.~personalization in FedVLMs}, a highly recent field with only a few relevant studies~\cite{cui2024harmonizing, ghiasvand2025pfedmma}. Our approach is built upon revisiting a powerful observation made in \cite{yang2024mma}, a work dedicated to \textit{centralized} fine-tuning of VLMs. In particular, revisiting \textit{Fig.~1 in \cite{yang2024mma}}, we obtain two intuitions, which point to an effective approach for  balancing model generalization vs. personalization in the \textit{distributed} fine-tuning setting of FedVLMs:
\vspace{-0.5mm}

\textbf{\textit{(Intuition 1)}} The \textit{vision} branch of CLIP yields lower dataset-level recognition performance compared to the text branch, indicating that it captures more generalizable representations. However, such generality lacks client-specific adaptation, necessitating personalization to client data in FedVLMs. 
\vspace{-0.5mm}

\textbf{\textit{(Intuition 2)}} Compared to the vision branch, the \textit{text} branch of CLIP exhibits a higher degree of dataset-level recognition, and thus further personalization may result in overfitting in FedVLMs. 
\vspace{-0.5mm}

Based on the above intuitions (concretized in Sec.~\ref{sec:Theory}), we formalize our research question below: \\

\begin{adjustwidth}{0.5cm}{0.5cm}
\textit{How can we design adaptation mechanisms for the vision/text branches of CLIP in FedVLM settings so as to preserve the generalizable representations of the vision branch, avoid over-personalization in the text branch, and jointly enable effective client-specific personalization?}
\end{adjustwidth}
\vspace{-2mm}

To address this, we systematically (i) determine where to apply model personalization, (ii) decide which components should be aggregated to capture global knowledge and which should remain local to capture client-specific adaptations, and (iii) obtain a carefully crafted loss function that can leverage such local-global model fine-tuning decomposition in FedVLMs to balance model generalization vs. personalization. In doing so, we modify the CLIP architecture to produce \textit{three complementary sources of representations}. In the vision branch, we generate a \textit{generalized vision representation} and a \textit{personalized vision representation}, while in the text branch, we maintain a \textit{generalized text representation}. To effectively combine these representations, our introduced dual-objective training entails a \textit{fusion loss} and an \textit{anchor loss}. This design enables the model to \underline{\textit{blend}} the generalized and personalized vision representations into a fusion representation that captures both general and client-specific knowledge, while simultaneously preserving the generalized knowledge of CLIP encoded in its text branch.

Compared to a few comparable works~\cite{cui2024harmonizing, ghiasvand2025pfedmma} and other FedVLM works that are mostly focused on sole model personalization~\cite{li2024global, guo2023promptfl, luo2025mixture, fang2025fedpha} and use techniques such as optimal transport and singular value decomposition, an advantage of our method is its \textit{simplicity}: it offers a systematic global-local decomposition of CLIP fine-tuning elements combined with a modified loss function that effectively utilizes this decomposition and outperforms all baseline methods.
Our contributions are as follows:

\vspace{-1mm}

\begin{itemize}[leftmargin=3mm, itemsep=-1.7pt, topsep=-2pt]
\item We introduce {BLEND}, a systematically designed framework to balance personalization and generalization in FedVLMs. {BLEND} is a fine-tuning strategy that augments CLIP with trainable modules: global vision and text adapters, and a personalized vision projection 
adapter.
 \item {BLEND} establishes a selective aggregation scheme, where only the global adapters are aggregated across clients while the personalized projection adapter remains local.  It further introduces a novel dual loss function that is designed to leverage the output of these adapters
 to ensure a balanced model personalization vs. generalization 
during distributed model fine-tuning in FedVLMs.
\item  Extensive experiments confirm that {BLEND} consistently outperforms state-of-the-art FedVLM baselines, achieving superior balance between model personalization and generalization.
\end{itemize}
\vspace{-1mm}

$\star$ Following the comparable works~\cite{fang2025fedpha, luo2025mixture, guo2023promptfl, cui2024harmonizing, li2024global, guo2025mmrl}, we adopt CLIP as the deployed VLM.

\vspace{-3mm}
\section{Related Work}

\vspace{-2mm}
\subsection{Centralized VLM Fine-Tuning Techniques}
\vspace{-.75mm}
Works on the fine-tuning of VLMs can be broadly categorized into \textit{(i) prompt-based  learning techniques} and \textit{(ii) adapter-based learning techniques}.
On the former front, 
early works such as CoOp~\cite{zhou2022learning} 
introduced learnable text prompts, while 
subsequent studies, such as ProVP~\cite{xu2025progressive}, MetaPrompt~\cite{zhao2024learning}, TCP~\cite{yao2024tcp}, MoP~\cite{du2025mixture}, and DPC~\cite{li2025dpc}
focused on improving prompt optimization.
Also, researchers have explored Bayesian regularization~\cite{kim2025bayesian}, margin-based tuning~\cite{brahma2025prompt}, test-time adaptation~\cite{zhu2024efficient}, integrating LLMs for dynamic prompt routing~\cite{jia2025llm}, self-distillation for zero-shot generalization~\cite{li2025improving}, and semantic-guided prompt optimization~\cite{shi2025sempt,chen2025chain}. Overall, while prompt-learning serves as a highly lightweight adaptation technique for VLMs, it often suffers from limited performance compared to more expressive adaptation approaches such as adapter-based learning.

Adapter-based learning introduces lightweight trainable modules into  transformer architectures to capture task-specific information while keeping most parameters frozen. Early works such as Tip-Adapter~\cite{zhang2022tip} and CLIP-Adapter~\cite{gao2024clip} proposed leveraging cached feature representations and multi-layer perceptron adapters within the CLIP backbone for few-shot classification. 
Recently, MMA~\cite{yang2024mma} developed multi-modal adapters for CLIP that jointly refine vision–language alignment, which as a byproduct helps balance generalization and personalization in centralized VLMs.

Compared to the aforementioned  studies that focus on \textit{centralized} fine-tuning strategies, {BLEND} targets an underexplored research direction: the joint design of parameter-efficient adapters, architectural adjustments to CLIP, and a novel loss function that explicitly balances model personalization and generalization in \textit{distributed/federated} fine-tuning settings for VLMs.

\vspace{-2mm}
\subsection{Federated Vision-Language Models (FedVLMs)} 
\vspace{-.75mm}

Recent works on FedVLMs such as PromptFL~\cite{guo2023promptfl} and FedOTP~\cite{li2024global} introduced federated prompt optimization for VLMs, which was followed by PromptFolio~\citep{pan2024federated}, FedPGP~\cite{cui2024harmonizing}, FedPHA~\cite{fang2025fedpha}, and pFedMoAP~\cite{luo2025mixture}. In parallel, works such as FedDat~\cite{chen2024feddat} and TriplePlay~\cite{imteaj2024tripleplay} introduced adapter-based fine-tuning of VLMs to distributed clients with non-independent and identically distributed (non-IID) data, which was advanced by pFedMMA~\cite{ghiasvand2025pfedmma}. Further, PromptFolio~\citep{pan2024federated} proposed fusing global and local text \textit{prompts} for FedVLMs.
Among these studies, FedPGP, pFedMMA, and PromptFolio are among the very few that aim to improve both personalization and generalization in FedVLMs. 

Compared to the above works, {BLEND} provides a systematic design explicitly tailored to balance generalization and personalization in FedVLMs. This objective is directly encoded in its architectural modifications to CLIP, its local training and aggregation procedures, and its loss formulation. Extensive experiments demonstrate that {BLEND} consistently outperforms all existing relevant baselines, establishing a new state-of-the-art in the FedVLM literature.

\vspace{-3mm}
\section{Method}

\setlength{\abovedisplayskip}{2.25pt}
\setlength{\belowdisplayskip}{2.25pt}

\vspace{-2mm}
\subsection{Background: CLIP in a Nutshell}
CLIP~\cite{radford2021learning} adopts a dual-encoder architecture consisting of a \textit{vision encoder} $\mathcal{V}$ and a \textit{text encoder} $\mathcal{T}$, each implemented as a stack of $L$ transformers. The model is \textit{pre-trained} on large numbers of image–text pairs using a contrastive learning objective, which is often followed by a \textit{fine-tuning} phase on downstream tasks. Below, we provide a background on  CLIP's components. 

\textbf{Vision encoder:}
In the vision encoder {\small$\mathcal{V}$}, an input image {\small$\mathbf{I}$} is first partitioned into multiple patches {\small$\mathsf{Patch} (\mathbf{I})$}. These patches are then projected into feature embeddings and concatenated with a learnable token {\small$\zeta_0$}, referred to as the \textit{classification token}, to form the initial feature matrix {\small$\bm{x}_0$} as follows:
\begin{equation}\label{eq:patch_generator}
    \bm{x}_0=[\zeta_0, \mathsf{Patch} (\mathbf{I})].
\end{equation}
Feature matrix {\small$\bm{x}_0$} is then sequentially processed by {\small$L$} transformer blocks of the vision encoder, referred to by {\small$\{\mathcal{V}_{\ell}\}_{\ell=1}^{L}$}, during which  the subsequent classification tokens {\small$\{\zeta_\ell\}_{\ell=1}^{L}$} aggregate visual information from all other tokens.
Mathematically, the output of each of the subsequent transformer blocks {\small$\ell\in\{1, 2, \cdots, L\}$}, denoted by {\small$\bm{x}_{\ell}$}, can be presented as the concatenation of a classification token {\small$\zeta_\ell$} along with the nominal outputs of the transformer block {\small$o_\ell$} as follows: 
\begin{equation}\label{eq:vis_transformer_blocks}
    \bm{x}_{\ell}\triangleq [\zeta_\ell,o_\ell] =\mathcal{V}_{ \ell}(\bm{x}_{\ell-1}),~~~~~~ \ell\in\{1, 2, \cdots, L\}.
\end{equation}
After the final transformer {\small$\mathcal{V}_L$}, the classification token {\small$\zeta_L$} is projected into a shared vision–language embedding space using the \textit{vision projection head} {\small$\pi^{\mathsf{Vis}}$}, producing the final visual representation as:
\begin{equation}\label{eq:Final_R}
    \mathcal{R}  = \zeta_L\pi^{\mathsf{Vis}}.
\end{equation}

\textbf{Text encoder:} 
The text encoder follows a similar transformer-based architecture to the vision encoder but operates on tokenized text sequences. Specifically, for a classification task with $C$ distinct classes, each class label {\small$c\in \mathcal{C}\triangleq\{1,\cdots,C\}$} is first represented with a \textit{class description} {\small$T^{(c)}$} using a predefined prompt template, e.g., {\small$T^{(c)}$}=\texttt{an image of a \{class\_label\}}. Each {\small$T^{(c)}$} is then tokenized into multiple tokens via a tokenizer as:
\begin{equation}\label{eq:tokenizer}
    \bm{w}_0^{(c)}= \mathsf{Tokenizer} \big(T^{(c)}\big).
\end{equation}
The initial token embeddings {\small$\bm{w}_0^{(c)}$} are subsequently passed through {\small$L$} transformer blocks of the text encoder, referred to by {\small$\{\mathcal{T}_{\ell}\}_{\ell=1}^{L}$}, to produce the subsequent text token embeddings {\small$\{\bm{w}_{\ell}^{(c)}\}_{\ell=1}^{L}$}, where each {\small$\bm{w}_{\ell}^{(c)}$} can be presented as the concatenation of the nominal outputs of the transformer block {\small$\hat{o}_\ell$} and the \textit{end-of-text (EoT)} token {\small$\xi_\ell$} as follows: 
\begin{equation}\label{eq:txt_transformer_block}
  \hspace{-2mm}  \bm{w}_{\ell}^{(c)} \triangleq  [\hat{o}_\ell, \xi_\ell] = \mathcal{T}_{\ell}\big(\bm{w}_{\ell-1}^{(c)}\big),~  \ell\in\{1, 2, \cdots, L\}.\hspace{-2mm}
\end{equation}
Specifically, the last token of each embedding {\small$\bm{w}_{\ell}^{(c)}$}, i.e., the EoT token {\small$\xi_\ell$}, captures the contextual dependencies across the entire sequence. After the final transformer block {\small$\mathcal{T}_{L}$}, the resulting EoT token of {\small$\bm{w}_{L}^{(c)}$}, referred to as {\small$\xi_{_L}$}, is projected into the shared vision–language embedding space via the \textit{text projection head} {\small$\pi^{\mathsf{Txt}}$} to produce the final text feature {\small$\mathcal{W}^{(c)}$} as follows:
\begin{equation}\label{eq:Final_T}
    \mathcal{W}^{(c)} = \xi_{_L}\pi^{\mathsf{Txt}}.
\end{equation}

\textbf{Classification in CLIP:}
To perform a prediction, given the visual embedding {\small$\mathcal{R}$} defined in \eqref{eq:Final_R} and the textual embeddings of classes {\small$\{\mathcal{W}^{(c)}\}_{c\in \mathcal{C}}$} defined in~\eqref{eq:Final_T}, the similarity between the image and each class is computed using the cosine similarity (i.e., normalized inner product) as follows:
\begin{equation}
S^{(c)} = {\langle\mathcal{R} ,\mathcal{W}^{(c)} \rangle}\big/{\|{R}\|_{_2}.\|\mathcal{W}^{(c)}\|_{_2}}, \quad c \in \mathcal{C}.
\end{equation}
The set of similarities {\small$\{S^{(c)}\}_{c\in \mathcal{C}}$} is then used to compute the class probabilities for the input image {\small$\mathbf{I}$}  via a softmax function with temperature $\tau$ as: {\small $\mathsf{Pr}(c \mid \mathbf{I}) = {\exp(S^{(c)} / \tau)}\big/{\sum_{c' \in\mathcal{C}} \exp(S^{(c')} / \tau)}.$}

\begin{figure*}[t]
    \centering
    \includegraphics[width=1.0\linewidth, trim= 0 7 0 7, clip]{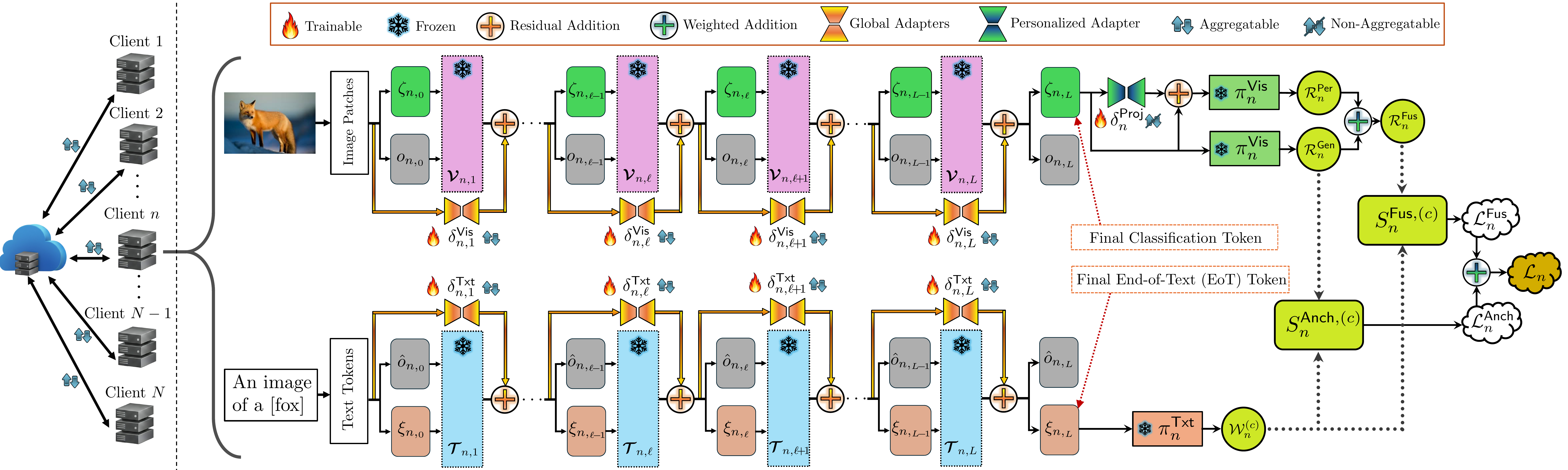}
    \vspace{-4.85mm}
    \caption{\textbf{Left:} A FedVLM system with $N$ clients and a central server. \textbf{Right:} Schematic of the model introduced by {BLEND} for each client $n$, encompassing (i) a set of trainable and aggregatable global vision and text adapters, which are residually added to the transformer layers (i.e., $\{\delta_{n,\ell}^{\mathsf{Vis}},\delta_{n, \ell}^{\mathsf{Txt}}\}_{\ell=1}^{L}$), (ii) a trainable personalized vision projection adapter, which is integrated in a new branch and kept local during the training (i.e., $\delta_{n}^{\mathsf{Proj}}$), and (iii) dual loss functions (i.e., $\mathcal{L}^{\mathsf{Fus}}_n$ and $\mathcal{L}^{\mathsf{Anch}}_n$), which are integrated into the final loss $\mathcal{L}_n$.}
    \label{fig:main_arch}
    \vspace{-5.5mm}
\end{figure*}

\vspace{-2mm}
\subsection{Proposed Method ({BLEND})}\label{sec:method}
\vspace{-1.7mm}
We consider a FedVLM system, comprising a set of clients {\small$\mathcal{N}=\{1,\ldots,N\}$}, where {\small$|\mathcal{N}|=N$}. Each client {\small$n\in\mathcal{N}$} possesses a local dataset {\small$\mathcal{D}_n$}, and maintains a local copy of the CLIP model. 
The goal of our proposed method, i.e., {BLEND}, is to achieve two potentially competing objectives: 
(i) enhanced \textit{personalization}, by adapting the local CLIP model to each client’s data through fine-tuning, and
(ii) enhanced \textit{generalization}, by improving the local CLIP model’s ability to recognize previously unseen classes/labels (i.e., those that are not part of clients' data) that may be encountered during deployment. 
A schematic of {BLEND} is depicted in Fig.~\ref{fig:main_arch} and its operations are summarized below.


\vspace{-2.6mm}
\subsubsection{BLEND in a Nutshell}
\vspace{-1.2mm}

{BLEND} establishes a federated fine-tuning methodology for CLIP based on \textbf{\textit{Intuitions 1\&2}} presented in Sec.~\ref{Sec:Intro}. 
Specifically, {BLEND} employs two learnable components, each implementing a distinct but complementary mechanism in the learning process to achieve the aforementioned two competing objectives.
The first component is {a set of adapters} (called \textit{global vision/text adapters}), aggregated across clients, that are designed to learn \textit{generalizable representations} through collaborative learning across clients, while the second component is {an adapter} (called \textit{personalized projection adapter}), kept local at each client, that is crafted to learn \textit{client-specific personalized representations}. These two components are then systematically used in a novel loss function to balance personalization and generalization. 
\begin{wrapfigure}{r}{0.4\linewidth}
    \centering
    \vspace{-11.95pt}
    \includegraphics[width=\linewidth, trim=0 0 0 24, clip]{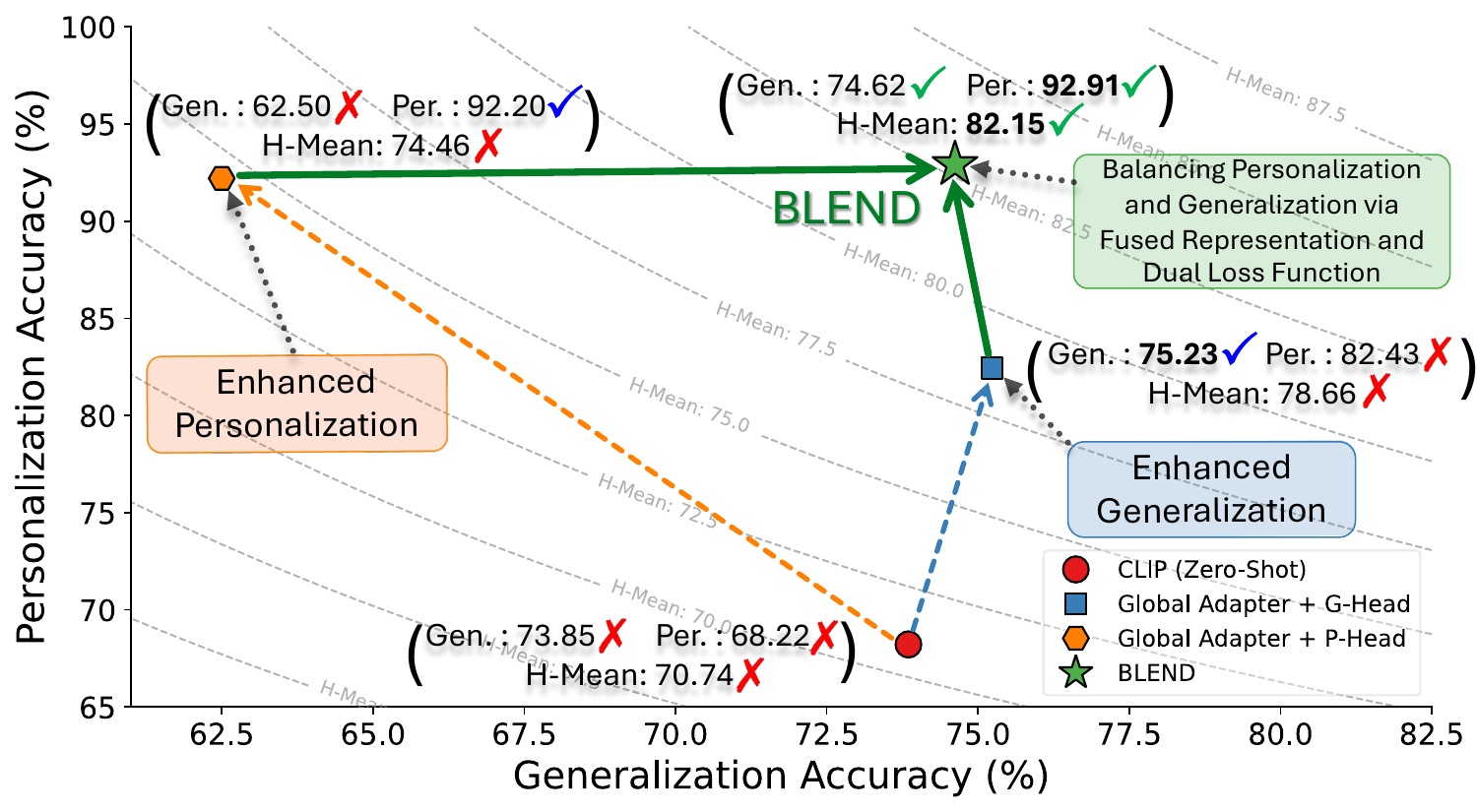}
    \vspace{-18.9pt}
    \caption{Comparison of the personalization (Per.) and generalization (Gen.) of various fine-tuning strategies along with their corresponding harmonic mean (H-Mean). Results are averaged over the 10 datasets in Sec.~\ref{sec:experiments}, where the system configuration is explained.}
    \label{fig:BLEND_Open}
    \vspace{-14pt}
\end{wrapfigure}
The realization of these two complementary components stems from a two-step design process.
First, we inject the global vision and text adapters into the transformer blocks of both the vision and text encoders of CLIP. These adapters are integrated into the transformer pipelines, producing a vision output that is passed through the pre-trained (frozen) CLIP vision projection head, referred to as the G-head, and a text output that is processed by the pre-trained (frozen) CLIP text projection head. These projection heads are intentionally kept frozen to preserve the generalizability of the pre-trained CLIP model while accompanying the learnable global adapters. We then enable the local training of the global adapters on each client, followed by periodic aggregation at the server during federated training.
As shown in Fig.~\ref{fig:BLEND_Open} (blue square), compared to the pre-trained CLIP model, incorporating the global adapters improves generalization to unseen classes by capturing cross-device knowledge. However, while these adapters also yield gains in personalization, we hypothesize that there remains room for improvement, since global adapters alone cannot fully preserve client-specific representations due to their periodic aggregation.

\vspace{-1mm}
To address this limitation and enhance personalization, {BLEND} injects a personalized projection adapter into the vision projection head of the pre-trained CLIP, referred to as the {P-head}.
This adapter module is learnable but is not transmitted to the server and remains on the client to learn personalized representations. As shown in Fig.~\ref{fig:BLEND_Open} (orange circle), when this module is used in conjunction with the global adapters, it can significantly improve model personalization. Nevertheless, the results show that relying solely on this module leads to a significant degradation in the model's generalization.


Specifically, given the individual generalization and personalization gains achieved using the global adapters with the G-head and the P-head, respectively, the remaining challenge is as follows: \textit{how to integrate all of these components into a unified model architecture to take advantage of their complementary benefits?} 
{BLEND} achieves this goal through fusing the outputs of G-head and P-head to construct a combined representation, referred to as the \textit{projected representation fusion}, which is used as the final visual embedding. Afterwards, by defining a novel loss function, called \textit{representation fusion loss}, {BLEND} enables the model to fuse these two representations in an optimal manner to balance personalization and generalization of the model. 
As shown in Fig.~\ref{fig:BLEND_Open}, {BLEND} achieves the highest level of personalization while simultaneously maintaining superior generalization, jointly captured by the reported harmonic mean (H-Mean) value. Below, we provide a theoretical motivation for BLEND's asymmetric personalization of the vision and text branches of CLIP.

\vspace{-3.2mm}
\subsubsection{Theoretical Motivations}\label{sec:Theory}
\vspace{-1.75mm}

BLEND's asymmetric personalization of encoder branches of CLIP is rooted in the mutual information between the input scene and the visual and textual observations as concretized below:
\vspace{-3.75mm}
\begin{proposition}\label{prop:information_imbalance}
Let $S$ denote the latent semantic information of a scene, $V$ its visual observation, and $T$ a textual description generated from $V$. The information flow satisfies the Markov chain
$
S \rightarrow V \rightarrow T,
$
and therefore the  mutual information $I(\cdot;\cdot)$ of $(S;V)$ and $(S;T)$ holds in 
\[
I(S;T) \leq I(S;V).
\]
\vspace{-6mm}
\end{proposition}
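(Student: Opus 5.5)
This is the data processing inequality, and the plan is to derive it from the chain rule for mutual information. The modeling content is entirely in the hypothesis $S \rightarrow V \rightarrow T$. Because $T$ is generated from $V$ alone, the text is conditionally independent of the scene given the image: $p(t\mid v,s)=p(t\mid v)$. This is equivalent to the conditional mutual information vanishing, $I(S;T\mid V)=0$. I will state this first and note that it is exactly the formal meaning of the Markov chain assumption. Nothing else about the distributions is needed, whether they are discrete or continuous with densities.

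Next I expand the joint mutual information $I(S;V,T)$ in two ways using the chain rule:
\[
I(S;V,T) = I(S;V) + I(S;T\mid V) = I(S;T) + I(S;V\mid T).
\]
Substituting $I(S;T\mid V)=0$ gives $I(S;V) = I(S;T) + I(S;V\mid T)$. Conditional mutual information is nonnegative, since it is an average of Kullback--Leibler divergences, so $I(S;V\mid T)\ge 0$. This yields $I(S;T)\le I(S;V)$.

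Equality holds iff $I(S;V\mid T)=0$, i.e., iff $T$ is a sufficient statistic of $V$ for $S$. I will remark on this because it interprets the gap as the semantic content of the image that the text description discards. That reading supports Intuition~1: the vision branch carries richer, more general information.

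The only step needing care is the justification of the chain rule and of nonnegativity in the general (non-discrete) setting. There I would either assume densities or appeal to the measure-theoretic definition via Radon--Nikodym derivatives. I expect no real obstacle. For the paper's purposes, I would state the argument for discrete or density-admitting variables and cite the standard data processing inequality for the general case.
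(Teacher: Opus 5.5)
Your proof is correct and is essentially the paper's. The paper simply cites the data-processing inequality (Theorem 2.8.1 in Cover and Thomas), and the standard proof of that theorem is exactly what you wrote: expand $I(S;V,T)$ by the chain rule in both orders, set $I(S;T\mid V)=0$ from the Markov chain, and use $I(S;V\mid T)\ge 0$.
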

\begin{proof}
The proof follows from Theorem 2.8.1 (Data-Processing Inequality) in \cite{Cover2006}.
\end{proof}
\vspace{-3mm}
The inequality $I(S;T) \leq I(S;V)$ establishes that a textual description generated from an image \textit{cannot} contain more information about the underlying scene than the visual observation from which it is derived. For instance, an image may depict a brown dog running on grass, facing left, with a red ball nearby, while its textual description may simply state ``an image of a dog.'' Thus, the visual observation can preserve multiple semantic attributes that are omitted from the textual description.



The above information-theoretic result can be further complemented with two empirical observations from prior work on centralized fine-tuning of VLMs. First, Schrodi et al.~\cite{schrodi2025two} identified an information imbalance in CLIP-like models, showing that textual captions often provide sparse descriptions while images contain  richer information. Second, Yang et al.~\cite{yang2024mma} observed that textual representations can exhibit stronger dataset-specific behavior compared to vision representations. Together, these observations further justify \textbf{\textit{Intuitions 1\&2}} presented in Sec.~\ref{Sec:Intro}, and subsequently motivate BLEND's asymmetric personalization of CLIP's encoder branches, with stronger client-specific adaptation applied to visual representations (through introducing P-head).

\vspace{-3.2mm}
\subsubsection{Local Model Architecture} 
\vspace{-1.5mm}
We next focus on each client $n$ and detail the model architecture in {BLEND} depicted in Fig.~\ref{fig:main_arch}. 
\vspace{-1mm}

\textbf{Global vision adapters:} Let {\small$\bm{x}_{n,0}\in \mathbb{R}^{M_x \times d_x}$} be the initial token embedding of an image {\small$\mathbf{I}$} of client $n$ (corresponding to {\small$\bm{x}_{0}$} calculated using \eqref{eq:patch_generator}), where {\small$M_x$} is the number of visual tokens and $d_x$ is the visual embedding dimension. Further, let {\small$\bm{x}_{n,\ell-1}$} be the input to the {\small$\ell$}-th vision transformer block at client $n$ denoted by {\small$\mathcal{V}_{n,\ell}$}. {BLEND} establishes an adapter branch in each vision transformer block {\small$\mathcal{V}_{n,\ell}$} whose output is residually added to the block's output as follows:
\vspace{-0.5mm}
\begin{equation}\label{eq:vision_embedding}
\hspace{-2mm}
    \bm{x}_{n,\ell} =\underbrace{\mathcal{V}_{n, \ell}\left(\bm{x}_{n,\ell-1}\right)}_{{\text{\tiny\tiny Pre-Trained  Transformer Output}}} + \underbrace{\delta_{n,\ell}^{\mathsf{Vis}}\left(\bm{x}_{n,\ell-1}\right)}_{{\text{\tiny Residually Added Adapter Output}}},~\ell\in\{1, \cdots, L\}, \hspace{-2mm}
\end{equation}
where {\small$\mathcal{V}_{n, \ell}\left(\cdot\right)$} is the {\small$\ell$}-th  transformer, used in \eqref{eq:vis_transformer_blocks}, while the residual {\small$\delta_{n,\ell}^{\mathsf{Vis}}$} is realized as follows:
\vspace{-0.5mm}
\begin{equation}\label{eq:adapter_vis}
\delta_{n,\ell}^{\mathsf{Vis}}\left(\bm{x}_{n,\ell-1}\right) = \mathbf{B}_{n,\ell}^{\mathsf{Vis}}\Big(\sigma\big(\mathbf{A}_{n,\ell}^{\mathsf{Vis}}(\bm{x}_{n,\ell-1})\big)\Big).
\end{equation}
Here, {\small$\mathbf{A}_{n,\ell}^{\mathsf{Vis}} \in \mathbb{R}^{r_x \times d_x}$} and {\small$\mathbf{B}_{n,\ell}^{\mathsf{Vis}} \in \mathbb{R}^{d_x \times r_x}$} are the down- and up-projection matrices, respectively, with {\small$r_x \ll d_x$} denoting the rank of the adaptation (various choices of {\small$r_x$} are studied in \textit{Ablation Study~4} in Sec.~\ref{sec:experiments}). Further, {\small$\sigma$} denotes the non-linear ReLU activation function. In \textit{Ablation Study~6} in Sec.~\ref{sec:experiments}, we analyze the performance gains obtained by inserting adapters at different transformer blocks (when no adapter is used at transformer layer $\ell$, the residual branch is disabled and {\small$\delta_{n,\ell}^{\mathsf{Vis}}(\cdot) = 0$}).
\vspace{-1.2mm}

\textbf{Global text adapters:} The text encoder follows the same architectural design as the vision encoder. Specifically, let {\small$\bm{w}_{n,0}^{(c)}\in \mathbb{R}^{M_w \times d_w}$} denote the initial token embeddings of class description {\small$T^{(c)}$} at client {\small$n$} (corresponding to {\small$\bm{w}^{(c)}_{0}$} calculated using \eqref{eq:tokenizer}), where {\small$M_w$} is the number of text tokens and {\small$d_w$} is the text embedding dimension. Further, let {\small$\bm{w}_{n,\ell-1}^{(c)}$} denote the input to the $\ell$-th transformer block of the text encoder for client {\small$n$} denoted by {\small$\mathcal{T}_{n,\ell}$}. {BLEND} integrates an adapter branch in each text transformer block {\small$\ell$}, whose output is residually added to the original block output as:
\begin{equation}\label{eq:textual_embedding}
    \hspace{-3.5mm}\bm{w}_{n,\ell}^{(c)} =\mathcal{T}_{n, \ell}\hspace{-.1mm}\big(\bm{w}_{n, \ell-1}^{(c)}\big) \hspace{-.1mm}+ \delta_{n, \ell}^{\mathsf{Txt}}\hspace{-.1mm}\big(\bm{w}_{n, \ell-1}^{(c)}\big)\hspace{-.1mm},\hspace{-.5mm}~\ell{\in}\{\hspace{-0.3mm}1, \cdots\hspace{-0.6mm},\hspace{-0.3mm} L\hspace{-0.2mm}\},\hspace{-2.9mm}
\end{equation}
where {\small$\mathcal{T}_{n, \ell}\left(\cdot\right)$} is the transformer block, used in \eqref{eq:txt_transformer_block}, while the residual {\small$\delta_{n, \ell}^{\mathsf{Txt}}$} is realized as follows:
\begin{equation}\label{eq:adapter_txt}
\delta_{n, \ell}^{\mathsf{Txt}}\big(\bm{w}_{n, \ell-1}^{(c)}\big) = \mathbf{B}_{n,\ell}^{\mathsf{Txt}}\Big(\sigma\big(\mathbf{A}_{n,\ell}^{\mathsf{Txt}}(\bm{w}_{n, \ell-1}^{(c)})\big)\Big).
\end{equation}
Here, {\small$\mathbf{A}_{n,\ell}^{\mathsf{Txt}} \in \mathbb{R}^{r_w \times d_w}$} and {\small$\mathbf{B}_{n,\ell}^{\mathsf{Txt}} \in \mathbb{R}^{d_w \times r_w}$} are the down- and up-projection matrices, respectively, with {\small$r_w \ll d_w$} (the effect of {\small$r_w$} on the performance is studied in \textit{Ablation Study~4} in Sec.~\ref{sec:experiments}).  
\vspace{-1.2mm}

\textbf{Text projection head:} {BLEND} utilizes the pre-trained text projection head of CLIP {\small$\pi_{n}^{\mathsf{Txt}}\in \mathbb{R}^{d_w \times d'}$} (the effect of fine-tuning {\small$\pi_{n}^{\mathsf{Txt}}$} is studied in \textit{Ablation Study~10} in Sec.~\ref{sec:experiments}), where $d'$ is the final shared vision-language embedding dimension. Specifically, at each client $n$ holding the subset of labels {\small$\mathcal{C}_n\subseteq \mathcal{C}$},  the final projected representation of a class description {\small$T^{(c)}$, $\forall c\in\mathcal{C}_n$}, is computed  similarly to \eqref{eq:Final_T} based on the latest text transformer output {\small$\bm{w}_{n,{_L}}^{(c)} =[\hat{o}_{n,{_L}}, \xi_{n,{_L}}]$}, obtained in \eqref{eq:textual_embedding}, as follows:
\begin{equation}\label{eq:textual_representation}
    \mathcal{W}_n^{(c)} = \xi_{n,{_L}} \pi_{n}^{\mathsf{Txt}},~~~~\mathcal{W}_n^{(c)}\in \mathbb{R}^{d'},
\end{equation}
where $\xi_{_L}$ is the respective EoT token of $\bm{w}_{n,{_L}}^{(c)}$ and $\hat{o}_{n,{_L}}$ is the nominal outputs of the transformer.

\vspace{-.9mm}
\textbf{Two-branch vision projection head:}
Let {\small$\bm{x}_{n,{_L}}=[\zeta_{n,{_L}},o_{n,{_L}}]$}, obtained following \eqref{eq:vision_embedding}, denote the output of the last vision transformer block of client $n$, consisting of the 
classification token {\small$\zeta_{n,{_L}}$} and nominal outputs of the transformer {\small$o_{n,{_L}}$}. To enhance both generalization and personalization, {BLEND} establishes a two-branch vision projection head, including a G-head and a P-head, as follows.

\vspace{-1.75mm}
\textbf{Realization of G-head:}
G-head maintains the model's generalization and projects the classification token {\small$\zeta_{n,{_L}}$} using the pre-trained CLIP vision projection {\small$\pi_{n}^{\mathsf{Vis}}\in \mathbb{R}^{d_x \times d'}$} in a similar fashion to \eqref{eq:Final_R} as:
\begin{equation}\label{eq:generalized_rep}
\mathcal{R}^{\mathsf{Gen}}_n = \zeta_{n,{_L}} \pi_{n}^{\mathsf{Vis}},~~~~\mathcal{R}^{\mathsf{Gen}}_n\in \mathbb{R}^{d'}.
\end{equation}

\vspace{-1.75mm}
\textbf{Realization of P-head:}
P-head introduces a new branch to the vision projection head, containing a learnable \textit{personalized projection adapter} responsible for improving the model's personalization. 
Specifically, in this branch, the \textit{personalized classification token embedding} {\small$\hat{\zeta}_{n,{_L}}$} is obtained through residually adding the output of the personalized projection adapter to the classification token as:
\begin{equation}
\hat{\zeta}_{n,{_L}}= \zeta_{n,{_L}} + \delta_{n}^{\mathsf{Proj}}(\zeta_{n,{_L}}),
\end{equation}
where the adapter output $\delta_{n}^{\mathsf{Proj}}$ is defined as:
\begin{equation}\label{eq:adapter_proj}
\delta_{n}^{\mathsf{Proj}}(\zeta_{n,{_L}}) = \mathbf{B}^{\mathsf{Proj}}_n\Big(\sigma\big(\mathbf{A}^{\mathsf{Proj}}_n \big(\zeta_{n,{_L}}\big)\big)\Big).
\end{equation}
Here, {\small $\mathbf{A}^{\mathsf{Proj}}_n{\in}\mathbb{R}^{r_p \times d_x}$} and {\small$\mathbf{B}^{\mathsf{Proj}}_n {\in} \mathbb{R}^{d_x \times r_p}$} are learnable low-rank matrices with {\small$r_p \ll d_x$} (the impact of {\small$r_p$} is studied in \textit{Ablation Study 5} in Sec.~\ref{sec:experiments}). The \textit{personalized vision embedding} is then obtained as:
\begin{equation}\label{eq:personalized_rep}
\mathcal{R}^{\mathsf{Per}}_n = \hat{\zeta}_{n,{_L}} \pi_{n}^{\mathsf{Vis}},~~~~\mathcal{R}^{\mathsf{Per}}_n\in \mathbb{R}^{d'}.
\end{equation}

\vspace{-1.75mm}
\textbf{Fusion of the embeddings of G-head and P-head:}
To construct the final visual representation, we combine the generalized and personalized embeddings in \eqref{eq:generalized_rep} and \eqref{eq:personalized_rep} via a weighted average as:
\begin{equation}\label{eq:fusion_rep}
\mathcal{R}^{\mathsf{Fus}}_n = (1-\gamma) \mathcal{R}^{\mathsf{Gen}}_n + \gamma \mathcal{R}^{\mathsf{Per}}_n,~~~~\mathcal{R}^{\mathsf{Fus}}_n\in \mathbb{R}^{d'},
\end{equation}
where {\small$\gamma \in [0,1]$} controls the trade-off between generalization and personalization (the impact of {\small$\gamma$} is studied in \textit{Ablation Study 7} in Sec.~\ref{sec:experiments}). 

Through introducing a novel loss function, defined in the subsequent section, {BLEND} constructs a learning paradigm in which all the trainable components (i.e., {\small $\mathbf{A}_{n,\ell}^{\mathsf{Vis}}$}, {\small$\mathbf{B}_{n,\ell}^{\mathsf{Vis}}$}, {\small$\mathbf{A}_{n,\ell}^{\mathsf{Txt}}$}, {\small$\mathbf{B}_{n,\ell}^{\mathsf{Txt}}$}, {\small$\mathbf{A}^{\mathsf{Proj}}_n$}, {\small$\mathbf{B}^{\mathsf{Proj}}_n$}) are realized in such a way to ensure that the produced text representations {\small$\{\mathcal{W}_n^{(c)}\}_{c\in \mathcal{C}_n}$} and the fusion of visual representations {\small$\mathcal{R}^{\mathsf{Gen}}_n$} and {\small$\mathcal{R}^{\mathsf{Per}}_n$}  enhance both model personalization and generalization.

\vspace{-3.1mm}
\subsubsection{Local Model Fine-Tuning}\label{sec:local-model-ft}
\vspace{-1.4mm}
The trainable parameters of each client $n$ are collected by
\begin{equation}\label{eq:1}
\hspace{-3mm}\bm{\mathcal{P}}^{\mathsf{Train}}_n{=}\Big[[\mathbf{A}_{n,\ell}^{\mathsf{Vis}}, \mathbf{B}_{n,\ell}^{\mathsf{Vis}}, \mathbf{A}_{n,\ell}^{\mathsf{Txt}}, \mathbf{B}_{n,\ell}^{\mathsf{Txt}}]_{\ell^{\mathsf{S}} {\leq} \ell {\leq} \ell^{\mathsf{E}}}, \mathbf{A}^{\mathsf{Proj}}_n, \hspace{-0.5mm}\mathbf{B}^{\mathsf{Proj}}_n \Big] \hspace{-0.7mm},\hspace{-2.5mm}
\end{equation}
where {\small$\ell^{\mathsf{S}}$} and {\small$\ell^{\mathsf{E}}$} denote the start and end indices of the transformer layers into which adapters are injected. 
Federated learning operates through a sequence of global rounds, each indexed by $k$. At global round $k$, each client {\small$n \in \mathcal{N}$} fine-tunes its local model through minimizing a local loss function {\small$\mathcal{L}_n(\cdot)$} by performing stochastic gradient descent (SGD) iterations on its trainable parameters using its local dataset {\small$\mathcal{D}_n$}. In particular, at the $j$-th local SGD iteration ({\small$j=1,\ldots,J$}) of global round $k$, the parameter update of client $n$ is given by
\begin{equation}\label{eq:SGD_Upadte}
\bm{ \mathcal{P}}^{\mathsf{Train},(k,j)}_n {=} \bm{\mathcal{P}}^{\mathsf{Train},(k,j-1)}_n - \eta \nabla\mathcal{L}_n(\bm{\mathcal{P}}^{\mathsf{Train},(k,j-1)}_n),
\end{equation}
where {\small$\eta$} is the learning-rate and the iterations start from {\small $\bm{\mathcal{P}}^{\mathsf{Train},(k,0)}_n$}, which is the latest received global model from the server (described in Sec.~\ref{sec:GlobalAgg}). Next, we detail our proposed loss function {\small$\mathcal{L}_n(\cdot)$}.

\vspace{-1.7mm}
\textbf{Realization of loss function:}
Given an input image {\small$\mathbf{I}$}, let {\small$\mathcal{R}^{\mathsf{Fus}}_n {\in} \mathbb{R}^{d'}$} denote the fused representation of the two-branch vision projection head in \eqref{eq:fusion_rep}  and {\small$\{\mathcal{W}^{(c)}_n\}_{c\in \mathcal{C}_n}$} denote the text embeddings of all class descriptions obtained from  \eqref{eq:textual_representation}. 
We first compute the similarity logits {\small$\{S^{\mathsf{Fus}, (c)}_n\}_{c\in \mathcal{C}_n}$}  as:
\begin{equation}\label{eq:fusion_similarity}
S^{\mathsf{Fus}, (c)}_n = {\langle\mathcal{R}^{\mathsf{Fus}}_n , \mathcal{W}^{(c)}_n\rangle}\big/{\|\mathcal{R}^{\mathsf{Fus}}_n\|_{_2}.\|\mathcal{W}^{(c)}_n\|_{_2}}, \quad {c\in \mathcal{C}_n}.
\end{equation}
 We then introduce \textit{the representation fusion loss} {\small$\mathcal{L}^{\mathsf{Fus}}_n(\bm{\mathcal{P}}^{\mathsf{Train}}_n)$} as the cross-entropy loss~\cite{li2024global} with the following conditional probabilities:  {\small$\mathsf{Pr}(c | \mathbf{I} )=  {\exp\big(S^{\mathsf{Fus},(c)}_n / \tau\big)}\big/{\sum_{c'\in \mathcal{C}_n} \exp\big(S^{\mathsf{Fus},(c')}_n / \tau\big)}.$}
The representation fusion loss encourages the optimal fusion of the general representation {\small$\mathcal{R}^{\mathsf{Gen}}_n$} and the personalized representation {\small$\mathcal{R}^{\mathsf{Per}}_n$} introduced in \eqref{eq:fusion_rep} by optimizing {\small$\bm{\mathcal{P}}^{\mathsf{Train}}_n$}. 

Although {\small$\mathcal{L}^{\mathsf{Fus}}_n(\bm{\mathcal{P}}^{\mathsf{Train}}_n)$} integrates both the general and personalized representations (i.e., {\small$\mathcal{R}^{\mathsf{Gen}}_n$} and {\small$\mathcal{R}^{\mathsf{Per}}_n$}), its sole minimization is \textit{not} enough: minimizing this loss function alone can negatively affect the generalization of the model as it biases both the  global vision adapters and personalized projection adapter to the clients' local datasets. This calls for the introduction of a complementary loss function that prevents the bias of the global vision adapters to the local datasets, promoting the generalizability of the model.
To this end,  we introduce a \textit{running anchor loss} that constrains the general vision representation {\small$\mathcal{R}^{\mathsf{Gen}}_n$} (i.e., the G-head's output) to remain aligned with the corresponding text representation {\small$\mathcal{W}^{(c)}_n$}: since both the G-head and the text encoder branch employ the pre-trained (zero-shot) CLIP projection heads, enforcing this alignment establishes a stable anchoring that promotes generalization. The term \textit{`running'} reflects the fact that this anchor is not fixed; rather, it evolves as the global vision and text adapters update the underlying representations in both branches.

To realize the running anchor loss, 
we first obtain the similarity logits between {\small$\mathcal{R}^{\mathsf{Gen}}_n$} and {\small$\{\mathcal{W}^{(c)}_n\}_{c\in \mathcal{C}_n}$}:
\begin{equation}\label{eq:anch_sim}
S^{\mathsf{Anch}, (c)}_n = {\langle\mathcal{R}^{\mathsf{Gen}}_n , \mathcal{W}^{(c)}_n\rangle}\big/{\|\mathcal{R}^{\mathsf{Gen}}_n\|_{_2}.\|\mathcal{W}^{(c)}_n\|_{_2}}, \quad {c\in \mathcal{C}_n}.
\end{equation}
Then, we define the running anchor loss {\small$\mathcal{L}^{\mathsf{Anch}}_n(\bm{\mathcal{P}}^{\mathsf{Train}}_n)$} as the cross-entropy loss  with
the following conditional probabilities: {\small $\mathsf{Pr}(c | \mathbf{I} )= {\exp\big(S^{\mathsf{Anch},(c)}_n / \tau\big)} \big/{\sum_{c'\in \mathcal{C}_n} \exp\big(S^{\mathsf{Anch},(c')}_n / \tau\big)}.$}
Finally, we define the local training loss as a weighted combination of the above two losses:
\begin{equation}\label{eq:final_loss}
\hspace{-3mm}\mathcal{L}_n(\bm{\mathcal{P}}^{\mathsf{Train}}_n) {=} (1 - \lambda) \mathcal{L}^{\mathsf{Anch}}_n(\bm{\mathcal{P}}^{\mathsf{Train}}_n) {+} \lambda \mathcal{L}^{\mathsf{Fus}}_n(\bm{\mathcal{P}}^{\mathsf{Train}}_n),\hspace{-3mm}
\end{equation}
where {\small$\lambda {\in} [0,1]$} controls the influence of anchoring (see \textit{Ablation Study~8} in Sec.~\ref{sec:experiments} for more details).


\vspace{-2.5mm}
\subsubsection{Global Aggregation}\label{sec:GlobalAgg}
\vspace{-1mm}
In {BLEND}, parameter aggregation is applied selectively rather than to all trainable parameters. Specifically, while the global vision/text adapters are aggregated, the personalized projection adapter remains locally optimized at each client $n$. This design choice stems from the intuition that visual data across clients tends to exhibit significantly higher diversity than text data (which are typically label- or prompt-based), making localized adaptation of the vision pathway more effective for client-specific personalization. To formalize this distinction, the vector of trainable parameters, i.e., {\small$\bm{\mathcal{P}}^{\mathsf{Train}}_n$} in~\eqref{eq:1}, is divided into aggregatable {\small$\bm{\mathcal{P}}^{\mathsf{Agg}}_n$} and personalized/non-aggregatable components {\small$\bm{\mathcal{P}}^{\mathsf{Loc}}_n$} as:
\begin{equation}
    \bm{\mathcal{P}}^{\mathsf{Agg}}_n=\Big[\mathbf{A}_{n,\ell}^{\mathsf{Vis}}, \mathbf{B}_{n,\ell}^{\mathsf{Vis}}, \mathbf{A}_{n,\ell}^{\mathsf{Txt}}, \mathbf{B}_{n,\ell}^{\mathsf{Txt}} \Big]_{\ell^{\mathsf{S}} \leq \ell \leq \ell^{\mathsf{E}}} ~~~,
~~~~~~~
    \bm{\mathcal{P}}^{\mathsf{Loc}}_n=\Big[ \mathbf{A}^{\mathsf{Proj}}_n, \mathbf{B}^{\mathsf{Proj}}_n \Big].
\end{equation}
After every $J$ local SGD iterations, client $n$ obtains its latest model parameters:
\begin{equation}
    \bm{\mathcal{P}}^{\mathsf{Train},(k,J)}_n 
    = \big[\bm{\mathcal{P}}^{\mathsf{Agg},(k,J)}_n,~ \bm{\mathcal{P}}^{\mathsf{Loc},(k,J)}_n\big].
\end{equation}
Afterwards, the client transmits only its aggregatable parameters {\small$\bm{\mathcal{P}}^{\mathsf{Agg},(k,J)}_n$} to the central server, while retaining the localized parameters {\small$\bm{\mathcal{P}}^{\mathsf{Loc},(k,J)}_n$} for personalization. The server then performs a global aggregation step to update the shared/global parameters as follows:
\begin{equation}\label{eq:AGG}
    \bm{\mathcal{P}}^{\mathsf{Agg},(k+1)} 
    = \sum_{n\in \mathcal{N}} 
    d_n 
    \bm{\mathcal{P}}^{\mathsf{Agg},(k,J)}_n,
\end{equation}
where $d_n={|\mathcal{D}_n|}\big/{\sum_{n'\in \mathcal{N}} |\mathcal{D}_{n'}|}$.
The updated global parameters {\small$\bm{\mathcal{P}}^{\mathsf{Agg},(k+1)}$} are then redistributed to the clients in the subsequent communication round to synchronize their aggregatable parameters, while each client retains its personalized localized parameters {\small$\bm{\mathcal{P}}^{\mathsf{Loc},(k,J)}_n$} for subsequent refinement. In particular, each client $n$ instantiates its aggregatable and localized parameters as: 
\begin{equation}\label{eq:sync}
\hspace{-2mm}\resizebox{0.4\textwidth}{!}{$
  \hspace{-4mm}  \underbrace{\bm{\mathcal{P}}^{\mathsf{Agg},(k+1,0)}_n \gets \bm{\mathcal{P}}^{\mathsf{Agg},(k+1)}}_{\text{\scriptsize Synchronized by the Global Parameters}}, 
    ~~~~~~~~\hspace{-.8mm}
    \underbrace{\bm{\mathcal{P}}^{\mathsf{Loc},(k+1,0)}_n \gets \bm{\mathcal{P}}^{\mathsf{Loc},(k,J)}_n}_{\text{\scriptsize Carried Over from the Previous Round}},
    \hspace{-5.5mm}
    $}
\end{equation}
and then starts the next local model training round by following \eqref{eq:SGD_Upadte}, with $k$ replaced by $k+1$ and the initialization given by {\small$\bm{\mathcal{P}}^{\mathsf{Train},(k+1,0)}_n=\Big[ \bm{\mathcal{P}}^{\mathsf{Agg},(k+1,0)}_n,\bm{\mathcal{P}}^{\mathsf{Loc},(k+1,0)}_n \Big]$}.


\vspace{-5mm}
\section{Experiments}\label{sec:experiments}
\vspace{-2.2mm}
We next study the performance of {BLEND} (code: \href{https://github.com/payamsiabd/BLEND.git}{https://github.com/payamsiabd/BLEND.git}). Experiments were run on a server with $3\times$L40S GPUs, 1TB of RAM, and $2\times$Intel 8562Y CPUs.



\vspace{-1.5mm}
\noindent \textbf{System Setup:} Following~\cite{fang2025fedpha} and baselines listed below, we consider a system comprising $10$ clients (higher numbers of clients are studied in \textit{Ablation Study 2}). We inject the global adapters to the vision and text  transformer layers from $\ell^{\mathsf{S}}=1$ to $\ell^{\mathsf{E}}=12$ (see \textit{Ablation Study 6} for other injection-layer choices), consider the ranks $r_x = r_w = 16$ and $r_p = 256$, and set the balancing coefficients as $\gamma = 0.09$, $\lambda = 0.2$.
Each client performs one epoch of local training (with mini-batch size of $16$ and $\eta=0.0015$) 
before participating in global aggregation.

\vspace{-1.5mm}
\noindent \textbf{Baselines:} We compare {BLEND} with several baselines, including zero-shot CLIP~\cite{radford2021learning}, 
FedOTP~\cite{li2024global}, PromptFolio~\cite{pan2024federated}, FedPGP~\cite{cui2024harmonizing}, PromptFL~\cite{guo2023promptfl}, pFedMMA~\cite{ghiasvand2025pfedmma}, 
pFedMoAP~\cite{luo2025mixture}, FedPHA~\cite{fang2025fedpha}, and MMA + FedAvg~\cite{yang2024mma}.
Here, MMA + FedAvg is our federated extension of a state-of-the-art centralized CLIP fine-tuning, i.e., MMA~\cite{yang2024mma}, where clients perform local MMA-based fine-tuning and models are aggregated via averaging as in \eqref{eq:AGG}.  For all baselines, we use the optimal hyperparameters reported in their papers.

\vspace{-1.5mm}
\noindent \textbf{Datasets:} We consider $10$ datasets: ImageNet~\cite{deng2009imagenet}, Caltech~\cite{fei2004learning}, OxfordPets~\cite{parkhi2012cats}, StanfordCars~\cite{krause20133d}, Flowers~\cite{nilsback2008automated}, Food101~\cite{bossard2014food}, DTD~\cite{cimpoi2014describing}, EuroSAT~\cite{helber2019eurosat}, FGVC Aircraft~\cite{maji2013fine}, and UCF101~\cite{soomro2012ucf101}. 

\vspace{-1.5mm}
\noindent \textbf{Data Distribution and Evaluation Method:}
Each dataset is evenly divided into \textit{base} and \textit{novel} subsets, where only the base portion is used for model training. To simulate non-IID data across clients, we adopt the Dirichlet distribution $\text{Dir}(\alpha^{\mathsf{Dir}})$, where $\alpha^{\mathsf{Dir}}$ denotes the concentration parameter controlling the degree of data heterogeneity~\cite{fang2025fedpha}. Following~\cite{fang2025fedpha}, we sample 16 shots per class from the base subset for training across clients (the effect of varying this few-shot configuration is analyzed in \textit{Ablation Study 3}).
Following~\cite{yang2024mma,cui2024harmonizing}, we consider two performance metrics:
\begin{enumerate}[leftmargin=4mm,itemsep=-1.1mm, topsep=-1.5mm]
    \item \textbf{\textit{Personalization (\textit{Per.}) Accuracy:}} measures 
    the accuracy of the final model of each client on its local test data from base classes ($10\%$ of local dataset is used as test data), thereby capturing the model's \textit{personalization capability}.
    \item \textbf{\textit{Generalization (\textit{Gen.}) Accuracy:}} measures 
    the accuracy of the final model of each client tested on the novel classes, thereby capturing the \textit{generalization capability}. 
\end{enumerate}

We  report the \textbf{harmonic mean (H-Mean)} of the above metrics as a measure of the balance between personalization and generalization~\cite{yang2024mma,cui2024harmonizing}. Since the H-Mean emphasizes the smaller of the two values, it is sensitive to imbalances and reflects how well both metrics are simultaneously achieved.

\setlength{\tabcolsep}{2pt} 
\begin{table*}[t]
\centering

\caption{Performance results (accuracy, \%). The row $\Delta$ denotes the performance gap between {BLEND} and the best-performing baseline. Standard deviations across three random seeds are shown via sub-indices.  In the bottom rows, statistical significance is assessed using a two-sided paired $t$-test on seed-level H-Mean scores between BLEND and the strongest baseline, with improvements statistically significant at $p<0.05$.
}
\vspace{-2mm}
\label{tab:main_results}
\resizebox{\textwidth}{!}{
\begin{tabular}{l|ccc|ccc|ccc|ccc|ccc|ccc}
\toprule
& \multicolumn{3}{c|}{\cellcolor{gray!20}{UCF101}}
& \multicolumn{3}{c|}{\cellcolor{gray!20}{Oxford Pets}}
& \multicolumn{3}{c|}{\cellcolor{gray!20}{Stanford Cars}}
& \multicolumn{3}{c|}{\cellcolor{gray!20}{Flowers}}
& \multicolumn{3}{c|}{\cellcolor{gray!20}{Imagenet}}
& \multicolumn{3}{c}{\cellcolor{gray!20}{FGVC Aircraft}} \\
\cmidrule(lr){2-4}\cmidrule(lr){5-7}\cmidrule(lr){8-10}\cmidrule(lr){11-13}\cmidrule(lr){14-16}\cmidrule(lr){17-19}
& Per. & Gen. & \cellcolor{lightpink}{H-Mean} & Per. & Gen. & \cellcolor{lightpink}{H-Mean} & Per. & Gen. & \cellcolor{lightpink}{H-Mean} &Per. & Gen. & \cellcolor{lightpink}{H-Mean} & Per. & Gen. & \cellcolor{lightpink}{H-Mean} & Per. & Gen. & \cellcolor{lightpink}{H-Mean} \\
\midrule

Zero-shot CLIP
& 68.10{\scriptsize$\pm$0.00} & 75.18{\scriptsize$\pm$0.00} & 71.46{\scriptsize$\pm$\cellcolor{lightpink}{0.00}}
& 89.47{\scriptsize$\pm$0.00} & 96.81{\scriptsize$\pm$0.00} & 93.00{\scriptsize$\pm$\cellcolor{lightpink}{0.00}}
& 63.77{\scriptsize$\pm$0.00} & \textbf{74.94{\scriptsize$\pm$0.00}} & 68.91{\scriptsize$\pm$\cellcolor{lightpink}{0.00}}
& 69.42{\scriptsize$\pm$0.00} & \textbf{76.74{\scriptsize$\pm$0.00}} & 72.90{\scriptsize$\pm$\cellcolor{lightpink}{0.00}}
& 72.40{\scriptsize$\pm$0.00} & 68.08{\scriptsize$\pm$0.00} & 70.17{\scriptsize$\pm$\cellcolor{lightpink}{0.00}}
& 27.49{\scriptsize$\pm$0.00} & 33.05{\scriptsize$\pm$0.00} & 30.02{\scriptsize$\pm$\cellcolor{lightpink}{0.00}}\\

MMA+FedAvg
& 85.40{\scriptsize$\pm$0.93} & 75.95{\scriptsize$\pm$1.20} & 80.40{\scriptsize$\pm$\cellcolor{lightpink}{1.08}}
& 95.12{\scriptsize$\pm$0.07} & \textbf{97.65{\scriptsize$\pm$0.08}} & 96.37{\scriptsize$\pm$\cellcolor{lightpink}{0.05}}
& 76.23{\scriptsize$\pm$1.63} & 69.60{\scriptsize$\pm$0.40} & 72.75{\scriptsize$\pm$\cellcolor{lightpink}{0.82}}
& 96.93{\scriptsize$\pm$0.22} & 70.87{\scriptsize$\pm$1.19} & 81.87{\scriptsize$\pm$\cellcolor{lightpink}{0.87}}
& 75.52{\scriptsize$\pm$2.03} & \textbf{69.18{\scriptsize$\pm$0.98}} & 72.21{\scriptsize$\pm$\cellcolor{lightpink}{0.32}}
& 38.92{\scriptsize$\pm$2.03} & \textbf{33.77{\scriptsize$\pm$0.98}} & 36.10{\scriptsize$\pm$\cellcolor{lightpink}{0.32}} \\

pFedMMA
& 89.95{\scriptsize$\pm$1.16} & 70.51{\scriptsize$\pm$0.66} & 79.04{\scriptsize$\pm$\cellcolor{lightpink}{0.22}}
&  96.81{\scriptsize$\pm$0.81} & 95.08{\scriptsize$\pm$0.25} & 95.93{\scriptsize$\pm$\cellcolor{lightpink}{0.33}}
& 67.45{\scriptsize$\pm$2.01} & 64.52{\scriptsize$\pm$0.72} & 65.95{\scriptsize$\pm$\cellcolor{lightpink}{1.33}}
& 84.61{\scriptsize$\pm$2.42} & 69.89{\scriptsize$\pm$0.73} & 76.52{\scriptsize$\pm$\cellcolor{lightpink}{0.62}}
& 49.15{\scriptsize$\pm$1.42} & 50.93{\scriptsize$\pm$1.02} & 50.03{\scriptsize$\pm$\cellcolor{lightpink}{1.22}}
& 40.37{\scriptsize$\pm$5.66} & 29.14{\scriptsize$\pm$1.69} & 33.78{\scriptsize$\pm$\cellcolor{lightpink}{3.04}} \\

pFedMoAP
& 93.36{\scriptsize$\pm$0.12} & 67.27{\scriptsize$\pm$0.02} & 78.19{\scriptsize$\pm$\cellcolor{lightpink}{0.04}}
& 99.18{\scriptsize$\pm$0.02} & 84.84{\scriptsize$\pm$0.10} & 91.45{\scriptsize$\pm$\cellcolor{lightpink}{0.01}}
& 92.00{\scriptsize$\pm$0.20} & 68.50{\scriptsize$\pm$0.05} & 78.53{\scriptsize$\pm$\cellcolor{lightpink}{0.07}}
& 98.83{\scriptsize$\pm$0.15} & 71.49{\scriptsize$\pm$0.00} & 82.96{\scriptsize$\pm$\cellcolor{lightpink}{0.05}}
& 86.65{\scriptsize$\pm$0.00} & 62.56{\scriptsize$\pm$0.03} & 72.66{\scriptsize$\pm$\cellcolor{lightpink}{0.02}}
& 65.51{\scriptsize$\pm$0.08} & 18.58{\scriptsize$\pm$0.11} & 28.95{\scriptsize$\pm$\cellcolor{lightpink}{0.01}} \\

FedPHA
& 92.61{\scriptsize$\pm$0.70} & 50.29{\scriptsize$\pm$2.31} & 65.14{\scriptsize$\pm$\cellcolor{lightpink}{1.82}}
& 98.81{\scriptsize$\pm$0.63} & 70.06{\scriptsize$\pm$2.75} & 81.95{\scriptsize$\pm$\cellcolor{lightpink}{1.89}}
& 91.90{\scriptsize$\pm$0.55} & 44.52{\scriptsize$\pm$1.31} & 59.96{\scriptsize$\pm$\cellcolor{lightpink}{1.07}}
& 98.73{\scriptsize$\pm$0.21} & 49.77{\scriptsize$\pm$2.86} & 66.12{\scriptsize$\pm$\cellcolor{lightpink}{2.53}}
& \textbf{90.49{\scriptsize$\pm$0.14}} & 39.71{\scriptsize$\pm$2.26} &55.20{\scriptsize$\pm$\cellcolor{lightpink}{2.33}}
& 66.62{\scriptsize$\pm$0.44} & 13.00{\scriptsize$\pm$2.97} & 21.59{\scriptsize$\pm$\cellcolor{lightpink}{4.12}} \\

FedOTP
& 82.79{\scriptsize$\pm$1.05} & 27.04{\scriptsize$\pm$2.30} & 40.70{\scriptsize$\pm$\cellcolor{lightpink}{2.58}} 
& 95.20{\scriptsize$\pm$1.10} & 51.47{\scriptsize$\pm$4.08} & 66.75{\scriptsize$\pm$\cellcolor{lightpink}{3.74}}
& 78.52{\scriptsize$\pm$1.64} & 14.67{\scriptsize$\pm$1.36} & 24.70{\scriptsize$\pm$\cellcolor{lightpink}{1.95}}
& 95.84{\scriptsize$\pm$0.39} & 35.53{\scriptsize$\pm$4.86} & 51.64{\scriptsize$\pm$\cellcolor{lightpink}{5.00}}
& 81.23{\scriptsize$\pm$0.91} & 14.23{\scriptsize$\pm$1.50} & 24.22{\scriptsize$\pm$\cellcolor{lightpink}{1.62}}
& 51.95{\scriptsize$\pm$1.23} & 5.98{\scriptsize$\pm$1.11} & 10.68{\scriptsize$\pm$\cellcolor{lightpink}{1.77}} \\

PromptFL
& 80.60{\scriptsize$\pm$1.26} & 70.83{\scriptsize$\pm$2.15} & 75.36{\scriptsize$\pm$\cellcolor{lightpink}{0.75}}
& 95.37{\scriptsize$\pm$0.61} & 96.33{\scriptsize$\pm$0.89} & 95.84{\scriptsize$\pm$\cellcolor{lightpink}{0.71}}
& 67.89{\scriptsize$\pm$2.24} & 70.14{\scriptsize$\pm$0.13} & 69.94{\scriptsize$\pm$\cellcolor{lightpink}{1.24}}
& 96.24{\scriptsize$\pm$0.32} & 71.23{\scriptsize$\pm$2.91} & 81.83{\scriptsize$\pm$\cellcolor{lightpink}{1.82}}
& 73.38{\scriptsize$\pm$0.40} & 67.75{\scriptsize$\pm$2.00} & 70.45{\scriptsize$\pm$\cellcolor{lightpink}{1.30}}
& 27.56{\scriptsize$\pm$7.64} & 20.44{\scriptsize$\pm$10.99} & 22.79{\scriptsize$\pm$\cellcolor{lightpink}{10.69}} \\

FedPGP
& 90.88{\scriptsize$\pm$1.02} & 55.51{\scriptsize$\pm$0.15} & 68.92{\scriptsize$\pm$\cellcolor{lightpink}{0.20}}
& 95.59{\scriptsize$\pm$0.35} & 97.29{\scriptsize$\pm$0.35} & 96.43{\scriptsize$\pm$\cellcolor{lightpink}{0.21}}
& 87.82{\scriptsize$\pm$0.73} & 57.87{\scriptsize$\pm$3.16} & 69.71{\scriptsize$\pm$\cellcolor{lightpink}{2.23}}
& 98.40{\scriptsize$\pm$0.78} & 65.43{\scriptsize$\pm$2.11} & 78.58{\scriptsize$\pm$\cellcolor{lightpink}{1.71}}
& 89.21{\scriptsize$\pm$0.89} & 45.93{\scriptsize$\pm$1.01} & 60.64{\scriptsize$\pm$\cellcolor{lightpink}{1.00}}
& 61.37{\scriptsize$\pm$1.33} & 21.03{\scriptsize$\pm$0.62} & 31.31{\scriptsize$\pm$\cellcolor{lightpink}{0.55}} \\

PromptFolio
& 90.19{\scriptsize$\pm$0.59} & 52.97{\scriptsize$\pm$3.98} & 66.64{\scriptsize$\pm$\cellcolor{lightpink}{3.00}}
& 97.94{\scriptsize$\pm$0.13} & 71.87{\scriptsize$\pm$4.42} & 82.83{\scriptsize$\pm$\cellcolor{lightpink}{2.99}}
& 73.42{\scriptsize$\pm$3.79} & 54.88{\scriptsize$\pm$5.98} & 62.76{\scriptsize$\pm$\cellcolor{lightpink}{5.31}}
& 97.94{\scriptsize$\pm$0.22} & 27.59{\scriptsize$\pm$6.74} & 42.40{\scriptsize$\pm$\cellcolor{lightpink}{8.25}}
& 85.74{\scriptsize$\pm$1.28} & 50.81{\scriptsize$\pm$4.93} & 62.30{\scriptsize$\pm$\cellcolor{lightpink}{4.15}}
& 45.12{\scriptsize$\pm$2.50} & 7.74{\scriptsize$\pm$0.48} & 13.21{\scriptsize$\pm$\cellcolor{lightpink}{0.81}}\\
\midrule

{BLEND}
& \textbf{95.03{\scriptsize$\pm$0.16}} & \textbf{78.20{\scriptsize$\pm$0.21}} & \textbf{85.80{\scriptsize$\pm$\cellcolor{lightpink}{0.19}}}
& \textbf{99.21{\scriptsize$\pm$0.24}} & 96.99{\scriptsize$\pm$0.03} & \textbf{98.09{\scriptsize$\pm$\cellcolor{lightpink}{0.11}}}
& \textbf{93.72{\scriptsize$\pm$0.20}} & 70.20{\scriptsize$\pm$0.26} & \textbf{80.27{\scriptsize$\pm$\cellcolor{lightpink}{0.23}}}
& \textbf{98.94{\scriptsize$\pm$0.06}} & 75.21{\scriptsize$\pm$0.36} & \textbf{85.45{\scriptsize$\pm$\cellcolor{lightpink}{0.21}}}
&  87.83{\scriptsize$\pm$0.13} & 66.31{\scriptsize$\pm$0.22} & \textbf{75.56{\scriptsize$\pm$\cellcolor{lightpink}{0.19}} }
& \textbf{70.05{\scriptsize$\pm$0.61}} & 33.24{\scriptsize$\pm$0.47} & \textbf{45.09{\scriptsize$\pm$\cellcolor{lightpink}{0.55}}} \\
\midrule
$\Delta$ ({$p-$value})
&  &  & \cellcolor{lightpink}{\textcolor{darkgreen}{\textbf{+5.40}}} ({0.025})
&  &  & \cellcolor{lightpink}{\textcolor{darkgreen}{\textbf{+1.66}}} ({0.009})
&  &  & \cellcolor{lightpink}{\textcolor{darkgreen}{\textbf{+1.74}}} ({0.014})
&  &  & \cellcolor{lightpink}{\textcolor{darkgreen}{\textbf{+2.49}}} ({0.003})
&  &  & \cellcolor{lightpink}{\textcolor{darkgreen}{\textbf{+2.90}}} ({0.001})
&  &  & \cellcolor{lightpink}{\textcolor{darkgreen}{\textbf{+8.99}}} ({0.003})\\
\hline
\hline

& \multicolumn{3}{c|}{\cellcolor{gray!20}{Food101}}
& \multicolumn{3}{c|}{\cellcolor{gray!20}{DTD}}
& \multicolumn{3}{c|}{\cellcolor{gray!20}{EuroSAT}}
& \multicolumn{3}{c|}{\cellcolor{gray!20}{Caltech}}
& \multicolumn{3}{c|}{\textbf{\cellcolor{gray!20}{Cross-Dataset Avg. ($\uparrow$)}}}
& \multicolumn{3}{c}{\textbf{\cellcolor{gray!20}{Cross-Dataset  Std. ($\downarrow$)}}} \\
\cmidrule(lr){2-4}\cmidrule(lr){5-7}\cmidrule(lr){8-10}\cmidrule(lr){11-13}\cmidrule(lr){14-16}\cmidrule(lr){17-19}
& Per.& Gen. & \cellcolor{lightpink}{H-Mean} & Per.& Gen. & \cellcolor{lightpink}{H-Mean} & Per.& Gen. & \cellcolor{lightpink}{H-Mean} &Per.& Gen. & \cellcolor{lightpink}{H-Mean} & Per.& Gen. & \cellcolor{lightpink}{H-Mean} & Per.& Gen. & \cellcolor{lightpink}{H-Mean} \\
\midrule

Zero-shot CLIP
& 89.44{\scriptsize$\pm$0.00} & 90.67{\scriptsize$\pm$0.00} & 90.05{\scriptsize$\pm$\cellcolor{lightpink}{0.00}}
& 54.75{\scriptsize$\pm$0.00} & 58.94{\scriptsize$\pm$0.00} & 56.76{\scriptsize$\pm$\cellcolor{lightpink}{0.00}}
& 50.21{\scriptsize$\pm$0.00} & 69.87{\scriptsize$\pm$0.00} & 58.43{\scriptsize$\pm$\cellcolor{lightpink}{0.00}}
& 97.16{\scriptsize$\pm$0.00} & \textbf{94.21{\scriptsize$\pm$0.00}} & 95.66{\scriptsize$\pm$\cellcolor{lightpink}{0.00}}
& 68.22{\scriptsize$\pm$0.00} & 73.85{\scriptsize$\pm$0.00} & 70.74{\scriptsize$\pm$\cellcolor{lightpink}{0.00}}
& 19.86{\scriptsize$\pm$0.00} & 17.81{\scriptsize$\pm$0.00} & 18.72{\scriptsize$\pm$\cellcolor{lightpink}{0.00}}\\

MMA+FedAvg
& 89.85{\scriptsize$\pm$0.33} & 90.21{\scriptsize$\pm$0.67} & 90.03{\scriptsize$\pm$\cellcolor{lightpink}{0.50}}
& 79.30{\scriptsize$\pm$1.11} & 60.39{\scriptsize$\pm$1.80} & 68.56{\scriptsize$\pm$\cellcolor{lightpink}{1.50}}
& 92.05{\scriptsize$\pm$1.96} & 70.49{\scriptsize$\pm$5.21} & 79.67{\scriptsize$\pm$\cellcolor{lightpink}{2.61}}
& 97.60{\scriptsize$\pm$0.30} & 93.30{\scriptsize$\pm$0.52} & 95.40{\scriptsize$\pm$\cellcolor{lightpink}{0.17}}
& 82.69{\scriptsize$\pm$0.86} & 73.14{\scriptsize$\pm$1.21} & 77.34{\scriptsize$\pm$\cellcolor{lightpink}{0.79}} 
& 16.57{\scriptsize$\pm$0.75} & 17.49{\scriptsize$\pm$1.44} & 16.46{\scriptsize$\pm$\cellcolor{lightpink}{0.76}}\\

pFedMMA
& 95.80{\scriptsize$\pm$0.19} & 86.83{\scriptsize$\pm$0.40} & 91.10{\scriptsize$\pm$\cellcolor{lightpink}{0.21}}
& 77.87{\scriptsize$\pm$2.04} & 57.81{\scriptsize$\pm$1.34} & 66.34{\scriptsize$\pm$\cellcolor{lightpink}{1.13}}
& 80.06{\scriptsize$\pm$15.74} & 49.22{\scriptsize$\pm$5.28} & 60.81{\scriptsize$\pm$\cellcolor{lightpink}{8.69}}
& 97.79{\scriptsize$\pm$0.31} & 92.95{\scriptsize$\pm$0.11} & 95.31{\scriptsize$\pm$\cellcolor{lightpink}{0.20}}
& 77.99{\scriptsize$\pm$3.03} & 66.69{\scriptsize$\pm$1.12} & 71.48{\scriptsize$\pm$\cellcolor{lightpink}{1.58}} 
& 19.01{\scriptsize$\pm$4.52} & 19.93{\scriptsize$\pm$1.48} & 19.20{\scriptsize$\pm$\cellcolor{lightpink}{2.52}} \\

pFedMoAP
& \textbf{96.28{\scriptsize$\pm$0.08}} & 86.68{\scriptsize$\pm$0.00} & 91.23{\scriptsize$\pm$\cellcolor{lightpink}{0.04}}
& \textbf{92.43{\scriptsize$\pm$0.42}} & 47.72{\scriptsize$\pm$0.00} & 62.94{\scriptsize$\pm$\cellcolor{lightpink}{0.10}}
& 98.05{\scriptsize$\pm$1.25} & 63.47{\scriptsize$\pm$0.00} & 77.05{\scriptsize$\pm$\cellcolor{lightpink}{0.39}}
& 97.98{\scriptsize$\pm$0.18} & 93.69{\scriptsize$\pm$0.00} & 95.79{\scriptsize$\pm$\cellcolor{lightpink}{0.08}}
& 92.03{\scriptsize$\pm$0.25} & 66.48{\scriptsize$\pm$0.00} & 75.98{\scriptsize$\pm$\cellcolor{lightpink}{0.08}} 
& 9.60{\scriptsize$\pm$0.35} & 20.46{\scriptsize$\pm$0.00} & 18.22{\scriptsize$\pm$\cellcolor{lightpink}{0.11}} \\

FedPHA
& 95.87{\scriptsize$\pm$0.40} & 67.45{\scriptsize$\pm$4.86} & 79.08{\scriptsize$\pm$\cellcolor{lightpink}{3.33}}
& 90.41{\scriptsize$\pm$0.44} & 34.01{\scriptsize$\pm$2.07} & 49.39{\scriptsize$\pm$\cellcolor{lightpink}{2.17}}
& \textbf{100.00{\scriptsize$\pm$0.00}} & 28.33{\scriptsize$\pm$5.81} & 43.84{\scriptsize$\pm$\cellcolor{lightpink}{6.88}}
& \textbf{98.09{\scriptsize$\pm$0.35}} & 81.83{\scriptsize$\pm$1.89} & 89.21{\scriptsize$\pm$\cellcolor{lightpink}{0.98}}
& 92.35{\scriptsize$\pm$0.37} & 47.90{\scriptsize$\pm$2.68} & 61.15{\scriptsize$\pm$\cellcolor{lightpink}{2.48}} 
& 9.25{\scriptsize$\pm$0.23} & 19.76{\scriptsize$\pm$1.58} & 19.03{\scriptsize$\pm$\cellcolor{lightpink}{1.85}} \\

FedOTP
& 90.33{\scriptsize$\pm$0.14} & 44.62{\scriptsize$\pm$1.31} & 59.73{\scriptsize$\pm$\cellcolor{lightpink}{1.21}}
& 88.52{\scriptsize$\pm$0.64} & 34.69{\scriptsize$\pm$4.02} & 49.72{\scriptsize$\pm$\cellcolor{lightpink}{4.30}}
& \textbf{100.00{\scriptsize$\pm$0.00}} & 31.75{\scriptsize$\pm$2.04} & 48.17{\scriptsize$\pm$\cellcolor{lightpink}{2.37}}
& 94.79{\scriptsize$\pm$0.54} & 65.70{\scriptsize$\pm$2.78} & 77.57{\scriptsize$\pm$\cellcolor{lightpink}{1.91}}
& 85.92{\scriptsize$\pm$0.67} & 32.57{\scriptsize$\pm$2.39} & 45.39{\scriptsize$\pm$\cellcolor{lightpink}{2.48}} 
& 13.15{\scriptsize$\pm$0.54} & \textbf{17.35{\scriptsize$\pm$1.46}} & 19.66{\scriptsize$\pm$\cellcolor{lightpink}{1.42}} \\

PromptFL
& 90.54{\scriptsize$\pm$0.22} & 90.52{\scriptsize$\pm$0.54} & 90.53{\scriptsize$\pm$\cellcolor{lightpink}{0.31}}
& 77.01{\scriptsize$\pm$0.25} & 51.17{\scriptsize$\pm$2.55} & 61.44{\scriptsize$\pm$\cellcolor{lightpink}{1.75}}
& 81.72{\scriptsize$\pm$4.28} & 65.91{\scriptsize$\pm$4.69} & 72.76{\scriptsize$\pm$\cellcolor{lightpink}{2.62}}
& 97.17{\scriptsize$\pm$0.38} & 93.23{\scriptsize$\pm$0.53} & 95.16{\scriptsize$\pm$\cellcolor{lightpink}{0.18}}
& 78.75{\scriptsize$\pm$1.72} & 69.95{\scriptsize$\pm$2.54} & 73.61{\scriptsize$\pm$\cellcolor{lightpink}{2.01}} 
& 19.60{\scriptsize$\pm$2.33} & 21.20{\scriptsize$\pm$3.15} & 20.18{\scriptsize$\pm$\cellcolor{lightpink}{3.00}} \\

FedPGP
& 91.58{\scriptsize$\pm$0.46} & 89.81{\scriptsize$\pm$0.22} & 90.69{\scriptsize$\pm$\cellcolor{lightpink}{0.34}}
& 81.99{\scriptsize$\pm$2.52} & 50.76{\scriptsize$\pm$3.56} & 62.62{\scriptsize$\pm$\cellcolor{lightpink}{2.78}}
& 84.73{\scriptsize$\pm$3.08} & 67.67{\scriptsize$\pm$8.88} & 74.90{\scriptsize$\pm$\cellcolor{lightpink}{5.96}}
&  97.21{\scriptsize$\pm$0.38} & 92.55{\scriptsize$\pm$0.34} & 94.83{\scriptsize$\pm$\cellcolor{lightpink}{0.26}}
& 87.88{\scriptsize$\pm$1.06} & 64.38{\scriptsize$\pm$1.94} & 72.86{\scriptsize$\pm$\cellcolor{lightpink}{1.42}}
& 10.14{\scriptsize$\pm$0.95} & 22.50{\scriptsize$\pm$2.63} & 18.44{\scriptsize$\pm$\cellcolor{lightpink}{1.77}} \\

PromptFolio
& 93.05{\scriptsize$\pm$0.38} & 53.00{\scriptsize$\pm$6.18} & 67.33{\scriptsize$\pm$\cellcolor{lightpink}{5.12}}
& 84.86{\scriptsize$\pm$0.36} & 42.30{\scriptsize$\pm$4.44} & 56.33{\scriptsize$\pm$\cellcolor{lightpink}{4.04}}
& 92.24{\scriptsize$\pm$1.83} & 65.03{\scriptsize$\pm$6.62} & 75.79{\scriptsize$\pm$\cellcolor{lightpink}{3.94}}
& 97.00{\scriptsize$\pm$0.12} & 82.21{\scriptsize$\pm$5.02} & 88.91{\scriptsize$\pm$\cellcolor{lightpink}{2.89}}
& 85.75{\scriptsize$\pm$1.10} & 50.84{\scriptsize$\pm$4.87} & 61.85{\scriptsize$\pm$\cellcolor{lightpink}{4.04}}
& 16.13{\scriptsize$\pm$1.23} & 21.41{\scriptsize$\pm$1.82} & 21.58{\scriptsize$\pm$\cellcolor{lightpink}{1.96}}\\
\midrule
{BLEND}
& 95.24{\scriptsize$\pm$0.16} & \textbf{91.25{\scriptsize$\pm$0.11}} & \textbf{93.20{\scriptsize$\pm$\cellcolor{lightpink}{0.04}}}
& 91.64{\scriptsize$\pm$1.07} & \textbf{64.32{\scriptsize$\pm$0.70}} & \textbf{75.59{\scriptsize$\pm$\cellcolor{lightpink}{0.84}}}
& 99.36{\scriptsize$\pm$0.28} & \textbf{76.50{\scriptsize$\pm$0.90}} & \textbf{86.44{\scriptsize$\pm$\cellcolor{lightpink}{0.65}}}
& \textbf{98.09{\scriptsize$\pm$0.09}} & 94.01{\scriptsize$\pm$0.24} & \textbf{96.01{\scriptsize$\pm$\cellcolor{lightpink}{0.09}}}
& \textbf{92.91{\scriptsize$\pm$0.30}} & \textbf{74.62{\scriptsize$\pm$0.35}} & \textbf{82.15{\scriptsize$\pm$\cellcolor{lightpink}{0.31}}} 
& \textbf{8.40{\scriptsize$\pm$0.30}} & 17.54{\scriptsize$\pm$0.25} & \textbf{14.40{\scriptsize$\pm$\cellcolor{lightpink}{0.26}}} \\
\midrule 
$\Delta$ ({$p-$value})
&  &  & \cellcolor{lightpink}{\textcolor{darkgreen}{\textbf{+1.97}}} ({0.001})
&  &  & \cellcolor{lightpink}{\textcolor{darkgreen}{\textbf{+7.03}}} ({0.005})
&  &  & \cellcolor{lightpink}{\textcolor{darkgreen}{\textbf{+6.77}}} ({0.033})
&  &  & \cellcolor{lightpink}{\textcolor{darkgreen}{\textbf{+0.22}}} ({0.011})
&  &  & \cellcolor{lightpink}{\textcolor{darkgreen}{\textbf{+4.81}}} 
&  &  & \cellcolor{lightpink}{\textcolor{darkgreen}{\textbf{-2.06}}} \\
\bottomrule
\end{tabular}
}
\vspace{-7mm}
\end{table*}


\vspace{-1.5mm}
\noindent \underline{\textbf{Main Results:}} In our main experiments, we set $\alpha^{\mathsf{Dir}}=0.01$, reflecting highly non-IID data across clients (other choices of $\alpha^{\mathsf{Dir}}$ are studied in \textit{Ablation Study 1}).  Table \ref{tab:main_results} compares {BLEND} with the baselines after $50$ global rounds (convergence across all methods is reached). 
Across all datasets, {BLEND} consistently achieves the best H-Mean (i.e., the best balance of personalization and generalization), with a notable average improvement of $+4.81$, while also exhibiting the lowest performance variation across the datasets (i.e., higher consistency), as indicated by its smallest standard deviation (Std.) of accuracy across the datasets, which is $2.06\%$ lower than the best baseline.  Further, Appendix~\ref{ap:overhead} reports the \textbf{communication/computation overhead} of {BLEND} compared to the baselines, showing that {BLEND} achieves higher H-Mean performance with comparable/lower resource overhead.


\vspace{-1.5mm}
We summarize BLEND's performance variation under increasing task difficulty in Appendix~\ref{app:performance_variation}. The results show larger improvements over the best baseline as data heterogeneity, the number of clients, and local data scarcity increase. Below, we present a set of ablation studies, where all results are averaged over the $10$ datasets. Detailed ablation studies for each dataset are in the Appendix sections.  

\vspace{-1.5mm}
\noindent \textbf{\textit{\underline{(Ablation Study 1)}} Data heterogeneity variations:}
In addition to the {\small$\alpha^{\mathsf{Dir}} = 0.01$} used above, we further evaluate the impact of data heterogeneity by varying {\small$\alpha^{\mathsf{Dir}} \in \{0.1, 10\}$} in Appendix~\ref{ap:data_hete}.  Further, Appendix~\ref{ap:real_large} evaluates {BLEND} on iNaturalist~\cite{hsu2020federated}, a distributed dataset with \textit{real-world data distributions} exhibiting extreme non-IID characteristics. Across all settings, {BLEND} maintains superior H-Mean performance, showing robustness to diverse data distributions.

\vspace{-1.75mm}
\noindent \textbf{\textit{\underline{(Ablation Study 2)}} Varying the number of clients:}
In addition to the {\small $N=10$} setting used above, we examine {BLEND} under different system sizes {\small$N {\in} \{20, 30\}$} in Appendix~\ref{ap:num_clients}. Also, Appendix~\ref{ap:real_large} evaluates {BLEND} in real-world large-scale settings with {\small$N{\in}\{100, 200\}$}. The results in Appendix~\ref{ap:num_clients} and Appendix~\ref{ap:real_large} show that {BLEND} outperforms the baselines across various system scales, demonstrating its scalability.

\vspace{-1.75mm}
\noindent \textbf{\textit{\underline{(Ablation Study 3)}} Different few-shot settings:}
In Appendix~\ref{ap:few_shots}, we evaluate the few-shot learning capability of {BLEND} under varying numbers of shots per class, showing that {BLEND} consistently outperforms baselines across different data scarcity configurations. 
\newcommand{\Rstretch}{\rule{0pt}{2.412ex}}

\setlength{\tabcolsep}{1.6pt}

 \begin{wraptable}{r}{0.25\linewidth}
\vspace{-22pt}
\centering
\scriptsize
\caption{Effect of the rank $r {=} r_x {=} r_w$ on performance.}
\label{tbl:global_adapter}
\vspace{-3pt}
\begin{tabular}{c|ccc}
\hline
\rowcolor{gray!20}
Rank ($r$) & Per. & Gen. & H-Mean \\
\hline
No adapter & 89.50 & 72.31 & \cellcolor{lightpink}{79.95} \\
4   & 92.25 & 74.69 & \cellcolor{lightpink}{82.05} \\
8   & 92.45 & \textbf{74.70} & \cellcolor{lightpink}{82.10} \\
16  & 92.91 & 74.62 & \cellcolor{lightpink}{\textbf{82.15}} \\
32  & \textbf{92.93} & 74.39 & \cellcolor{lightpink}{81.97} \\
64  & 92.91 & 74.09 & \cellcolor{lightpink}{81.74} \\
128 & 92.80 & 73.70 & \cellcolor{lightpink}{81.43} \\
256 & 92.68 & 73.47 & \cellcolor{lightpink}{81.20} \\
Full-rank & 92.48 & 73.63 & \cellcolor{lightpink}{81.26} \\
\hline
\end{tabular}

\vspace{-10pt}
\end{wraptable}

\vspace{-1.75mm}
\noindent \textbf{\textit{\underline{(Ablation Study 4)}} Different bottleneck sizes for global adapters:}
Table~\ref{tbl:global_adapter} analyzes the impact of the global adapter bottleneck rank $r {=} r_x {=} r_w$ on average performance across datasets (Appendix~\ref{ap:bottle_global} contains the per-dataset results). 
As shown, incorporating global adapters consistently outperforms the `No adapter' setting and exhibits robustness across different adapter sizes. Moreover, the H-Mean increases with $r$ up to $16$, showing that strong performance can be achieved even under low parameter-update budgets in the federated learning process.

\vspace{-1.75mm}
\noindent \textbf{\textit{\underline{(Ablation Study 5)}} Different bottleneck sizes for the personalized projection adapter:}
Table~\ref{tbl:personalized_adapter} analyzes the impact of the personalized adapter projection rank $r_p$ on average performance across datasets (Appendix~\ref{ap:bottle_per} contains the per-dataset results). As $r_p$ increases, the personalization accuracy  improves (i.e., higher-capacity personalized adapters can capture client-specific characteristics), while the generalization gradually degrades. Overall, the H-Mean improves progressively and peaks at $r_p=256$, reflecting the best personalization–generalization trade-off. 


\begin{wraptable}{r}{0.24\linewidth}
\vspace{-13pt}
\centering
\scriptsize
\caption{Effect of the rank $r_p$ on performance.}
\label{tbl:personalized_adapter}
\vspace{-4pt}
\begin{tabular}{c|ccc}
\hline
\rowcolor{gray!20}
Rank ($r_p$) & Per. & Gen. & H-Mean \\
\hline
4   & 89.99 & \textbf{75.00} & \cellcolor{lightpink}{81.44} \\
8   & 90.45 & 74.83 & \cellcolor{lightpink}{81.73} \\
16  & 91.00 & 74.86 & \cellcolor{lightpink}{81.73} \\
32  & 91.59 & 74.91 & \cellcolor{lightpink}{81.96} \\
64  & 92.12 & 74.64 & \cellcolor{lightpink}{81.92} \\
128 & 92.57 & 74.71 & \cellcolor{lightpink}{82.12} \\
256 & 92.91 & 74.62 & \cellcolor{lightpink}{\textbf{82.15}} \\
Full-rank & \textbf{92.95} & 74.46 & \cellcolor{lightpink}{82.08} \\
\hline
\end{tabular}

\vspace{-10pt}
\end{wraptable}
\vspace{-1.5mm}
\noindent \textbf{\textit{\underline{(Ablation Study 6)}} Adapter injection at different layers:}
In the left plot of Fig.~\ref{fig:gamma_lambda}, we evaluate the effect of injecting the global adapters at different transformer layers on average performance across datasets (Appendix~\ref{ap:adapter_injection} contains the per-dataset results). As shown, the placement of adapters across transformer layers has a notable impact on performance. As expected, adding adapters to all layers ($1-12$, i.e., $\ell^{\mathsf{S}}=1$ and $\ell^{\mathsf{E}}=12$) yields the highest accuracy; however, the H-Mean achieved by the $5-12$ configuration is very close to that of $1-12$ setting, indicating that adapter injection in all layers is not strictly necessary to obtain good performance. 

\vspace{-1.5mm}
\noindent \textbf{\textit{\underline{(Ablation Study 7)}} Effect of $\gamma$ on performance:}
In the middle plot of Fig.~\ref{fig:gamma_lambda}, we study the effect of varying $\gamma$ in~\eqref{eq:fusion_rep} on the average performance across datasets (per-dataset results and individual personalization/generalization accuracies are provided in Appendix~\ref{ap:gamma}). We see that 
 $\gamma = 0.09$ yields the highest H-Mean, while larger values lead to a worse personalization–generalization trade-off.

\begin{figure}[t]
    \centering
    \begin{subfigure}[t]{0.27\linewidth}
        \centering
        \includegraphics[width=\linewidth, trim = 0 10 0 0, clip]{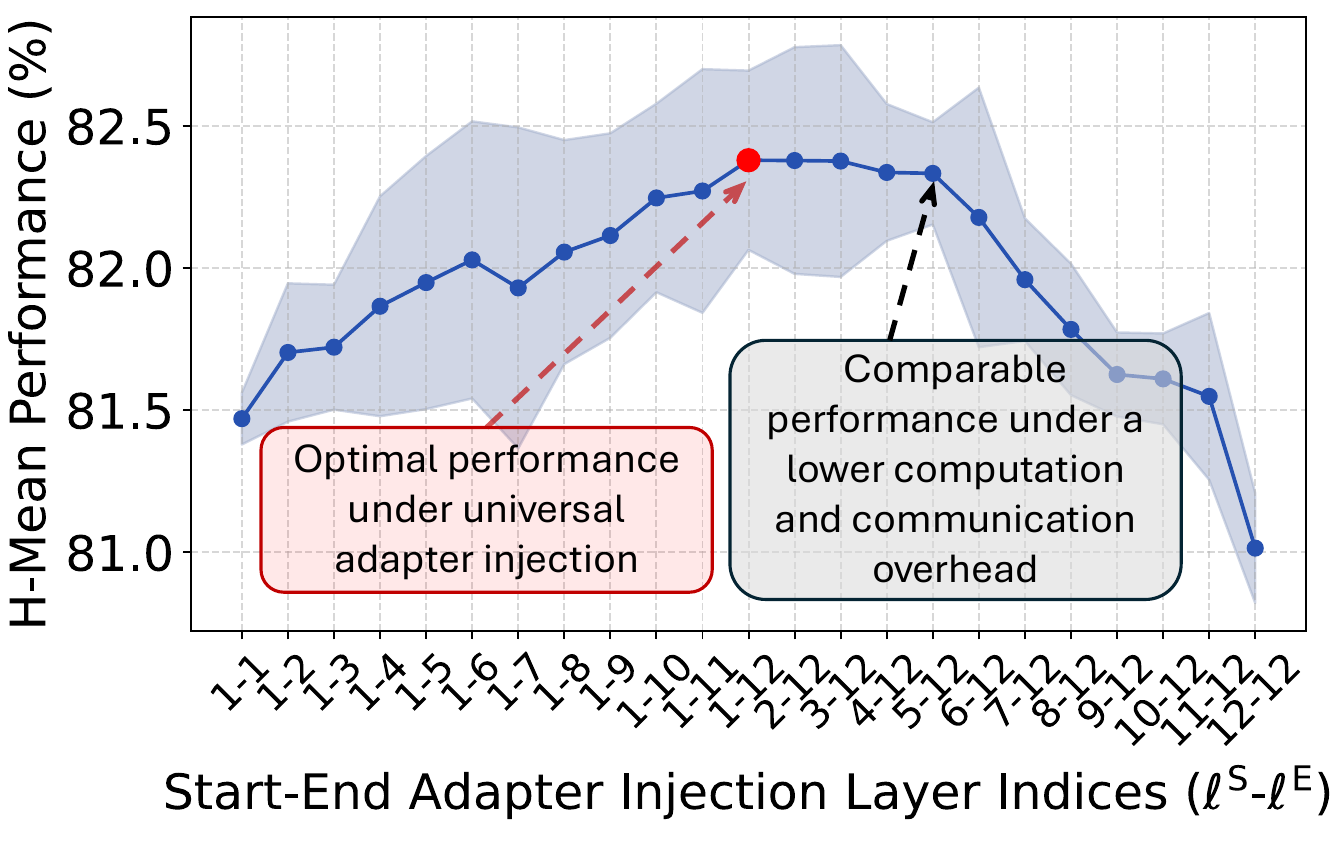}
    \end{subfigure}
    \begin{subfigure}[t]{0.265\linewidth}
        \centering
        \includegraphics[width=\linewidth]{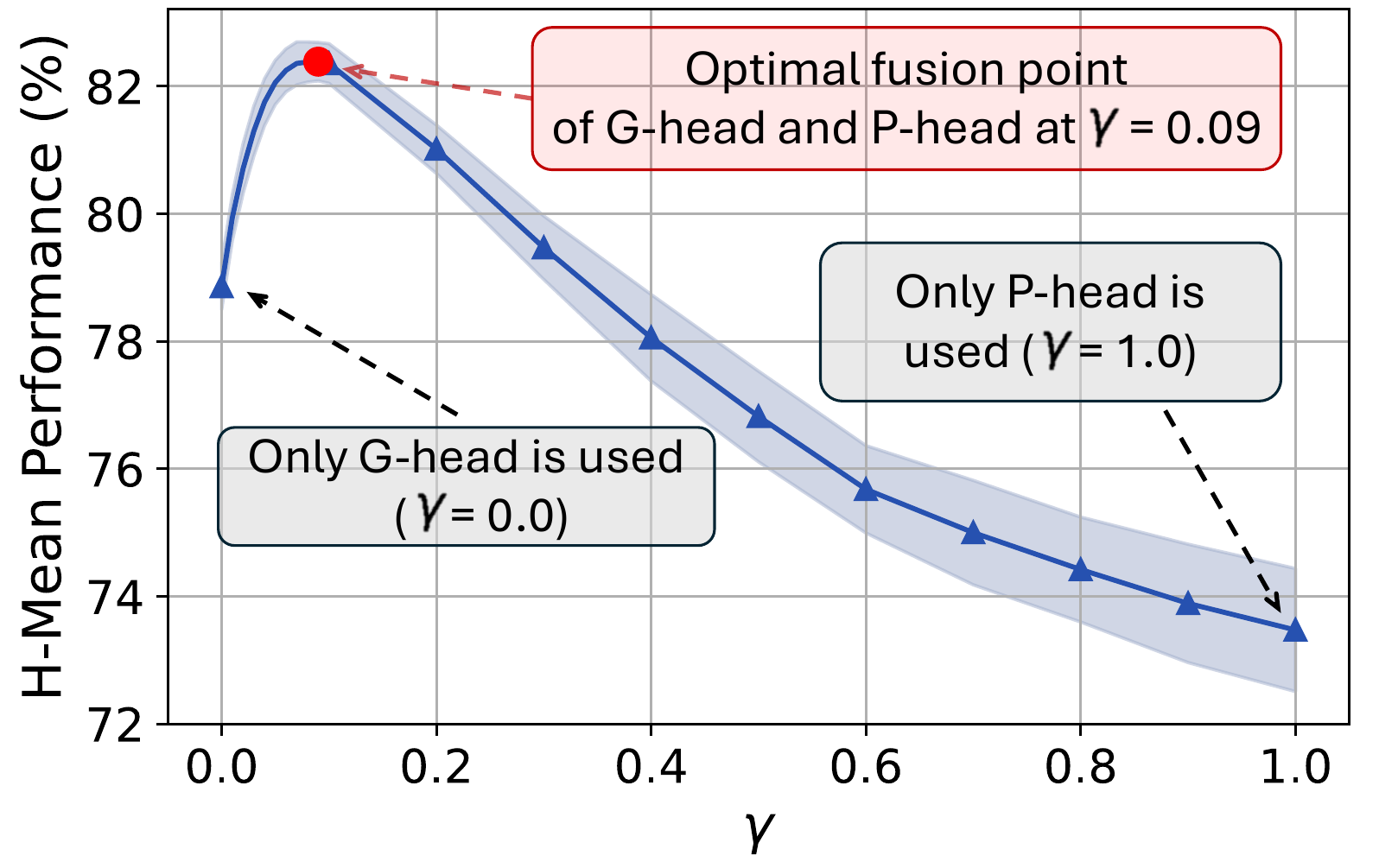}
    \end{subfigure}
    \begin{subfigure}[t]{0.27\linewidth}
        \centering
        \includegraphics[width=\linewidth]{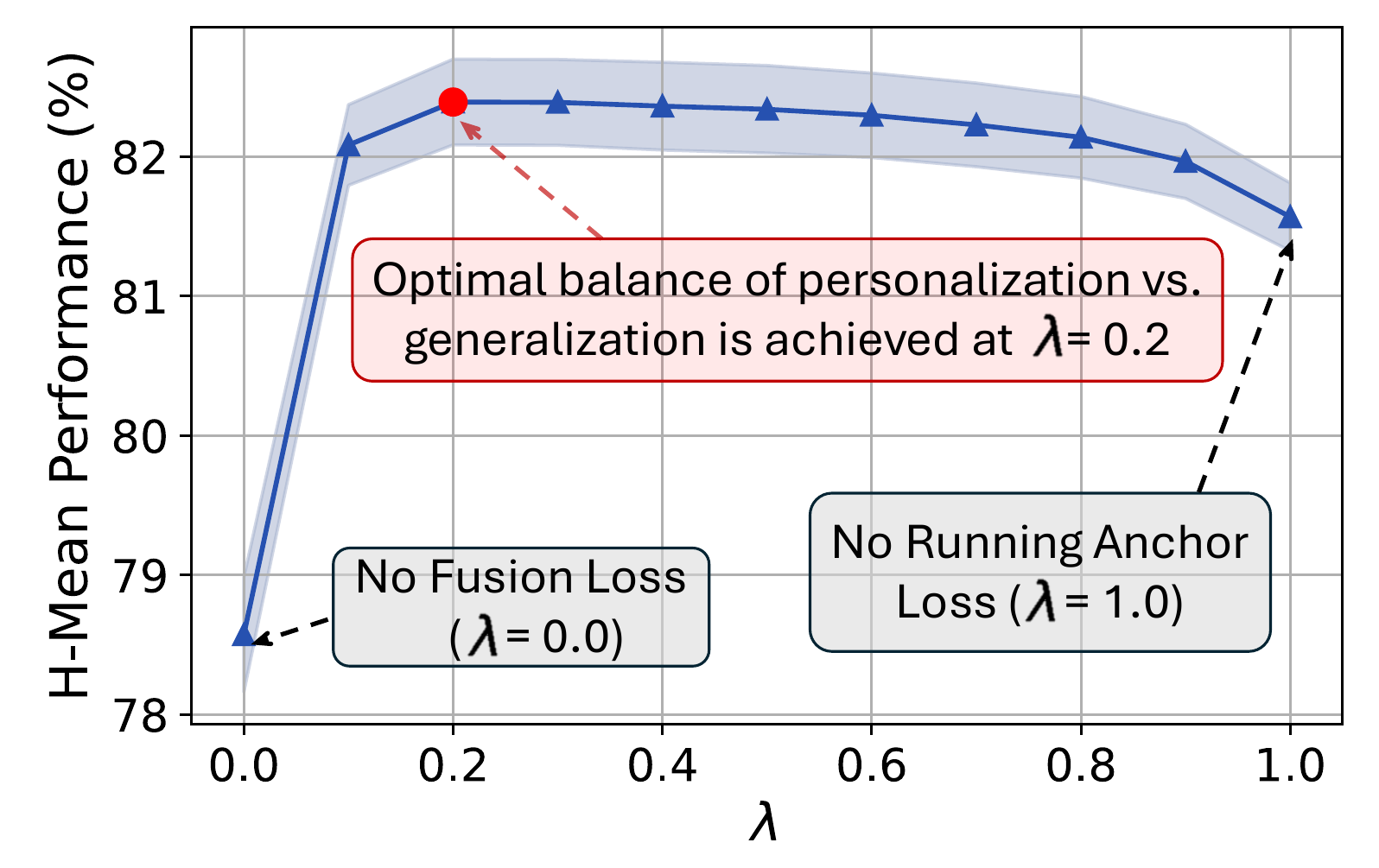}
    \end{subfigure}
    \vspace{-1mm}
    \caption{Effect of injecting adapters at various transformer layers (left plot). Effect of varying $\gamma$ (middle plot) and $\lambda$ (right plot), which are used in \eqref{eq:fusion_rep} and \eqref{eq:final_loss}, respectively, on the performance.}
    \label{fig:gamma_lambda}
    \vspace{-5.85mm}
\end{figure}

\vspace{-1.5mm}
\noindent \textbf{\textit{\underline{(Ablation Study 8)}} Effect of $\lambda$ on performance:}
In the right plot of Fig.~\ref{fig:gamma_lambda}, we examine the effect of varying $\lambda$ in~\eqref{eq:final_loss} on the average performance across datasets (per-dataset results and individual personalization/ generalization accuracies are provided in Appendix~\ref{ap:lambda}). We observe that $\lambda = 0.2$ achieves the highest H-Mean. Further, the results in Appendix~\ref{ap:lambda} (see Fig.~\ref{fig:lambda_avg_datasets}) reveal that choosing $\lambda$ too high weakens the anchoring to the zero-shot model, yielding the worst generalization performance.

\vspace{-1.5mm}
\noindent \textbf{\textit{\underline{(Ablation Study 9)}} Effect of aggregation of various components:}
Table~\ref{tbl:aggregation} examines how aggregating different trainable components  (i.e., global vision/text adapters and personalized adapter projection) affects average performance across datasets (Appendix~\ref{ap:aggregation} contains the per-dataset
results).
\begin{wraptable}{r}{0.27\linewidth}
\vspace{-12pt}
\centering
\scriptsize
\caption{Comparison of different aggregation strategies.}
\label{tbl:aggregation}
\vspace{-4pt}
\begin{minipage}{1\linewidth}
\centering
\begin{tabular}{c|ccc}
\hline
\rowcolor{gray!20} \textbf{Agg. Strategy} & Per. & Gen. & H-Mean \\
\hline
{BLEND} & 92.91 & 74.62 & \cellcolor{lightpink}{\textbf{82.15}} \\
Full & 83.35 & \textbf{75.41} & \cellcolor{lightpink}{79.01} \\
NoAgg & 89.12 & 69.71 & \cellcolor{lightpink}{77.71} \\
Only Vision & 74.64 & 70.02 & \cellcolor{lightpink}{71.82} \\
Only Text & \textbf{93.84} & 67.31 & \cellcolor{lightpink}{77.04} \\
\hline
\end{tabular}
\end{minipage}

\vspace{-10pt}
\end{wraptable}
{BLEND} (i.e., aggregating all global vision/text adapters and keeping the personalized projection adapter at each client for local adaptation) delivers the best overall H-Mean performance. Also, aggregating all trainable parameters (`Full') improves generalization but substantially harms personalization, while keeping all global adapters at clients (`NoAgg') preserves personalization to some extent but leads to a notable drop in generalization. Interestingly, even though `NoAgg' uses all adapters for personalization, it does \textit{not} achieve the highest personalization performance, highlighting the importance of effective cross-client knowledge sharing during model aggregation. 


\newcommand{\Rstret}{\rule{0pt}{2.572ex}}


\vspace{-1.5mm}
\noindent \textbf{\textit{\underline{(Ablation Study 10)}} Effect of trainable text projection head:} Appendix~\ref{ap:trainable_txt_encoder} shows that unfreezing the text projection head mostly degrades the H-Mean performance, as training the text projection head causes forgetting of the pre-trained CLIP knowledge.



\vspace{-1.5mm}
\noindent \textbf{\textit{\underline{(Ablation Study 11)}} Effect of running anchor:}
We evaluate BLEND's running anchor loss against two regularization strategies: a frozen zero-shot G-head and a KL-distilled G-head. The running anchor achieves a better personalization–generalization balance, as detailed in Appendix~\ref{app:running_anchor}.

\vspace{-1.5mm}
\noindent \textbf{\textit{\underline{(Ablation Study 12)}} Effect of adaptive fusion weight:}
We study the effect of the fusion weight $\gamma$ under varying levels of data heterogeneity and numbers of clients. As shown in Appendix~\ref{app:adaptive_fusion}, the optimal value varies with the degree of data heterogeneity and the number of clients.

\vspace{-1.5mm}
\noindent \textbf{\textit{\underline{(Ablation Study 13)}} Effect of explicit personalization:}
In Appendix~\ref{app:explicit_personalization}, we study whether supervising the personalized representation with a separate cross-entropy loss, rather than relying on the fusion loss, improves BLEND's performance. The results show that explicit personalization slightly improves personalization accuracy but reduces generalization and the overall H-mean.


\vspace{-1.5mm}
\noindent \textbf{\textit{\underline{(Ablation Study 14)}} Hyperparameter selection and validation:}
To avoid dataset-specific tuning, for BLEND, we used a unified hyperparameter configuration across all  experimental settings. To further address potential test-set-informed selection, we conduct additional experiments on an independent held-out validation split in Appendix~\ref{app:hyperparameter_validation}. The validation results identify the same optimal values of $\gamma$, $\lambda$, and adapter ranks as those reported in the above sensitivity analysis (Fig.~\ref{fig:gamma_lambda}, Table~\ref{tbl:global_adapter}, and Table~\ref{tbl:personalized_adapter}). 

\vspace{-4mm}
\section{Limitations}
\vspace{-4mm}
 BLEND is built on CLIP and focuses on text–image modalities. Extending it to broader multi-modal settings (e.g., audio or video), retrieval and generative tasks, and multi-task learning remains an open direction (see Appendix~\ref{app:generalization}). Also, BLEND’s performance depends on hyperparameters (e.g., $\gamma$ and $\lambda$) that govern the personalization–generalization trade-off, and their adaptive selection in new environments remains an open problem. Moreover, extension of BLEND to fully-decentralized/serverless FedVLMs remains open. Finally, the integration of privacy-preserving parameter aggregation mechanisms, such as differential privacy, within BLEND 
is worth investigating.

\vspace{-4mm}
\section{Conclusion}
\vspace{-4mm}
We introduced {BLEND}, a FedVLM framework that integrates globally aggregated adapters into the vision and text encoders of CLIP, together with a locally optimized vision projection head. 
Through a novel loss formulation, {BLEND} leverages the output of these components to enhance the balance of personalization vs. generalization. Extensive experiments validated each of the introduced model components and showed that \texttt{BLEND} consistently outperforms state-of-the-art baselines.


\section{Acknowledgments}
\vspace{-4mm}
This work was supported in part
by the U.S. National Science Foundation (NSF) under Grants ECCS-2543754, SaTC-2513164,
and ECCS-2512911; and in part by the National Natural Science Foundation of China under Grants No.~62301487 and No.~62671554.

\appendix



\bibliographystyle{unsrtnat}
\bibliography{ref}

\newpage

\clearpage
\setcounter{page}{1}

\newpage
\appendix
\onecolumn
\startcontents[sections]
\printcontents[sections]{l}{1}{\section*{Appendix Table of Contents}}

\clearpage

\section{Communication and Computation Overhead} \label{ap:overhead}
In this section, we demonstrate that BLEND achieves superior performance while using comparable or fewer trainable parameters during local training and incurring lower communication overhead. To this end, we compare against both \textit{prompt-based} and \textit{adapter-based} baselines under controlled resource budgets. Specifically, for each setting, we adjust BLEND (e.g., by reducing the number of adapters or their bottleneck size) to match the parameter or communication budget of the baselines, and then evaluate performance under these constraints. All reported results are averaged across all $10$ datasets used in the paper.

Tables~\ref{tbl:com_high} and~\ref{tbl:com_low} report results under comparable communication overhead. Table~\ref{tbl:com_high} corresponds to a high-budget regime (matching the resource footprint of MMA+FedAVG, which is a rather heavy adapter-based fine-tuning method), where BLEND achieves the highest H-Mean while transmitting fewer bits than MMA+FedAVG. Table~\ref{tbl:com_low} presents the low-budget regime, showing that BLEND maintains strong performance even with significantly reduced communication, outperforming prompt-based and lightweight adapter-based methods.
\vspace{-4mm}
\begin{table}[!h]
\centering
\scriptsize
\caption{Comparable communicated parameters (high budget). Communication overhead is measured as the total number of bits transmitted to the server over 50 global rounds, reported in MB.}
\begin{tabular}{l|ccccc}
\hline
\rowcolor{gray!20} Method & Total Number of Exchanged Params. & Communication Overhead (MB) & Per. & Gen. & H-Mean \\
\hline
MMA+FedAVG & 675K & 64.33 & 82.69 & 73.14 & \cellcolor{lightpink}77.34 \\
\hline
BLEND & 507K & 48.38 & 92.92 & 74.61 & \cellcolor{lightpink}\textbf{82.15} \\
\hline
\end{tabular}
\label{tbl:com_high}
\end{table}

\vspace{-9mm}
\begin{table}[!h]
\centering
\scriptsize
\caption{Comparable communicated parameters (low budget). Communication overhead is measured as the total number of bits transmitted to the server over 50 global rounds, reported in MB.}
\begin{tabular}{l|ccccc}
\hline
\rowcolor{gray!20}Method & Total Number of Exchanged Params. & Communication Overhead (MB) & Per. & Gen. & H-Mean \\
\hline
pFedMMA & 8.4K & 0.81 & 77.99 & 66.69 & \cellcolor{lightpink}71.48 \\
pFedMOAP & 8.2K & 0.78 & 92.03 & 66.48 & \cellcolor{lightpink}75.98 \\
FedPHA & 8.2K & 0.78 & 92.35 & 47.90 & \cellcolor{lightpink}61.15 \\
FedOTP & 8.2K & 0.78 & 85.92 & 32.57 & \cellcolor{lightpink}45.39 \\
PromptFL & 8.2K & 0.78 & 78.75 & 69.95 & \cellcolor{lightpink}73.61 \\
PromptFolio & 8.2K & 0.78 & 85.75 & 50.84 & \cellcolor{lightpink}61.85 \\
FedPGP & 16.4K & 1.56 & 87.88 & 64.38 & \cellcolor{lightpink}72.86 \\
\hline
BLEND & 6.4K & 0.61 & 91.01 & 73.37 & \cellcolor{lightpink}\textbf{80.75} \\
\hline
\end{tabular}
\label{tbl:com_low}
\end{table}
 
Also, Tables~\ref{tbl:comput_high} and~\ref{tbl:comput_low} report results under comparable trainable parameter budgets. In Table~\ref{tbl:comput_high} (high-budget setting), BLEND again achieves the best balance between personalization and generalization. In Table~\ref{tbl:comput_low} (low-budget setting), BLEND consistently outperforms competing methods despite using fewer trainable parameters, highlighting its efficiency and effectiveness under constrained resources.

\vspace{-4mm}
\begin{table}[!h]
\centering
\scriptsize
\caption{Comparable trainable parameters (high budget).}
\begin{tabular}{l|cccc}
\hline
\rowcolor{gray!20}Method & Total Number of Trainable Params. & Per. & Gen. & H-Mean \\
\hline
MMA+FedAVG & 675K & 82.69 & 73.14 & \cellcolor{lightpink}77.34 \\
pFedMMA & 675K & 77.99 & 66.69 & \cellcolor{lightpink}71.48 \\
\hline
BLEND & 536K & 92.67 & 74.15 & \cellcolor{lightpink}\textbf{81.79} \\
\hline
\end{tabular}
\label{tbl:comput_high}
\end{table}

\vspace{-7mm}
\begin{table}[!h]
\centering
\scriptsize
\caption{Comparable trainable parameters (low budget).}
\begin{tabular}{l|cccc}
\hline
\rowcolor{gray!20} Method & Total Number of Trainable Params. & Per. & Gen. & H-Mean \\
\hline
pFedMOAP & 74.2K & 92.03 & 66.48 & \cellcolor{lightpink}75.98 \\
FedPHA & 16.4K & 92.35 & 47.90 & \cellcolor{lightpink}61.15 \\
FedOTP & 16.4K & 85.92 & 32.57 & \cellcolor{lightpink}45.39 \\
PromptFL & 8.2K & 78.75 & 69.95 & \cellcolor{lightpink}73.61 \\
PromptFolio & 8.2K & 85.75 & 50.84 & \cellcolor{lightpink}61.85 \\
FedPGP & 20.6K & 87.88 & 64.38 & \cellcolor{lightpink}72.86 \\
\hline
BLEND & 13.3K & 85.32 & 73.99 & \cellcolor{lightpink}\textbf{79.01} \\
\hline
\end{tabular}
\label{tbl:comput_low}
\end{table}

\vspace{-2mm}
 We further investigate the computational efficiency of BLEND by comparing its per-round wall-clock time and average GPU memory usage with baselines. As shown in Table~\ref{tab:runtime_memory}, BLEND requires 11.18 seconds per round and 0.62 GB of GPU memory. These results demonstrate that computational requirements and GPU memory usage of BLEND are comparable to those of existing methods.
\vspace{-2mm}
\begin{table*}[!h]
\centering
\scriptsize
\caption{Comparison of per-round wall-clock time and average GPU memory usage across FL methods.}
\label{tab:runtime_memory}
\resizebox{\textwidth}{!}{
\begin{tabular}{l|ccccccccc}
\hline
\rowcolor{gray!20} Metric & {BLEND} & {pFedMMA} & {MMA+FedAvg} & {FedPHA} & {FedOPT} & {FedPGP} & {PromptFL} & {pFedMOAP} & {PromptFolio} \\
\hline
Time (s) & 11.18 & 11.53 & 11.59 & 13.35 & {9.70} & 15.19 & 10.30 & 10.01 & 10.11 \\
Mean GPU (GB) & 0.62 & 0.60 & 0.61 & 0.63 & {0.59} & 1.28 & 0.71 & 0.76 & 0.74 \\
\hline
\end{tabular}
}
\end{table*}


\clearpage
\section{Data Heterogeneity Variations}\label{ap:data_hete}
To evaluate the performance of BLEND under various levels of data heterogeneity, here we change the Dirichlet distribution parameter $\alpha^{\mathsf{Dir}}$ to include the values $\{10, 0.1, 0.01\}$ \textbf{over which the data heterogeneity increases} (smaller values of $\alpha^{\mathsf{Dir}}$ correspond to higher levels of data heterogeneity, and thus the x-axis in all plots is shown in descending order of $\alpha^{\mathsf{Dir}}$). We plot the personalization and generalization performances and the H-Mean of the two under each level of data heterogeneity across datasets in Figs.~\ref{fig:datahet-UCF101}-\ref{fig:datahet-Imagenet}. This general performance overview is depicted in Fig.~\ref{fig:datahet-avgAll} where the averaged performances under all datasets are shown.

Across all experiments, BLEND consistently achieves the highest H-Mean, indicating a superior balance between personalization and generalization. More importantly, the performance gap between BLEND and the baselines widens as data heterogeneity becomes more severe, as evidenced by the H-Mean curves. This trend highlights the robustness of BLEND under harsh non-IID data conditions (most baseline methods exhibit a sharp decline in H-Mean as data heterogeneity increases, whereas BLEND maintains stable performance even in highly data heterogeneous settings).

\begin{figure}[h]
    \centering
    \hspace{-14mm}
    \includegraphics[width=.9\linewidth]{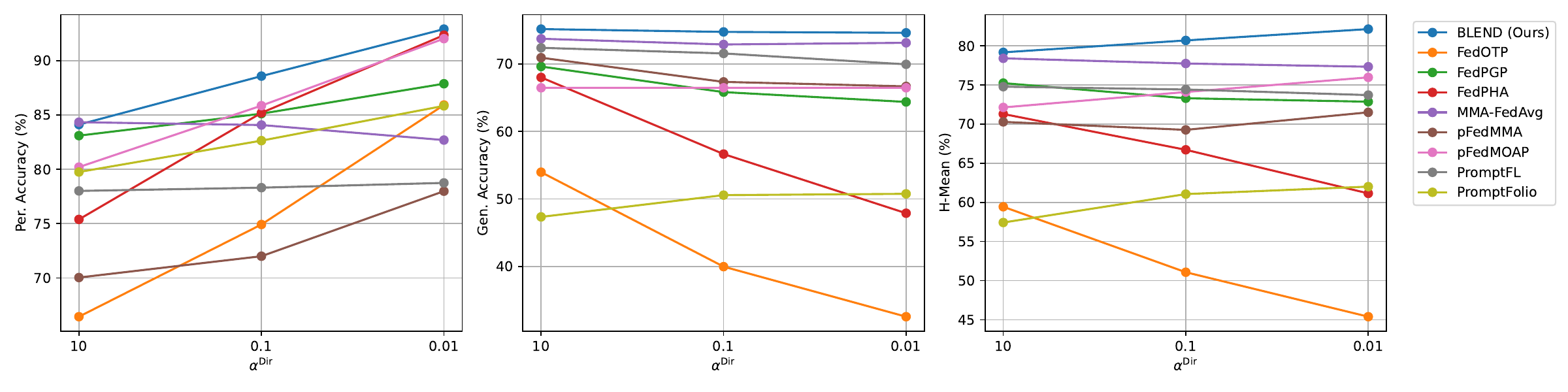}
    \caption{Personalization and generalization test accuracies alongside their harmonic mean (H-Mean) under changing values of $\alpha^{\mathsf{Dir}}$ (data heterogeneity) \textbf{averaged across all datasets}.}
    \label{fig:datahet-avgAll}
\end{figure}
\begin{figure}[h]
    \centering
    \includegraphics[width=.99\linewidth]{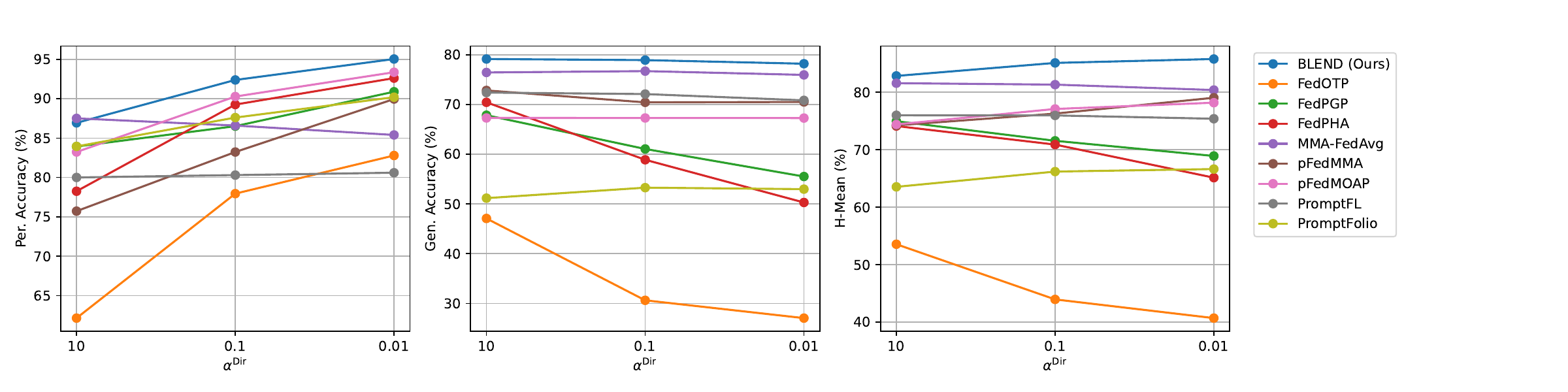}
    \caption{Personalization and generalization test accuracies alongside their harmonic mean (H-Mean) under changing values of $\alpha^{\mathsf{Dir}}$ (data heterogeneity) for the \textbf{\textit{UCF101}} dataset.}
    \label{fig:datahet-UCF101}
\end{figure}
\begin{figure}[h]
    \centering
    \includegraphics[width=.99\linewidth]{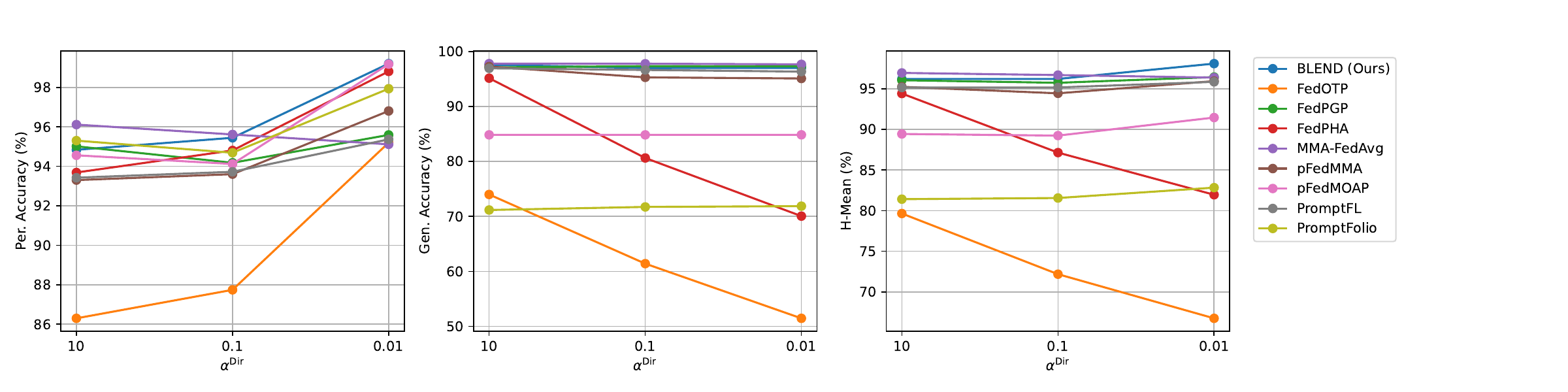}
    \caption{Personalization and generalization test accuracies alongside their harmonic mean (H-Mean) under changing values of $\alpha^{\mathsf{Dir}}$ (data heterogeneity) for the \textbf{\textit{OxfordPets}} dataset.}
    \label{fig:datahet-OxPets}
\end{figure}
\begin{figure}[h]
    \centering
    \includegraphics[width=.99\linewidth]{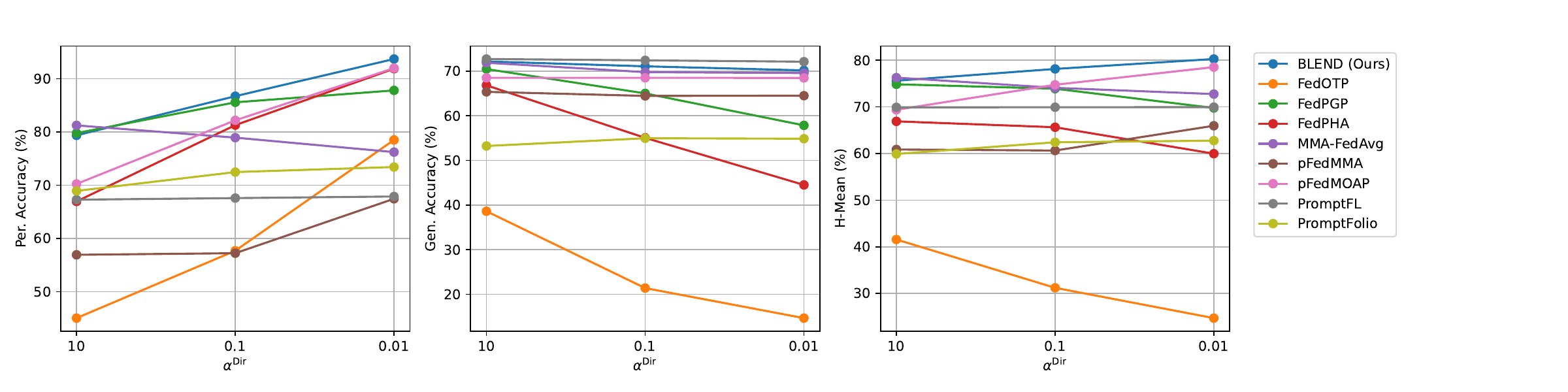}
    \caption{Personalization and generalization test accuracies alongside their harmonic mean (H-Mean) under changing values of $\alpha^{\mathsf{Dir}}$ (data heterogeneity) for the \textbf{\textit{Stanford-Cars}} dataset.}
    \label{fig:datahet-StanCars}
\end{figure}
\begin{figure}[h]
    \centering
    \includegraphics[width=.99\linewidth]{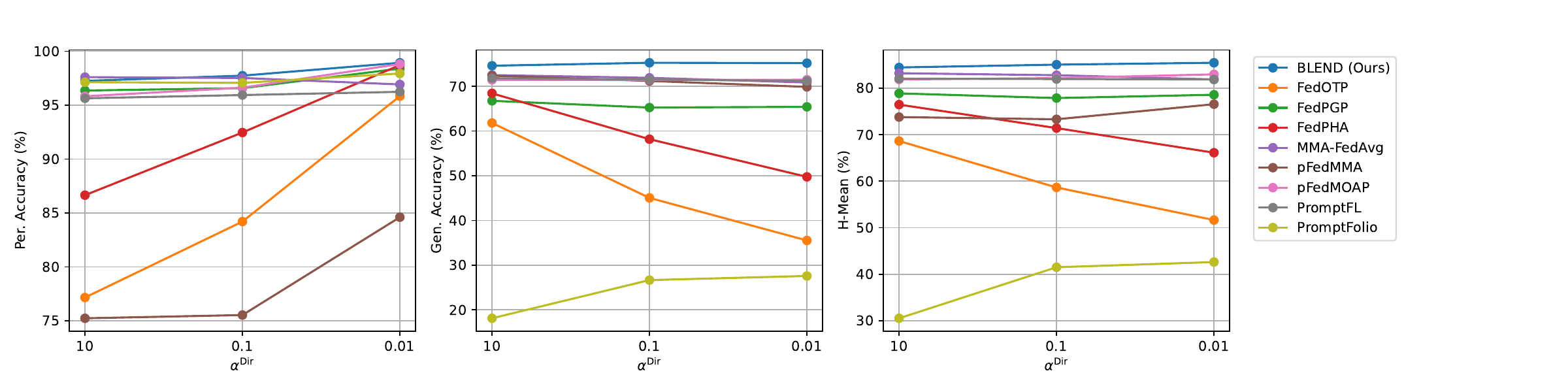}
    \caption{Personalization and generalization test accuracies alongside their harmonic mean (H-Mean) under changing values of $\alpha^{\mathsf{Dir}}$ (data heterogeneity) for the \textbf{\textit{Flowers}} dataset.}
    \label{fig:datahet-OxFlowers}
\end{figure}
\begin{figure}[h]
    \centering
    \includegraphics[width=.99\linewidth]{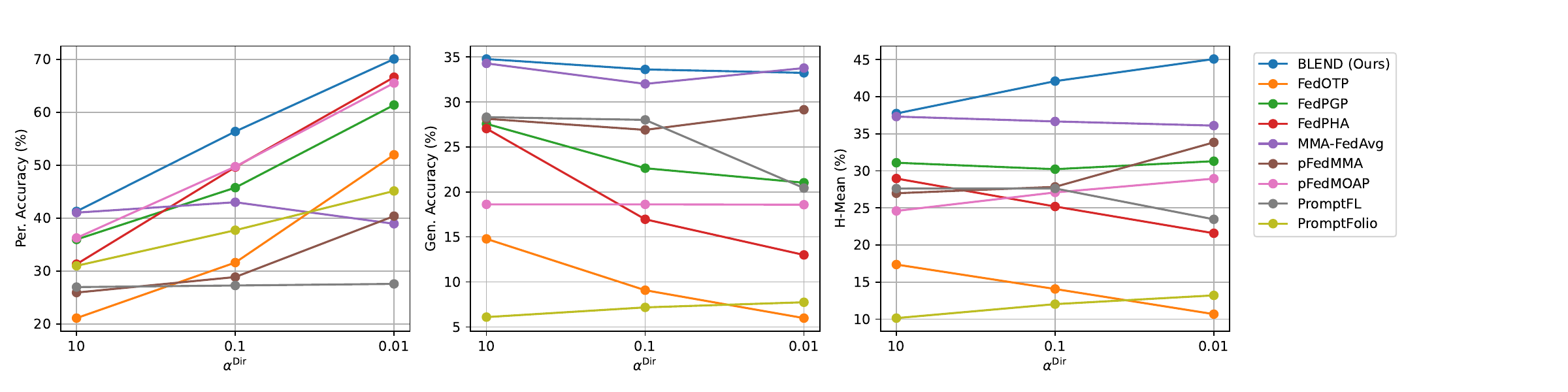}
    \caption{Personalization and generalization test accuracies alongside their harmonic mean (H-Mean) under changing values of $\alpha^{\mathsf{Dir}}$ (data heterogeneity) for the \textbf{\textit{FGVC Aircraft}} dataset.}
    \label{fig:datahet-FGVCAir}
\end{figure}
\begin{figure}[h]
    \centering
    \includegraphics[width=.99\linewidth]{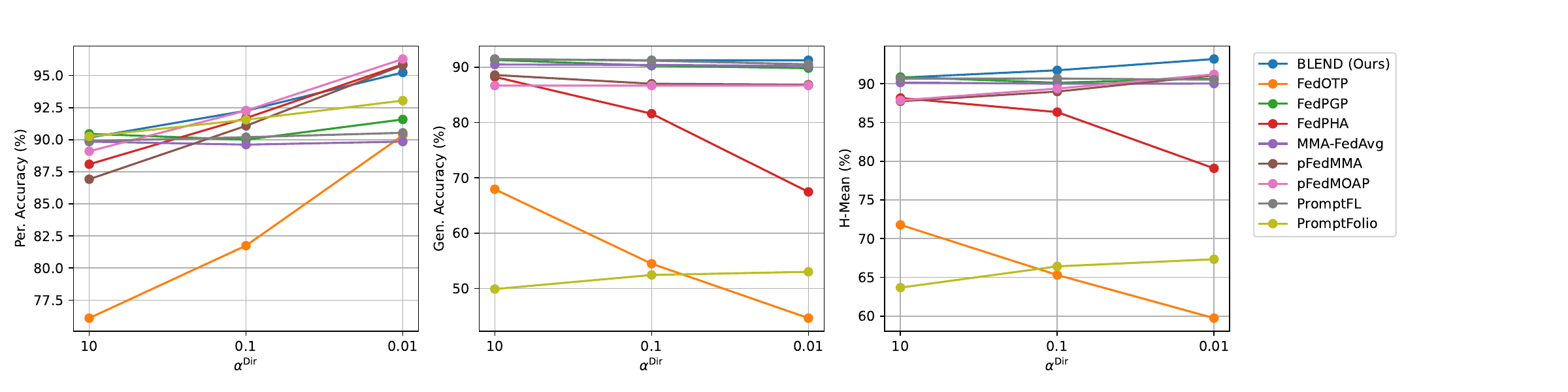}
    \caption{Personalization and generalization test accuracies alongside their harmonic mean (H-Mean) under changing values of $\alpha^{\mathsf{Dir}}$ (data heterogeneity) for the \textbf{\textit{Food101}} dataset.}
    \label{fig:datahet-Food101}
\end{figure}
\begin{figure}[h]
    \centering
    \includegraphics[width=.99\linewidth]{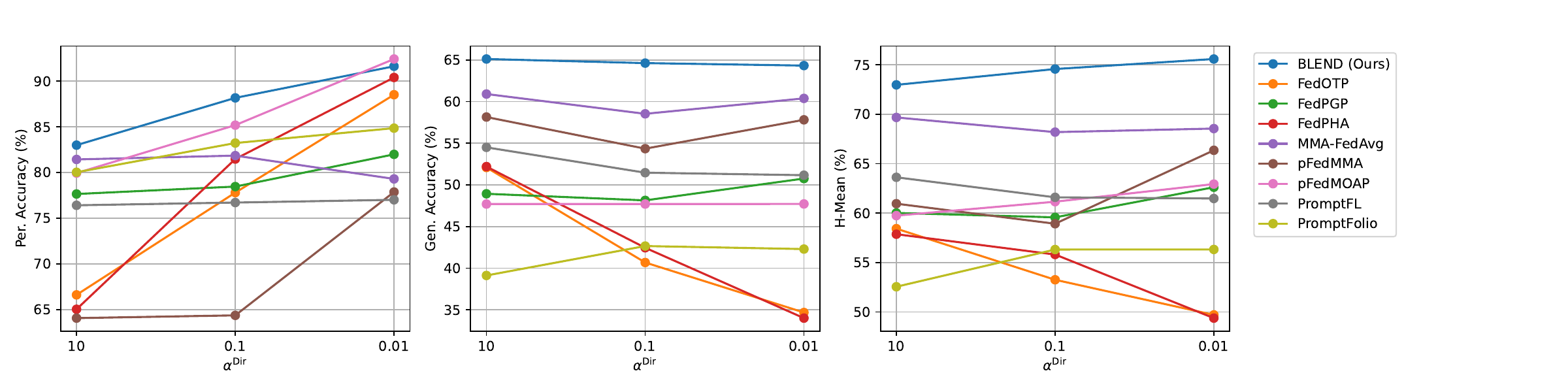}
    \caption{Personalization and generalization test accuracies alongside their harmonic mean (H-Mean) under changing values of $\alpha^{\mathsf{Dir}}$ (data heterogeneity) for the \textbf{\textit{DTD}} dataset.}
    \label{fig:datahet-DTD}
\end{figure}
\begin{figure}[h]
    \centering
    \includegraphics[width=.99\linewidth]{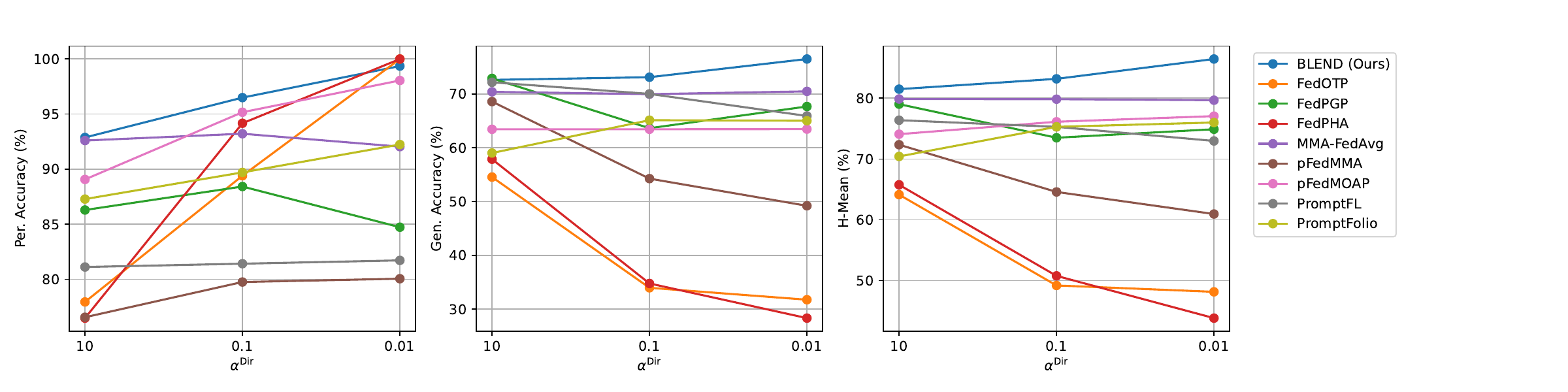}
    \caption{Personalization and generalization test accuracies alongside their harmonic mean (H-Mean) under changing values of $\alpha^{\mathsf{Dir}}$ (data heterogeneity) for the \textbf{\textit{EuroSAT}} dataset.}
    \label{fig:datahet-EuroSat}
\end{figure}
\begin{figure}[h]
    \centering
    \includegraphics[width=.99\linewidth]{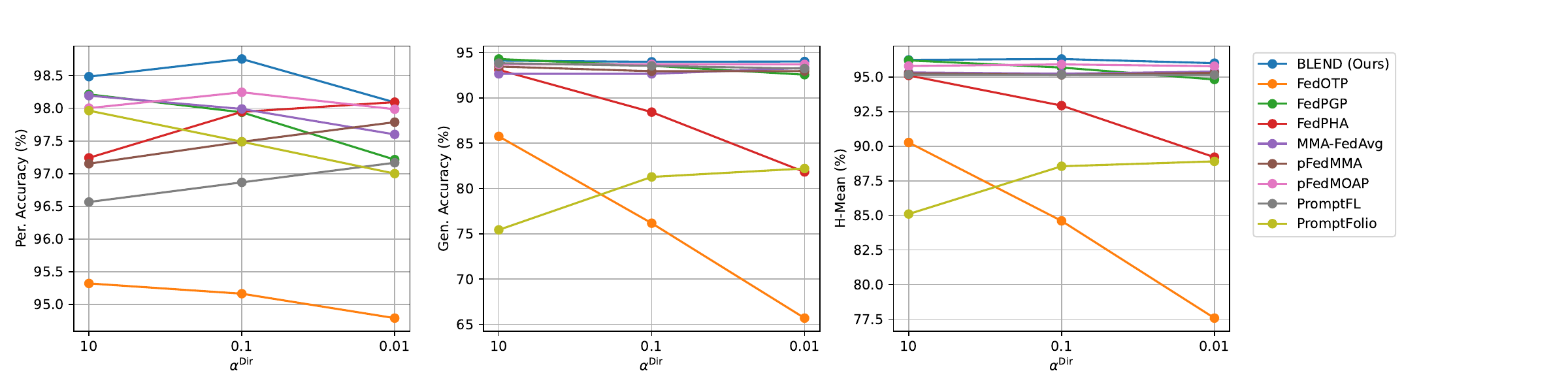}
    \caption{Personalization and generalization test accuracies alongside their harmonic mean (H-Mean) under changing values of $\alpha^{\mathsf{Dir}}$ (data heterogeneity) for the \textbf{\textit{Caltech}} dataset.}
    \label{fig:datahet-Caltech101}
\end{figure}
\begin{figure}[!h]
    \centering
    \includegraphics[width=.99\linewidth]{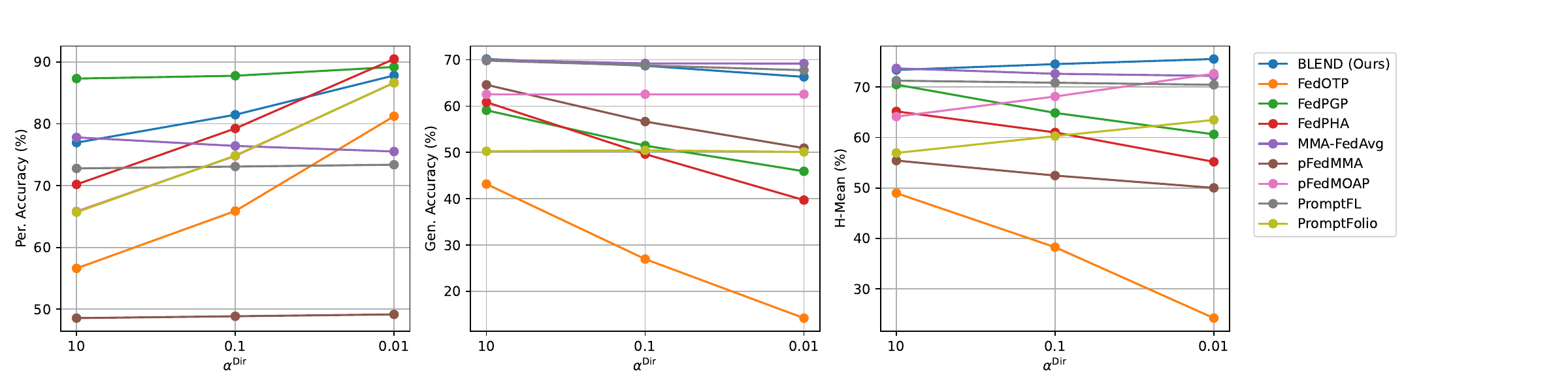}
    \caption{Personalization and generalization test accuracies alongside their harmonic mean (H-Mean) under changing values of $\alpha^{\mathsf{Dir}}$ (data heterogeneity) for the \textbf{\textit{ImageNet}} dataset.}
    \label{fig:datahet-Imagenet}
\end{figure}

\clearpage

\section{Varying the Number of Clients}\label{ap:num_clients}

To extend our experiments regarding the number of clients that collectively train the global model, we repeated the experiments as in Table~\ref{tab:main_results} with larger numbers of clients $N\in\{20, 30\}$, the results of which we include in Tables~\ref{tab:main_results2} and \ref{tab:main_results3}, respectively. 
Inspecting the results in these Tables along with Table~\ref{tab:main_results} in the main text, we can observe that the H-Mean performance of BLEND is consistently better than all the baselines in all the datasets. 


More importantly, the performance margin between BLEND and the strongest baseline increases as the number of clients grows: the average gain rises from $4.81\%$ for $N=10$ (Table~\ref{tab:main_results}), to $5.64\%$ for $N=20$ (Table~\ref{tab:main_results2}), and further to $6.11\%$ for $N=30$ (Table~\ref{tab:main_results3}).
This trend can be attributed to the fact that increasing the number of clients typically amplifies local data bias and heterogeneity: by jointly leveraging globally aggregated adapters and locally personalized projection heads, BLEND is better able to mitigate this increased heterogeneity, resulting in improved robustness and scalability as the number of clients increases.


\begin{table*}[h]
\centering
\caption{Results for a scenario with $N=20$ clients. Performance comparisons (measured in accuracy, \%) between BLEND and baselines. The row $\Delta$ denotes the performance gap between BLEND and the best-performing baseline. Standard deviations across three random seeds are shown through sub-indices. Statistical significance is assessed using a two-sided paired $t$-test on seed-level H-Mean scores between BLEND and the strongest baseline, with improvements statistically significant at $p<0.05$.
}
\vspace{-2mm}
\label{tab:main_results2}
\resizebox{\textwidth}{!}{
\begin{tabular}{l|ccc|ccc|ccc|ccc|ccc|ccc}
\toprule
& \multicolumn{3}{c|}{\cellcolor{gray!20}{UCF101}}
& \multicolumn{3}{c|}{\cellcolor{gray!20}{Oxford Pets}}
& \multicolumn{3}{c|}{\cellcolor{gray!20}{Stanford Cars}}
& \multicolumn{3}{c|}{\cellcolor{gray!20}{Flowers}}
& \multicolumn{3}{c|}{\cellcolor{gray!20}{Imagenet}}
& \multicolumn{3}{c}{\cellcolor{gray!20}{FGVC Aircraft}} \\
\cmidrule(lr){2-4}\cmidrule(lr){5-7}\cmidrule(lr){8-10}\cmidrule(lr){11-13}\cmidrule(lr){14-16}\cmidrule(lr){17-19}
& Per. & Gen. & \cellcolor{lightpink}{H-Mean} & Per. & Gen. & \cellcolor{lightpink}{H-Mean} & Per. & Gen. & \cellcolor{lightpink}{H-Mean} &Per. & Gen. & \cellcolor{lightpink}{H-Mean} & Per. & Gen. & \cellcolor{lightpink}{H-Mean} & Per. & Gen. & \cellcolor{lightpink}{H-Mean} \\
\midrule

Zero-shot CLIP
& 68.10{\scriptsize$\pm$0.00} & 75.18{\scriptsize$\pm$0.00} & 71.46{\scriptsize$\pm$\cellcolor{lightpink}{0.00}}
& 89.47{\scriptsize$\pm$0.00} & 96.81{\scriptsize$\pm$0.00} & 93.00{\scriptsize$\pm$\cellcolor{lightpink}{0.00}}
& 63.77{\scriptsize$\pm$0.00} & \textbf{74.94{\scriptsize$\pm$0.00}} & 68.91{\scriptsize$\pm$\cellcolor{lightpink}{0.00}}
& 69.42{\scriptsize$\pm$0.00} & \textbf{76.74{\scriptsize$\pm$0.00}} & 72.90{\scriptsize$\pm$\cellcolor{lightpink}{0.00}}
& 72.40{\scriptsize$\pm$0.00} & 68.08{\scriptsize$\pm$0.00} & 70.17{\scriptsize$\pm$\cellcolor{lightpink}{0.00}}
& 27.49{\scriptsize$\pm$0.00} & 33.05{\scriptsize$\pm$0.00} & 30.02{\scriptsize$\pm$\cellcolor{lightpink}{0.00}}\\

MMA+FedAvg
& 79.22{\scriptsize$\pm$1.17} & 77.27{\scriptsize$\pm$0.98} & 78.23{\scriptsize$\pm$\cellcolor{lightpink}{0.83}}
& 95.96{\scriptsize$\pm$0.10} & 97.50{\scriptsize$\pm$0.11} & 96.73{\scriptsize$\pm$\cellcolor{lightpink}{0.04}}
& 75.86{\scriptsize$\pm$0.71} & 69.71{\scriptsize$\pm$0.79} & 72.66{\scriptsize$\pm$\cellcolor{lightpink}{0.74}}
& 94.31{\scriptsize$\pm$0.29} & 72.29{\scriptsize$\pm$1.72} & 81.83{\scriptsize$\pm$\cellcolor{lightpink}{1.02}}
& 77.15{\scriptsize$\pm$0.22} & \textbf{69.10{\scriptsize$\pm$0.30}} & 72.90{\scriptsize$\pm$\cellcolor{lightpink}{0.07}}
& 35.34{\scriptsize$\pm$0.13} & 34.67{\scriptsize$\pm$0.45} & 35.00{\scriptsize$\pm$\cellcolor{lightpink}{0.26}} \\

pFedMMA
& 92.85{\scriptsize$\pm$0.51} & 69.35{\scriptsize$\pm$0.70} & 79.40{\scriptsize$\pm$\cellcolor{lightpink}{0.53}}
& 97.19{\scriptsize$\pm$0.96} & 94.97{\scriptsize$\pm$0.24} & 96.06{\scriptsize$\pm$\cellcolor{lightpink}{0.37}}
& 77.73{\scriptsize$\pm$2.77} & 62.93{\scriptsize$\pm$0.54} & 69.52{\scriptsize$\pm$\cellcolor{lightpink}{0.92}}
& 86.38{\scriptsize$\pm$1.12} & 72.15{\scriptsize$\pm$0.29} & 78.62{\scriptsize$\pm$\cellcolor{lightpink}{0.46}}
& 70.39{\scriptsize$\pm$2.26} & 58.47{\scriptsize$\pm$0.85} & 63.87{\scriptsize$\pm$\cellcolor{lightpink}{1.44}}
& 40.40{\scriptsize$\pm$1.40} & 28.13{\scriptsize$\pm$0.45} & 33.16{\scriptsize$\pm$\cellcolor{lightpink}{0.60}} \\

pFedMoAP
& 93.94{\scriptsize$\pm$0.35} & 67.22{\scriptsize$\pm$0.01} & 78.36{\scriptsize$\pm$\cellcolor{lightpink}{0.12}}
& 99.63{\scriptsize$\pm$0.09} & 84.78{\scriptsize$\pm$0.01} & 91.61{\scriptsize$\pm$\cellcolor{lightpink}{0.04}}
& 92.49{\scriptsize$\pm$0.54} & 68.45{\scriptsize$\pm$0.01} & 78.67{\scriptsize$\pm$\cellcolor{lightpink}{0.20}}
& 96.73{\scriptsize$\pm$0.63} & 71.44{\scriptsize$\pm$0.01} & 82.18{\scriptsize$\pm$\cellcolor{lightpink}{0.22}}
& 90.21{\scriptsize$\pm$0.00} & 62.49{\scriptsize$\pm$0.00} & 73.84{\scriptsize$\pm$\cellcolor{lightpink}{0.00}}
& \textbf{73.23{\scriptsize$\pm$2.16}} & 18.53{\scriptsize$\pm$0.00} & 29.58{\scriptsize$\pm$\cellcolor{lightpink}{0.18}}\\

FedPHA
& 93.37{\scriptsize$\pm$0.56} & 51.90{\scriptsize$\pm$3.30} & 66.64{\scriptsize$\pm$\cellcolor{lightpink}{2.58}}
& \textbf{99.82{\scriptsize$\pm$0.06}} & 56.42{\scriptsize$\pm$3.71} & 72.02{\scriptsize$\pm$\cellcolor{lightpink}{3.02}}
& 93.80{\scriptsize$\pm$0.70} & 41.06{\scriptsize$\pm$0.10} & 57.12{\scriptsize$\pm$\cellcolor{lightpink}{0.03}}
& 95.07{\scriptsize$\pm$0.46} & 43.03{\scriptsize$\pm$1.25} & 59.23{\scriptsize$\pm$\cellcolor{lightpink}{1.28}}
& \textbf{91.60{\scriptsize$\pm$0.01}} & 34.09{\scriptsize$\pm$0.50} & 49.69{\scriptsize$\pm$\cellcolor{lightpink}{0.53}}
& 71.86{\scriptsize$\pm$0.00} & 8.03{\scriptsize$\pm$0.00} & 14.44{\scriptsize$\pm$\cellcolor{lightpink}{0.00}}\\

FedOTP
& 79.39{\scriptsize$\pm$0.65} & 24.91{\scriptsize$\pm$1.17} & 37.91{\scriptsize$\pm$\cellcolor{lightpink}{1.44}}
& 95.80{\scriptsize$\pm$1.39} & 44.30{\scriptsize$\pm$7.16} & 60.17{\scriptsize$\pm$\cellcolor{lightpink}{6.68}}
& 76.06{\scriptsize$\pm$1.12} & 13.87{\scriptsize$\pm$1.33} & 23.43{\scriptsize$\pm$\cellcolor{lightpink}{1.90}}
& 88.41{\scriptsize$\pm$1.60} & 36.05{\scriptsize$\pm$3.59} & 51.11{\scriptsize$\pm$\cellcolor{lightpink}{3.74}}
& 84.25{\scriptsize$\pm$0.13} & 13.53{\scriptsize$\pm$0.07} & 23.31{\scriptsize$\pm$\cellcolor{lightpink}{0.11}}
& 57.40{\scriptsize$\pm$2.63} & 5.82{\scriptsize$\pm$1.09} & 10.55{\scriptsize$\pm$\cellcolor{lightpink}{1.85}} \\

PromptFL
& 81.56{\scriptsize$\pm$1.36} & 71.84{\scriptsize$\pm$2.49} & 76.36{\scriptsize$\pm$\cellcolor{lightpink}{1.33}}
& 96.74{\scriptsize$\pm$0.04} & 96.85{\scriptsize$\pm$0.65} & 96.79{\scriptsize$\pm$\cellcolor{lightpink}{0.35}}
& 71.06{\scriptsize$\pm$0.43} & 70.53{\scriptsize$\pm$1.37} & 70.78{\scriptsize$\pm$\cellcolor{lightpink}{0.52}}
& 96.29{\scriptsize$\pm$0.62} & 70.14{\scriptsize$\pm$0.42} & 81.16{\scriptsize$\pm$\cellcolor{lightpink}{0.40}}
& 75.13{\scriptsize$\pm$0.17} & 68.96{\scriptsize$\pm$0.04} & 71.91{\scriptsize$\pm$\cellcolor{lightpink}{0.10}}
& 34.46{\scriptsize$\pm$1.51} & 24.86{\scriptsize$\pm$6.81} & 28.46{\scriptsize$\pm$\cellcolor{lightpink}{5.12}} \\

FedPGP
& 82.64{\scriptsize$\pm$0.10} & 68.44{\scriptsize$\pm$2.20} & 74.85{\scriptsize$\pm$\cellcolor{lightpink}{1.28}}
& 96.88{\scriptsize$\pm$0.05} & \textbf{97.53{\scriptsize$\pm$0.03}} & 97.20{\scriptsize$\pm$\cellcolor{lightpink}{0.01}}
& 85.11{\scriptsize$\pm$1.15} & 60.90{\scriptsize$\pm$2.11} & 70.96{\scriptsize$\pm$\cellcolor{lightpink}{1.03}}
& 91.79{\scriptsize$\pm$1.75} & 69.20{\scriptsize$\pm$0.59} & 78.91{\scriptsize$\pm$\cellcolor{lightpink}{1.03}}
& 88.11{\scriptsize$\pm$0.09} & 46.39{\scriptsize$\pm$0.06} & 60.78{\scriptsize$\pm$\cellcolor{lightpink}{0.10}}
& 43.49{\scriptsize$\pm$1.84} & 25.16{\scriptsize$\pm$2.17} & 31.86{\scriptsize$\pm$\cellcolor{lightpink}{2.23}}\\

PromptFoilo
& 87.92{\scriptsize$\pm$0.57} & 53.14{\scriptsize$\pm$3.92} & 66.14{\scriptsize$\pm$\cellcolor{lightpink}{2.89}}
& 97.91{\scriptsize$\pm$0.16} & 71.90{\scriptsize$\pm$4.39} & 82.83{\scriptsize$\pm$\cellcolor{lightpink}{2.87}}
& 77.11{\scriptsize$\pm$2.96} & 54.88{\scriptsize$\pm$5.78} & 64.05{\scriptsize$\pm$\cellcolor{lightpink}{4.97}}
& 94.48{\scriptsize$\pm$0.96} & 27.70{\scriptsize$\pm$6.70} & 42.36{\scriptsize$\pm$\cellcolor{lightpink}{7.94}}
& 86.97{\scriptsize$\pm$1.33} & 50.86{\scriptsize$\pm$3.96} & 61.68{\scriptsize$\pm$\cellcolor{lightpink}{3.76}}
& 51.52{\scriptsize$\pm$2.86} & 7.74{\scriptsize$\pm$0.42} & 13.44{\scriptsize$\pm$\cellcolor{lightpink}{0.54}}\\
\midrule

BLEND
& \textbf{94.59{\scriptsize$\pm$0.09}} & \textbf{78.84{\scriptsize$\pm$0.16}} & \textbf{86.00{\scriptsize$\pm$\cellcolor{lightpink}{0.13}}}
& 99.65{\scriptsize$\pm$0.03} & 96.58{\scriptsize$\pm$0.09} & \textbf{98.09{\scriptsize$\pm$\cellcolor{lightpink}{0.06}}}
& \textbf{94.30{\scriptsize$\pm$0.21}} & 73.71{\scriptsize$\pm$0.02} & \textbf{82.75{\scriptsize$\pm$\cellcolor{lightpink}{0.09}}}
& \textbf{98.15{\scriptsize$\pm$0.14}} & 75.74{\scriptsize$\pm$0.23} & \textbf{85.50{\scriptsize$\pm$\cellcolor{lightpink}{0.17}}}
& 90.49{\scriptsize$\pm$0.13} & 68.57{\scriptsize$\pm$0.16} & \textbf{78.02{\scriptsize$\pm$\cellcolor{lightpink}{0.15}}}
& 70.08{\scriptsize$\pm$0.61} & \textbf{34.92{\scriptsize$\pm$0.68}} & \textbf{46.61{\scriptsize$\pm$\cellcolor{lightpink}{0.46}}} \\
\midrule
$\Delta$ ($p-$value)
& & & \cellcolor{lightpink}{\textcolor{darkgreen}{\textbf{+6.60}}} (0.017)
&  &  & \cellcolor{lightpink}{\textcolor{darkgreen}{\textbf{+0.89}}} (0.008)
& & & \cellcolor{lightpink}{\textcolor{darkgreen}{\textbf{+10.09}}} (0.0.013)
& & & \cellcolor{lightpink}{\textcolor{darkgreen}{\textbf{+3.32}}} (0.003)
& & & \cellcolor{lightpink}{\textcolor{darkgreen}{\textbf{+4.18}}} (0.002)
& & & \cellcolor{lightpink}{\textcolor{darkgreen}{\textbf{+11.61}}} (0.003) \\
\hline
\hline

& \multicolumn{3}{c|}{\cellcolor{gray!20}{Food101}}
& \multicolumn{3}{c|}{\cellcolor{gray!20}{DTD}}
& \multicolumn{3}{c|}{\cellcolor{gray!20}{EuroSAT}}
& \multicolumn{3}{c|}{\cellcolor{gray!20}{Caltech}}
& \multicolumn{3}{c|}{\textbf{\cellcolor{gray!20}{Cross-Dataset Avg. ($\uparrow$)}}}
& \multicolumn{3}{c}{\textbf{\cellcolor{gray!20}{Cross-Dataset  Std. ($\downarrow$)}}} \\
\cmidrule(lr){2-4}\cmidrule(lr){5-7}\cmidrule(lr){8-10}\cmidrule(lr){11-13}\cmidrule(lr){14-16}\cmidrule(lr){17-19}
& Per.& Gen. & \cellcolor{lightpink}{H-Mean} & Per.& Gen. & \cellcolor{lightpink}{H-Mean} & Per.& Gen. & \cellcolor{lightpink}{H-Mean} &Per.& Gen. & \cellcolor{lightpink}{H-Mean} & Per.& Gen. & \cellcolor{lightpink}{H-Mean} & Per.& Gen. & \cellcolor{lightpink}{H-Mean} \\
\midrule

Zero-shot CLIP
& 89.44{\scriptsize$\pm$0.00} & 90.67{\scriptsize$\pm$0.00} & 90.05{\scriptsize$\pm$\cellcolor{lightpink}{0.00}}
& 54.75{\scriptsize$\pm$0.00} & 58.94{\scriptsize$\pm$0.00} & 56.76{\scriptsize$\pm$\cellcolor{lightpink}{0.00}}
& 50.21{\scriptsize$\pm$0.00} & 69.87{\scriptsize$\pm$0.00} & 58.43{\scriptsize$\pm$\cellcolor{lightpink}{0.00}}
& 97.16{\scriptsize$\pm$0.00} & 94.21{\scriptsize$\pm$0.00} & 95.66{\scriptsize$\pm$\cellcolor{lightpink}{0.00}}
& 68.22{\scriptsize$\pm$0.00} & 73.85{\scriptsize$\pm$0.00} & 70.74{\scriptsize$\pm$\cellcolor{lightpink}{0.00}}
& 19.86{\scriptsize$\pm$0.00} & 17.81{\scriptsize$\pm$0.00} & 18.72{\scriptsize$\pm$\cellcolor{lightpink}{0.00}}\\

MMA+FedAvg
& 89.83{\scriptsize$\pm$0.07} & 91.33{\scriptsize$\pm$0.03} & 90.57{\scriptsize$\pm$\cellcolor{lightpink}{0.05}}
& 78.90{\scriptsize$\pm$1.53} & \textbf{62.48{\scriptsize$\pm$1.87}} & 69.72{\scriptsize$\pm$\cellcolor{lightpink}{1.48}}
& 92.01{\scriptsize$\pm$0.91} & 68.89{\scriptsize$\pm$6.26} & 78.64{\scriptsize$\pm$\cellcolor{lightpink}{4.15}}
& 97.53{\scriptsize$\pm$0.33} & 93.78{\scriptsize$\pm$0.32} & 95.62{\scriptsize$\pm$\cellcolor{lightpink}{0.33}}
& 81.61{\scriptsize$\pm$0.55} & 73.70{\scriptsize$\pm$1.28} & 77.19{\scriptsize$\pm$\cellcolor{lightpink}{0.90}}
& 17.33{\scriptsize$\pm$0.48} & 17.33{\scriptsize$\pm$1.77} & 16.73{\scriptsize$\pm$\cellcolor{lightpink}{1.18}}\\

pFedMMA
& \textbf{96.99{\scriptsize$\pm$0.14}} & 88.32{\scriptsize$\pm$0.39} & 92.45{\scriptsize$\pm$\cellcolor{lightpink}{0.27}}
& 86.40{\scriptsize$\pm$1.31} & 58.11{\scriptsize$\pm$1.05} & 69.47{\scriptsize$\pm$\cellcolor{lightpink}{0.37}}
& 74.33{\scriptsize$\pm$17.75} & 49.26{\scriptsize$\pm$6.61} & 58.98{\scriptsize$\pm$\cellcolor{lightpink}{10.22}}
& 98.08{\scriptsize$\pm$0.41} & 93.24{\scriptsize$\pm$0.11} & 95.60{\scriptsize$\pm$\cellcolor{lightpink}{0.24}}
& 82.07{\scriptsize$\pm$2.86} & 67.49{\scriptsize$\pm$1.12} & 73.71{\scriptsize$\pm$\cellcolor{lightpink}{1.54}}
& 16.77{\scriptsize$\pm$5.02} & 19.87{\scriptsize$\pm$1.85} & 18.38{\scriptsize$\pm$\cellcolor{lightpink}{2.91}} \\

pFedMoAP
& 96.09{\scriptsize$\pm$0.10} & 86.62{\scriptsize$\pm$0.01} & 91.11{\scriptsize$\pm$\cellcolor{lightpink}{0.04}}
& 94.33{\scriptsize$\pm$0.50} & 47.66{\scriptsize$\pm$0.01} & 63.32{\scriptsize$\pm$\cellcolor{lightpink}{0.12}}
& 98.26{\scriptsize$\pm$1.11} & 63.43{\scriptsize$\pm$0.00} & 77.09{\scriptsize$\pm$\cellcolor{lightpink}{0.34}}
& 99.42{\scriptsize$\pm$0.31} & 93.64{\scriptsize$\pm$0.01} & 96.44{\scriptsize$\pm$\cellcolor{lightpink}{0.14}}
& 93.43{\scriptsize$\pm$0.58} & 66.43{\scriptsize$\pm$0.01} & 76.22{\scriptsize$\pm$\cellcolor{lightpink}{0.14}}
& \textbf{7.32{\scriptsize$\pm$0.61}} & 20.46{\scriptsize$\pm$0.00} & 18.06{\scriptsize$\pm$\cellcolor{lightpink}{0.10}} \\

FedPHA
& 96.28{\scriptsize$\pm$0.27} & 60.36{\scriptsize$\pm$1.78} & 74.18{\scriptsize$\pm$\cellcolor{lightpink}{1.27}}
& 93.58{\scriptsize$\pm$0.50} & 29.94{\scriptsize$\pm$1.22} & 45.36{\scriptsize$\pm$\cellcolor{lightpink}{1.46}}
& 99.91{\scriptsize$\pm$0.09} & 32.08{\scriptsize$\pm$4.97} & 48.34{\scriptsize$\pm$\cellcolor{lightpink}{5.69}}
& 98.82{\scriptsize$\pm$0.00} & 83.07{\scriptsize$\pm$0.00} & 90.27{\scriptsize$\pm$\cellcolor{lightpink}{0.00}}
& 93.41{\scriptsize$\pm$0.27} & 44.00{\scriptsize$\pm$1.68} & 57.73{\scriptsize$\pm$\cellcolor{lightpink}{1.59}}
& 7.69{\scriptsize$\pm$0.25} & 19.37{\scriptsize$\pm$1.66} & 19.48{\scriptsize$\pm$\cellcolor{lightpink}{1.69}}\\

FedOTP
& 88.04{\scriptsize$\pm$1.04} & 43.88{\scriptsize$\pm$1.44} & 58.55{\scriptsize$\pm$\cellcolor{lightpink}{1.13}}
& 88.78{\scriptsize$\pm$0.19} & 26.71{\scriptsize$\pm$1.12} & 41.06{\scriptsize$\pm$\cellcolor{lightpink}{1.33}}
& \textbf{100.00{\scriptsize$\pm$0.00}} & 32.31{\scriptsize$\pm$3.50} & 48.73{\scriptsize$\pm$\cellcolor{lightpink}{4.07}}
& 96.19{\scriptsize$\pm$1.43} & 67.10{\scriptsize$\pm$2.23} & 79.05{\scriptsize$\pm$\cellcolor{lightpink}{2.03}}
& 85.43{\scriptsize$\pm$1.02} & 30.85{\scriptsize$\pm$2.27} & 43.39{\scriptsize$\pm$\cellcolor{lightpink}{2.43}}
& 11.72{\scriptsize$\pm$0.77} & 17.21{\scriptsize$\pm$1.93} & 19.46{\scriptsize$\pm$\cellcolor{lightpink}{1.80}} \\

PromptFL
& 90.01{\scriptsize$\pm$0.08} & 90.94{\scriptsize$\pm$0.10} & 90.48{\scriptsize$\pm$\cellcolor{lightpink}{0.04}}
& 80.21{\scriptsize$\pm$0.48} & 52.74{\scriptsize$\pm$2.44} & 63.61{\scriptsize$\pm$\cellcolor{lightpink}{1.94}}
& 86.70{\scriptsize$\pm$2.21} & 64.51{\scriptsize$\pm$6.56} & 73.70{\scriptsize$\pm$\cellcolor{lightpink}{3.49}}
& 97.49{\scriptsize$\pm$0.39} & 92.10{\scriptsize$\pm$0.89} & 94.71{\scriptsize$\pm$\cellcolor{lightpink}{0.35}}
& 80.96{\scriptsize$\pm$0.73} & 70.35{\scriptsize$\pm$2.18} & 74.80{\scriptsize$\pm$\cellcolor{lightpink}{1.36}}
& 17.80{\scriptsize$\pm$0.68} & 20.04{\scriptsize$\pm$2.40} & 18.62{\scriptsize$\pm$\cellcolor{lightpink}{1.61}} \\

FedPGP
& 90.68{\scriptsize$\pm$0.20} & \textbf{91.50{\scriptsize$\pm$0.13}} & 91.09{\scriptsize$\pm$\cellcolor{lightpink}{0.17}}
& 79.80{\scriptsize$\pm$1.21} & 51.30{\scriptsize$\pm$2.32} & 62.40{\scriptsize$\pm$\cellcolor{lightpink}{1.35}}
& 84.39{\scriptsize$\pm$1.13} & 70.24{\scriptsize$\pm$10.55} & 76.16{\scriptsize$\pm$\cellcolor{lightpink}{5.85}}
& 97.05{\scriptsize$\pm$0.17} & \textbf{94.57{\scriptsize$\pm$0.25}} & 95.79{\scriptsize$\pm$\cellcolor{lightpink}{0.21}}
& 83.99{\scriptsize$\pm$0.76} & 67.52{\scriptsize$\pm$2.04} & 74.00{\scriptsize$\pm$\cellcolor{lightpink}{1.32}}
& 14.56{\scriptsize$\pm$0.69} & 21.83{\scriptsize$\pm$3.00} & 18.52{\scriptsize$\pm$\cellcolor{lightpink}{1.66}}\\

PromptFolio
& 92.94{\scriptsize$\pm$0.21} & 52.98{\scriptsize$\pm$6.14} & 67.29{\scriptsize$\pm$\cellcolor{lightpink}{5.05}}
& 89.00{\scriptsize$\pm$1.10} & 42.21{\scriptsize$\pm$4.35} & 57.10{\scriptsize$\pm$\cellcolor{lightpink}{3.78}}
& 94.78{\scriptsize$\pm$0.82} & 64.94{\scriptsize$\pm$6.65} & 76.84{\scriptsize$\pm$\cellcolor{lightpink}{4.42}}
& 96.57{\scriptsize$\pm$0.72} & 82.15{\scriptsize$\pm$5.08} & 88.67{\scriptsize$\pm$\cellcolor{lightpink}{2.66}}
& 86.92{\scriptsize$\pm$1.15} & 50.85{\scriptsize$\pm$4.83} & 62.04{\scriptsize$\pm$\cellcolor{lightpink}{3.90}}
& 13.84{\scriptsize$\pm$0.99} & 21.39{\scriptsize$\pm$1.83} & 21.56{\scriptsize$\pm$\cellcolor{lightpink}{1.94}}\\
\midrule
BLEND
& 95.89{\scriptsize$\pm$0.06} & 91.46{\scriptsize$\pm$0.06} & \textbf{93.62{\scriptsize$\pm$\cellcolor{lightpink}{0.03}}}
& \textbf{94.71{\scriptsize$\pm$0.37}} & 62.47{\scriptsize$\pm$0.49} & \textbf{75.28{\scriptsize$\pm$\cellcolor{lightpink}{0.46}}}
& 99.07{\scriptsize$\pm$0.33} & \textbf{75.30{\scriptsize$\pm$2.19}} & \textbf{85.54{\scriptsize$\pm$\cellcolor{lightpink}{1.36}}}
& \textbf{99.62{\scriptsize$\pm$0.06}} & 94.25{\scriptsize$\pm$0.01} & \textbf{96.86{\scriptsize$\pm$\cellcolor{lightpink}{0.03}}}
& \textbf{93.65{\scriptsize$\pm$0.20}} & \textbf{75.18{\scriptsize$\pm$0.41}} & \textbf{82.83{\scriptsize$\pm$\cellcolor{lightpink}{0.29}}}
& 8.33{\scriptsize$\pm$0.17} & \textbf{17.11{\scriptsize$\pm$0.63}} & \textbf{14.01{\scriptsize$\pm$\cellcolor{lightpink}{0.39}}}\\
\midrule 
$\Delta$ ($p-$value) 
& & & \cellcolor{lightpink}{\textcolor{darkgreen}{\textbf{+1.17}}} (0.001)
& & & \cellcolor{lightpink}{\textcolor{darkgreen}{\textbf{+5.56}}} (0.005)
&  &  & \cellcolor{lightpink}{\textcolor{darkgreen}{\textbf{+8.45}}} (0.028)
& & & \cellcolor{lightpink}{\textcolor{darkgreen}{\textbf{+0.42}}} (0.012)
& &  & \cellcolor{lightpink}{\textcolor{darkgreen}{\textbf{+5.64}}}
& &  &\cellcolor{lightpink}{\textcolor{darkgreen}{\textbf{-2.72}}} \\
\bottomrule
\end{tabular}
}
\end{table*}

\begin{table*}[h]
\centering
\caption{Results for a scenario with $N=30$ clients. Performance comparisons (measured in accuracy, \%) between BLEND and baselines. The row $\Delta$ denotes the performance gap between BLEND and the best-performing baseline. Standard deviations across three random seeds are shown through sub-indices.  Statistical significance is assessed using a two-sided paired $t$-test on seed-level H-Mean scores between BLEND and the strongest baseline, with improvements statistically significant at $p<0.05$.
}
\vspace{-2mm}
\label{tab:main_results3}
\resizebox{\textwidth}{!}{
\begin{tabular}{l|ccc|ccc|ccc|ccc|ccc|ccc}
\toprule
& \multicolumn{3}{c|}{\cellcolor{gray!20}{UCF101}}
& \multicolumn{3}{c|}{\cellcolor{gray!20}{Oxford Pets}}
& \multicolumn{3}{c|}{\cellcolor{gray!20}{Stanford Cars}}
& \multicolumn{3}{c|}{\cellcolor{gray!20}{Flowers}}
& \multicolumn{3}{c|}{\cellcolor{gray!20}{Imagenet}}
& \multicolumn{3}{c}{\cellcolor{gray!20}{FGVC Aircraft}} \\
\cmidrule(lr){2-4}\cmidrule(lr){5-7}\cmidrule(lr){8-10}\cmidrule(lr){11-13}\cmidrule(lr){14-16}\cmidrule(lr){17-19}
& Per. & Gen. & \cellcolor{lightpink}{H-Mean} & Per. & Gen. & \cellcolor{lightpink}{H-Mean} & Per. & Gen. & \cellcolor{lightpink}{H-Mean} &Per. & Gen. & \cellcolor{lightpink}{H-Mean} & Per. & Gen. & \cellcolor{lightpink}{H-Mean} & Per. & Gen. & \cellcolor{lightpink}{H-Mean} \\
\midrule

Zero-shot CLIP
& 68.10{\scriptsize$\pm$0.00} & 75.18{\scriptsize$\pm$0.00} & 71.46{\scriptsize$\pm$\cellcolor{lightpink}{0.00}}
& 89.47{\scriptsize$\pm$0.00} & 96.81{\scriptsize$\pm$0.00} & 93.00{\scriptsize$\pm$\cellcolor{lightpink}{0.00}}
& 63.77{\scriptsize$\pm$0.00} & \textbf{74.94{\scriptsize$\pm$0.00}} & 68.91{\scriptsize$\pm$\cellcolor{lightpink}{0.00}}
& 69.42{\scriptsize$\pm$0.00} & \textbf{76.74{\scriptsize$\pm$0.00}} & 72.90{\scriptsize$\pm$\cellcolor{lightpink}{0.00}}
& 72.40{\scriptsize$\pm$0.00} & 68.08{\scriptsize$\pm$0.00} & 70.17{\scriptsize$\pm$\cellcolor{lightpink}{0.00}}
& 27.49{\scriptsize$\pm$0.00} & 33.05{\scriptsize$\pm$0.00} & 30.02{\scriptsize$\pm$\cellcolor{lightpink}{0.00}}\\

MMA+FedAvg
& 77.84{\scriptsize$\pm$0.74} & 77.54{\scriptsize$\pm$1.06} & 77.68{\scriptsize$\pm$\cellcolor{lightpink}{0.55}}
& 94.78{\scriptsize$\pm$0.23} & \textbf{97.50{\scriptsize$\pm$0.05}} & 96.12{\scriptsize$\pm$\cellcolor{lightpink}{0.10}}
& 72.45{\scriptsize$\pm$1.07} & 71.57{\scriptsize$\pm$0.40} & 72.00{\scriptsize$\pm$\cellcolor{lightpink}{0.49}}
& 92.89{\scriptsize$\pm$0.49} & 73.19{\scriptsize$\pm$1.61} & 81.86{\scriptsize$\pm$\cellcolor{lightpink}{0.82}}
& 77.57{\scriptsize$\pm$0.25} & 69.20{\scriptsize$\pm$0.30} & 73.15{\scriptsize$\pm$\cellcolor{lightpink}{0.28}}
& 32.25{\scriptsize$\pm$0.96} & 35.15{\scriptsize$\pm$1.04} & 33.64{\scriptsize$\pm$\cellcolor{lightpink}{0.98}} \\

pFedMMA
& 95.70{\scriptsize$\pm$0.45} & 71.16{\scriptsize$\pm$0.56} & 81.63{\scriptsize$\pm$\cellcolor{lightpink}{0.39}}
& 92.72{\scriptsize$\pm$2.16} & 95.14{\scriptsize$\pm$0.22} & 93.90{\scriptsize$\pm$\cellcolor{lightpink}{1.02}}
& 78.41{\scriptsize$\pm$2.96} & 58.46{\scriptsize$\pm$1.80} & 66.94{\scriptsize$\pm$\cellcolor{lightpink}{1.61}}
& 92.62{\scriptsize$\pm$1.16} & 71.28{\scriptsize$\pm$0.81} & 80.55{\scriptsize$\pm$\cellcolor{lightpink}{0.28}}
& 71.74{\scriptsize$\pm$1.02} & 55.29{\scriptsize$\pm$1.04} & 62.45{\scriptsize$\pm$\cellcolor{lightpink}{1.05}}
& 64.05{\scriptsize$\pm$3.77} & 27.65{\scriptsize$\pm$0.43} & 38.61{\scriptsize$\pm$\cellcolor{lightpink}{1.10}}\\

pFedMoAP
& 94.82{\scriptsize$\pm$0.09} & 67.21{\scriptsize$\pm$0.00} & 78.66{\scriptsize$\pm$\cellcolor{lightpink}{0.03}}
& 98.66{\scriptsize$\pm$0.11} & 84.78{\scriptsize$\pm$0.00} & 91.20{\scriptsize$\pm$\cellcolor{lightpink}{0.05}}
& 90.56{\scriptsize$\pm$0.63} & 68.44{\scriptsize$\pm$0.00} & 77.96{\scriptsize$\pm$\cellcolor{lightpink}{0.23}}
& 98.46{\scriptsize$\pm$0.66} & 71.43{\scriptsize$\pm$0.00} & 82.80{\scriptsize$\pm$\cellcolor{lightpink}{0.24}}
& 90.83{\scriptsize$\pm$0.00} & 62.50{\scriptsize$\pm$0.00} & 74.05{\scriptsize$\pm$\cellcolor{lightpink}{0.00}}
& 81.52{\scriptsize$\pm$1.10} & 18.53{\scriptsize$\pm$0.00} & 30.19{\scriptsize$\pm$\cellcolor{lightpink}{0.08}} \\

FedPHA
& \textbf{96.38{\scriptsize$\pm$0.56}} & 47.72{\scriptsize$\pm$2.62} & 63.78{\scriptsize$\pm$\cellcolor{lightpink}{2.22}}
& 98.86{\scriptsize$\pm$0.20} & 49.07{\scriptsize$\pm$2.02} & 65.56{\scriptsize$\pm$\cellcolor{lightpink}{1.76}}
& \textbf{94.59{\scriptsize$\pm$0.18}} & 39.27{\scriptsize$\pm$0.35} & 55.50{\scriptsize$\pm$\cellcolor{lightpink}{0.38}}
& 98.83{\scriptsize$\pm$0.57} & 33.60{\scriptsize$\pm$0.66} & 50.15{\scriptsize$\pm$\cellcolor{lightpink}{0.66}}
& \textbf{91.68{\scriptsize$\pm$0.02}} & 36.76{\scriptsize$\pm$0.36} & 52.48{\scriptsize$\pm$\cellcolor{lightpink}{0.37}}
& \textbf{83.14{\scriptsize$\pm$0.65}} & 7.11{\scriptsize$\pm$1.30} & 13.06{\scriptsize$\pm$\cellcolor{lightpink}{2.20}}\\

FedOTP
& 89.03{\scriptsize$\pm$2.76} & 23.42{\scriptsize$\pm$2.80} & 36.95{\scriptsize$\pm$\cellcolor{lightpink}{3.18}}
& 96.83{\scriptsize$\pm$0.69} & 39.18{\scriptsize$\pm$6.72} & 55.41{\scriptsize$\pm$\cellcolor{lightpink}{6.99}}
& 74.52{\scriptsize$\pm$0.52} & 16.43{\scriptsize$\pm$1.26} & 26.91{\scriptsize$\pm$\cellcolor{lightpink}{1.73}}
& 92.48{\scriptsize$\pm$0.83} & 32.76{\scriptsize$\pm$3.04} & 48.32{\scriptsize$\pm$\cellcolor{lightpink}{3.38}}
& 84.50{\scriptsize$\pm$0.47} & 12.47{\scriptsize$\pm$0.61} & 21.72{\scriptsize$\pm$\cellcolor{lightpink}{0.94}}
& 68.05{\scriptsize$\pm$2.58} & 5.42{\scriptsize$\pm$1.65} & 9.99{\scriptsize$\pm$\cellcolor{lightpink}{2.88}} \\

PromptFL
& 81.25{\scriptsize$\pm$0.61} & 72.15{\scriptsize$\pm$2.31} & 76.40{\scriptsize$\pm$\cellcolor{lightpink}{1.08}}
& 95.84{\scriptsize$\pm$0.15} & 95.34{\scriptsize$\pm$1.68} & 95.58{\scriptsize$\pm$\cellcolor{lightpink}{0.92}}
& 70.92{\scriptsize$\pm$1.73} & 71.61{\scriptsize$\pm$1.37} & 71.25{\scriptsize$\pm$\cellcolor{lightpink}{1.33}}
& 94.68{\scriptsize$\pm$0.35} & 68.23{\scriptsize$\pm$2.08} & 79.29{\scriptsize$\pm$\cellcolor{lightpink}{1.42}}
& 75.37{\scriptsize$\pm$0.09} & 69.33{\scriptsize$\pm$0.04} & 72.22{\scriptsize$\pm$\cellcolor{lightpink}{0.06}}
& 28.18{\scriptsize$\pm$9.06} & 20.54{\scriptsize$\pm$10.59} & 23.38{\scriptsize$\pm$\cellcolor{lightpink}{10.48}} \\

FedPGP
& 79.50{\scriptsize$\pm$2.05} & 70.87{\scriptsize$\pm$0.05} & 74.92{\scriptsize$\pm$\cellcolor{lightpink}{0.88}}
& 95.75{\scriptsize$\pm$0.29} & 96.51{\scriptsize$\pm$1.10} & 96.13{\scriptsize$\pm$\cellcolor{lightpink}{0.69}}
& 77.31{\scriptsize$\pm$0.04} & 68.15{\scriptsize$\pm$0.64} & 72.44{\scriptsize$\pm$\cellcolor{lightpink}{0.38}}
& 91.11{\scriptsize$\pm$1.57} & 72.36{\scriptsize$\pm$0.18} & 80.66{\scriptsize$\pm$\cellcolor{lightpink}{0.73}}
& 87.05{\scriptsize$\pm$0.10} & 46.79{\scriptsize$\pm$0.08} & 60.86{\scriptsize$\pm$\cellcolor{lightpink}{0.11}}
& 40.39{\scriptsize$\pm$2.37} & 26.65{\scriptsize$\pm$3.00} & 31.89{\scriptsize$\pm$\cellcolor{lightpink}{1.43}} \\
PromptFolio
& 88.63{\scriptsize$\pm$1.07} & 53.14{\scriptsize$\pm$3.92} & 66.32{\scriptsize$\pm$\cellcolor{lightpink}{2.77}}
 & 96.74{\scriptsize$\pm$0.03} & 71.90{\scriptsize$\pm$4.39} & 82.41{\scriptsize$\pm$\cellcolor{lightpink}{2.90}}
 & 75.55{\scriptsize$\pm$2.35} & 54.88{\scriptsize$\pm$5.78} & 63.49{\scriptsize$\pm$\cellcolor{lightpink}{4.72}}
 & 96.11{\scriptsize$\pm$0.39} & 27.70{\scriptsize$\pm$6.70} & 42.55{\scriptsize$\pm$\cellcolor{lightpink}{8.06}}
 & 87.19{\scriptsize$\pm$0.32} & 50.86{\scriptsize$\pm$5.70} & 63.00{\scriptsize$\pm$\cellcolor{lightpink}{7.46}}
 & 53.79{\scriptsize$\pm$0.07} & 7.74{\scriptsize$\pm$0.42} & 13.53{\scriptsize$\pm$\cellcolor{lightpink}{0.64}}\\
\midrule

BLEND
& 95.95{\scriptsize$\pm$0.18} & \textbf{78.20{\scriptsize$\pm$0.14}} & \textbf{86.17{\scriptsize$\pm$\cellcolor{lightpink}{0.03}}}
& \textbf{99.03{\scriptsize$\pm$0.03}} & 96.74{\scriptsize$\pm$0.08} & \textbf{97.87{\scriptsize$\pm$\cellcolor{lightpink}{0.04}}}
& 93.87{\scriptsize$\pm$0.28} & 74.34{\scriptsize$\pm$0.10} & \textbf{82.97{\scriptsize$\pm$\cellcolor{lightpink}{0.09}}}
& \textbf{98.97{\scriptsize$\pm$0.08}} & 76.49{\scriptsize$\pm$0.19} & \textbf{86.29{\scriptsize$\pm$\cellcolor{lightpink}{0.15}}}
& 90.56{\scriptsize$\pm$0.04} & \textbf{70.24{\scriptsize$\pm$0.21}} & \textbf{79.12{\scriptsize$\pm$\cellcolor{lightpink}{0.12}}}
& 73.34{\scriptsize$\pm$0.31} & \textbf{35.36{\scriptsize$\pm$0.26}} & \textbf{47.71{\scriptsize$\pm$\cellcolor{lightpink}{0.17}}} \\
\midrule
$\Delta$ ($p-$value)
& & & \cellcolor{lightpink}{\textcolor{darkgreen}{\textbf{+4.54}}} (0.021)
&  & & \cellcolor{lightpink}{\textcolor{darkgreen}{\textbf{+1.74}}} (0.007)
& &  & \cellcolor{lightpink}{\textcolor{darkgreen}{\textbf{+5.01}}} (0.012)
& & & \cellcolor{lightpink}{\textcolor{darkgreen}{\textbf{+3.49}}} (0.004)
& &  & \cellcolor{lightpink}{\textcolor{darkgreen}{\textbf{+5.07}}} (0.001)
& & & \cellcolor{lightpink}{\textcolor{darkgreen}{\textbf{+9.10}}} (0.003) \\
\hline
\hline

& \multicolumn{3}{c|}{\cellcolor{gray!20}{Food101}}
& \multicolumn{3}{c|}{\cellcolor{gray!20}{DTD}}
& \multicolumn{3}{c|}{\cellcolor{gray!20}{EuroSAT}}
& \multicolumn{3}{c|}{\cellcolor{gray!20}{Caltech}}
& \multicolumn{3}{c|}{\textbf{\cellcolor{gray!20}{Cross-Dataset Avg. ($\uparrow$)}}}
& \multicolumn{3}{c}{\textbf{\cellcolor{gray!20}{Cross-Dataset  Std. ($\downarrow$)}}} \\
\cmidrule(lr){2-4}\cmidrule(lr){5-7}\cmidrule(lr){8-10}\cmidrule(lr){11-13}\cmidrule(lr){14-16}\cmidrule(lr){17-19}
& Per.& Gen. & \cellcolor{lightpink}{H-Mean} & Per.& Gen. & \cellcolor{lightpink}{H-Mean} & Per.& Gen. & \cellcolor{lightpink}{H-Mean} &Per.& Gen. & \cellcolor{lightpink}{H-Mean} & Per.& Gen. & \cellcolor{lightpink}{H-Mean} & Per.& Gen. & \cellcolor{lightpink}{H-Mean} \\
\midrule

Zero-shot CLIP
& 89.44{\scriptsize$\pm$0.00} & 90.67{\scriptsize$\pm$0.00} & 90.05{\scriptsize$\pm$\cellcolor{lightpink}{0.00}}
& 54.75{\scriptsize$\pm$0.00} & 58.94{\scriptsize$\pm$0.00} & 56.76{\scriptsize$\pm$\cellcolor{lightpink}{0.00}}
& 50.21{\scriptsize$\pm$0.00} & 69.87{\scriptsize$\pm$0.00} & 58.43{\scriptsize$\pm$\cellcolor{lightpink}{0.00}}
& 97.16{\scriptsize$\pm$0.00} & 94.21{\scriptsize$\pm$0.00} & 95.66{\scriptsize$\pm$\cellcolor{lightpink}{0.00}}
& 68.22{\scriptsize$\pm$0.00} & 73.85{\scriptsize$\pm$0.00} & 70.74{\scriptsize$\pm$\cellcolor{lightpink}{0.00}}
& 19.86{\scriptsize$\pm$0.00} & 17.81{\scriptsize$\pm$0.00} & 18.72{\scriptsize$\pm$\cellcolor{lightpink}{0.00}}\\

MMA+FedAvg
& 89.56{\scriptsize$\pm$0.31} & \textbf{91.45{\scriptsize$\pm$0.06}} & 90.50{\scriptsize$\pm$\cellcolor{lightpink}{0.17}}
& 72.85{\scriptsize$\pm$1.62} & \textbf{64.94{\scriptsize$\pm$0.80}} & 68.65{\scriptsize$\pm$\cellcolor{lightpink}{0.51}}
& 93.05{\scriptsize$\pm$0.84} & 72.08{\scriptsize$\pm$5.64} & 81.10{\scriptsize$\pm$\cellcolor{lightpink}{3.34}}
& 95.95{\scriptsize$\pm$0.10} & 94.40{\scriptsize$\pm$0.22} & 95.17{\scriptsize$\pm$\cellcolor{lightpink}{0.08}}
& 79.92{\scriptsize$\pm$0.66} & 74.70{\scriptsize$\pm$1.12} & 76.99{\scriptsize$\pm$\cellcolor{lightpink}{0.73}}
& 18.17{\scriptsize$\pm$0.45} & 17.04{\scriptsize$\pm$1.58} & 17.05{\scriptsize$\pm$\cellcolor{lightpink}{0.92}}\\

pFedMMA
& \textbf{97.45{\scriptsize$\pm$0.19}} & 88.81{\scriptsize$\pm$0.34} & 92.93{\scriptsize$\pm$\cellcolor{lightpink}{0.25}}
& 85.46{\scriptsize$\pm$1.87} & 59.55{\scriptsize$\pm$0.63} & 70.18{\scriptsize$\pm$\cellcolor{lightpink}{0.35}}
& 79.53{\scriptsize$\pm$11.47} & 50.52{\scriptsize$\pm$6.17} & 61.69{\scriptsize$\pm$\cellcolor{lightpink}{7.79}}
& 99.45{\scriptsize$\pm$0.09} & 93.67{\scriptsize$\pm$0.11} & 96.48{\scriptsize$\pm$\cellcolor{lightpink}{0.10}}
& 85.71{\scriptsize$\pm$2.51} & 67.15{\scriptsize$\pm$1.21} & 74.54{\scriptsize$\pm$\cellcolor{lightpink}{1.39}}
& 11.31{\scriptsize$\pm$3.19} & 20.25{\scriptsize$\pm$1.71} & 17.20{\scriptsize$\pm$\cellcolor{lightpink}{2.18}} \\

pFedMoAP
& 96.58{\scriptsize$\pm$0.17} & 86.62{\scriptsize$\pm$0.00} & 91.33{\scriptsize$\pm$\cellcolor{lightpink}{0.08}}
& 96.21{\scriptsize$\pm$0.32} & 47.66{\scriptsize$\pm$0.00} & 63.74{\scriptsize$\pm$\cellcolor{lightpink}{0.07}}
& 96.40{\scriptsize$\pm$1.29} & 63.41{\scriptsize$\pm$0.00} & 76.50{\scriptsize$\pm$\cellcolor{lightpink}{0.41}}
& 99.25{\scriptsize$\pm$0.13} & 93.64{\scriptsize$\pm$0.00} & 96.36{\scriptsize$\pm$\cellcolor{lightpink}{0.06}}
& 94.33{\scriptsize$\pm$0.45} & 66.42{\scriptsize$\pm$0.00} & 76.28{\scriptsize$\pm$\cellcolor{lightpink}{0.12}}
& 5.13{\scriptsize$\pm$0.43} & 20.46{\scriptsize$\pm$0.00} & 17.86{\scriptsize$\pm$\cellcolor{lightpink}{0.12}}\\

FedPHA
& \textbf{97.45{\scriptsize$\pm$0.08}} & 50.01{\scriptsize$\pm$0.67} & 66.10{\scriptsize$\pm$\cellcolor{lightpink}{0.57}}
& \textbf{97.26{\scriptsize$\pm$0.12}} & 27.78{\scriptsize$\pm$1.74} & 43.19{\scriptsize$\pm$\cellcolor{lightpink}{2.09}}
& 99.86{\scriptsize$\pm$0.14} & 31.07{\scriptsize$\pm$1.62} & 47.37{\scriptsize$\pm$\cellcolor{lightpink}{1.91}}
& \textbf{99.72{\scriptsize$\pm$0.02}} & 77.24{\scriptsize$\pm$3.31} & 87.01{\scriptsize$\pm$\cellcolor{lightpink}{2.11}}
& \textbf{95.78{\scriptsize$\pm$0.25}} & 39.96{\scriptsize$\pm$1.47} & 54.42{\scriptsize$\pm$\cellcolor{lightpink}{1.43}}
& \textbf{4.84{\scriptsize$\pm$0.23}} & 17.30{\scriptsize$\pm$0.94} & 18.22{\scriptsize$\pm$\cellcolor{lightpink}{0.78}} \\

FedOTP
& 91.31{\scriptsize$\pm$0.90} & 38.40{\scriptsize$\pm$1.38} & 54.05{\scriptsize$\pm$\cellcolor{lightpink}{1.48}}
& 96.03{\scriptsize$\pm$0.29} & 23.42{\scriptsize$\pm$0.72} & 37.66{\scriptsize$\pm$\cellcolor{lightpink}{0.95}}
& \textbf{100.00{\scriptsize$\pm$0.00}} & 31.36{\scriptsize$\pm$1.86} & 47.72{\scriptsize$\pm$\cellcolor{lightpink}{2.16}}
& 96.67{\scriptsize$\pm$0.23} & 63.29{\scriptsize$\pm$4.86} & 76.39{\scriptsize$\pm$\cellcolor{lightpink}{3.47}}
& 88.94{\scriptsize$\pm$0.93} & 28.62{\scriptsize$\pm$2.49} & 41.51{\scriptsize$\pm$\cellcolor{lightpink}{2.72}}
& 9.88{\scriptsize$\pm$0.91} & \textbf{15.61{\scriptsize$\pm$1.85}} & 18.13{\scriptsize$\pm$\cellcolor{lightpink}{1.69}} \\

PromptFL
& 90.06{\scriptsize$\pm$0.25} & 91.04{\scriptsize$\pm$0.67} & 90.55{\scriptsize$\pm$\cellcolor{lightpink}{0.35}}
& 79.45{\scriptsize$\pm$0.36} & 50.24{\scriptsize$\pm$2.94} & 61.51{\scriptsize$\pm$\cellcolor{lightpink}{2.33}}
& 88.32{\scriptsize$\pm$0.74} & 68.45{\scriptsize$\pm$2.98} & 77.09{\scriptsize$\pm$\cellcolor{lightpink}{1.92}}
& 97.14{\scriptsize$\pm$0.07} & 93.74{\scriptsize$\pm$0.73} & 95.41{\scriptsize$\pm$\cellcolor{lightpink}{0.36}}
& 80.12{\scriptsize$\pm$1.34} & 70.07{\scriptsize$\pm$2.54} & 74.27{\scriptsize$\pm$\cellcolor{lightpink}{2.03}}
& 19.29{\scriptsize$\pm$2.61} & 21.21{\scriptsize$\pm$2.84} & 19.92{\scriptsize$\pm$\cellcolor{lightpink}{2.90}} \\

FedPGP
& 90.47{\scriptsize$\pm$0.06} & 91.40{\scriptsize$\pm$0.01} & 90.93{\scriptsize$\pm$\cellcolor{lightpink}{0.03}}
& 76.60{\scriptsize$\pm$3.49} & 50.51{\scriptsize$\pm$1.86} & 60.78{\scriptsize$\pm$\cellcolor{lightpink}{0.25}}
& 85.60{\scriptsize$\pm$2.48} & \textbf{75.64{\scriptsize$\pm$7.08}} & 80.23{\scriptsize$\pm$\cellcolor{lightpink}{5.09}}
& 97.10{\scriptsize$\pm$0.33} & \textbf{94.42{\scriptsize$\pm$1.42}} & 95.73{\scriptsize$\pm$\cellcolor{lightpink}{0.57}}
& 82.09{\scriptsize$\pm$1.27} & 69.33{\scriptsize$\pm$1.53} & 74.46{\scriptsize$\pm$\cellcolor{lightpink}{1.01}}
& 15.49{\scriptsize$\pm$1.21} & 21.43{\scriptsize$\pm$2.07} & 18.58{\scriptsize$\pm$\cellcolor{lightpink}{1.42}} \\

PromptFolio
& 93.66{\scriptsize$\pm$0.32} & 52.98{\scriptsize$\pm$6.14} & 67.48{\scriptsize$\pm$\cellcolor{lightpink}{5.10}}
& 89.18{\scriptsize$\pm$0.35} & 42.21{\scriptsize$\pm$4.35} & 57.16{\scriptsize$\pm$\cellcolor{lightpink}{3.94}}
& 91.86{\scriptsize$\pm$0.40} & 64.94{\scriptsize$\pm$6.65} & 75.87{\scriptsize$\pm$\cellcolor{lightpink}{4.44}}
& 98.69{\scriptsize$\pm$0.31} & 82.15{\scriptsize$\pm$5.08} & 89.59{\scriptsize$\pm$\cellcolor{lightpink}{3.15}}
& 87.14{\scriptsize$\pm$0.59} & 50.85{\scriptsize$\pm$4.83} & 62.14{\scriptsize$\pm$\cellcolor{lightpink}{3.97}} 
& 13.43{\scriptsize$\pm$0.68} & 21.39{\scriptsize$\pm$1.83} & 21.53{\scriptsize$\pm$\cellcolor{lightpink}{1.92}}\\

\midrule
BLEND
& 96.58{\scriptsize$\pm$0.19} & 91.32{\scriptsize$\pm$0.11} & \textbf{93.88{\scriptsize$\pm$\cellcolor{lightpink}{0.06}}}
& 95.49{\scriptsize$\pm$0.57} & 61.50{\scriptsize$\pm$0.32} & \textbf{74.81{\scriptsize$\pm$\cellcolor{lightpink}{0.17}}}
& 98.90{\scriptsize$\pm$0.34} & 75.01{\scriptsize$\pm$1.57} & \textbf{85.31{\scriptsize$\pm$\cellcolor{lightpink}{1.09}}}
& 99.61{\scriptsize$\pm$0.06} & 94.17{\scriptsize$\pm$0.04} & \textbf{96.82{\scriptsize$\pm$\cellcolor{lightpink}{0.05}}}
& 94.23{\scriptsize$\pm$0.21} & \textbf{75.34{\scriptsize$\pm$0.30}} & \textbf{83.10{\scriptsize$\pm$\cellcolor{lightpink}{0.20}}}
& 7.46{\scriptsize$\pm$0.16} & 17.01{\scriptsize$\pm$0.43} & \textbf{13.72{\scriptsize$\pm$\cellcolor{lightpink}{0.30}}}\\
\midrule 
$\Delta$ ($p-$value)
&  &  & \cellcolor{lightpink}{\textcolor{darkgreen}{\textbf{+2.55}}} (0.002)
&& & \cellcolor{lightpink}{\textcolor{darkgreen}{\textbf{+4.63}}} (0.004)
&   &  & \cellcolor{lightpink}{\textcolor{darkgreen}{\textbf{+4.21}}} (0.032)
&  && \cellcolor{lightpink}{\textcolor{darkgreen}{\textbf{+0.34}}} (0.011) 
& &  & \cellcolor{lightpink}{\textcolor{darkgreen}{\textbf{+6.11}}}
&  & &\cellcolor{lightpink}{\textcolor{darkgreen}{\textbf{-3.33}}} \\
\bottomrule
\end{tabular}
}
\end{table*}

\clearpage

\clearpage
\section{Large-Scale Setting with Real-World Data Distribution}\label{ap:real_large}

We further evaluate BLEND under realistic, large-scale client data heterogeneity to better reflect practical deployment scenarios. To this end, we use the real-world iNaturalist-User-120k dataset~\cite{hsu2020federated}, and conduct experiments with $N=100$ and $N=200$ sampled clients, where each client follows naturally occurring data distributions reported by the dataset. We use the same network and learning parameters (e.g., $\lambda$ and $\gamma$) as the simulations presented in the main text.

The results are summarized in Table~\ref{tab:large100} and Table~\ref{tab:large200}. For $N=100$ clients (Table~\ref{tab:large100}), BLEND achieves the highest H-Mean of $63.37$, outperforming all baselines by a clear margin. This improvement stems from simultaneously maintaining strong personalization (Base = $86.96$) and better generalization to unseen classes (New = $49.84$). In contrast, while some methods such as pFedMOAP achieve high personalization, they suffer from significantly lower generalization, leading to inferior H-Mean. 

As the number of clients increases to $N=200$ (Table~\ref{tab:large200}), all methods experience a drop in performance due to increased data heterogeneity and reduced per-client data. Nevertheless, BLEND achieve the best performance, with an H-Mean of $57.39$, maintaining a consistent gap over baselines. 

Overall, these results demonstrate that BLEND scales effectively to large client populations and remains robust under realistic, highly non-IID data distributions, consistently achieving a superior balance between personalization and generalization compared to existing methods.

\begin{table*}[h]
\centering
\begin{minipage}{0.32\linewidth}
\centering
\scriptsize
\captionof{table}{$N=100$ clients}
\begin{tabular}{l|ccc}
\hline
\rowcolor{gray!20} Method & Per. & Gen. & H-Mean \\
\hline
FedOTP & 63.31 & 11.46 & \cellcolor{lightpink}19.32 \\
FedPGP & 81.44 & 36.76 & \cellcolor{lightpink}50.64 \\
FedPHA & 81.72 & 21.75 & \cellcolor{lightpink}34.33 \\
MMA+FedAVG & 69.99 & 50.13 & \cellcolor{lightpink}58.41 \\
pFedMMA & 67.06 & 48.15 & \cellcolor{lightpink}56.05 \\
pFedMOAP & 89.55 & 37.89 & \cellcolor{lightpink}53.25 \\
PromptFL & 77.78 & 43.96 & \cellcolor{lightpink}56.17 \\
PromptFolio & 82.36 & 32.96 & \cellcolor{lightpink}47.07 \\
\hline
BLEND & 86.96 & 49.84 & \cellcolor{lightpink}\textbf{63.37} \\
\hline
\end{tabular}
\label{tab:large100}
\end{minipage}
\begin{minipage}{0.32\linewidth}
\centering
\scriptsize
\captionof{table}{$N=200$ clients}
\begin{tabular}{l|ccc}
\hline
\rowcolor{gray!20} Method & Per. & Gen. & H-Mean \\
\hline
FedOTP & 68.35 & 10.72 & \cellcolor{lightpink}18.54 \\
FedPGP & 82.03 & 34.68 & \cellcolor{lightpink}48.63 \\
FedPHA & 86.78 & 17.11 & \cellcolor{lightpink}28.58 \\
MMA+FedAVG & 64.06 & 43.50 & \cellcolor{lightpink}51.82 \\
pFedMMA & 56.48 & 42.38 & \cellcolor{lightpink}48.41 \\
pFedMOAP & 92.43 & 22.49 & \cellcolor{lightpink}36.18 \\
PromptFL & 76.75 & 34.15 & \cellcolor{lightpink}47.23 \\
PromptFolio & 82.03 & 25.12 & \cellcolor{lightpink}38.46 \\
\hline
BLEND & 85.62 & 43.17 & \cellcolor{lightpink}\textbf{57.39} \\
\hline
\end{tabular}
\label{tab:large200}
\end{minipage}
\end{table*}

\section{Different Few-shot Settings}\label{ap:few_shots}
To further explore the effect of the number of shots seen from each class during the model training across the clients, we performed an ablation study sweeping the number of shots on the values $\{2, 4, 8, 16, 32\}$. We have performed this experiment for all of the datasets mentioned in the paper and included the results in Figs.~\ref{fig:shots-UCF101}-\ref{fig:shots-ImageNet}. Also,  Fig.~\ref{fig:shots-avgAll} depicts the average performance across all the datasets.

Intuitively, as the number of shots increase, we would expect each participating client's local model to become more adapted to the local data distribution on that specific client, hence increasing personalization (this is reflected in  Fig.~\ref{fig:shots-avgAll}-left and in most of the datasets shown in Figs.~\ref{fig:shots-UCF101}-\ref{fig:shots-ImageNet}). Simultaneously, increasing the number of shots can result in the local models being more biased to their corresponding client's data distribution, hence damaging the generalization performance. This hypothesis is also confirmed in the generalization performance subplot of Fig.~\ref{fig:shots-avgAll}-middle and in most of the datasets shown in Figs.~\ref{fig:shots-UCF101}-\ref{fig:shots-ImageNet}. One key takeaway is that BLEND's average performance across the datasets is better than all other methods when H-Mean performance is considered (Fig.~\ref{fig:shots-avgAll}-right). Further, across all individual datasets,  BLEND's H-Mean performance is the best when the number of shots is less than or equal to 16, while being marginally outperformed in two datasets when the number of shots is $32$. This demonstrates the exceptional capability of BLEND in limited few-shot learning settings.


\begin{figure}[h]
    \centering
    \hspace{-14mm}
    \includegraphics[width=.9\linewidth]{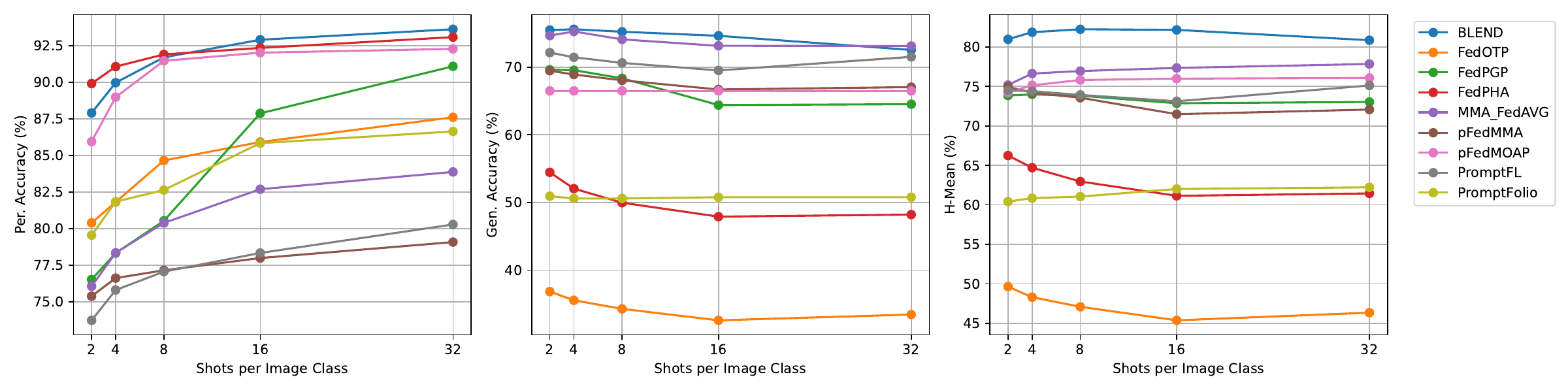}
    \caption{Personalization and generalization test accuracies alongside their harmonic mean (H-Mean) under changing number of shots per class \textbf{averaged across all datasets}.}
    \label{fig:shots-avgAll}
\end{figure}
\begin{figure}[h]
    \centering
    \includegraphics[width=.99\linewidth]{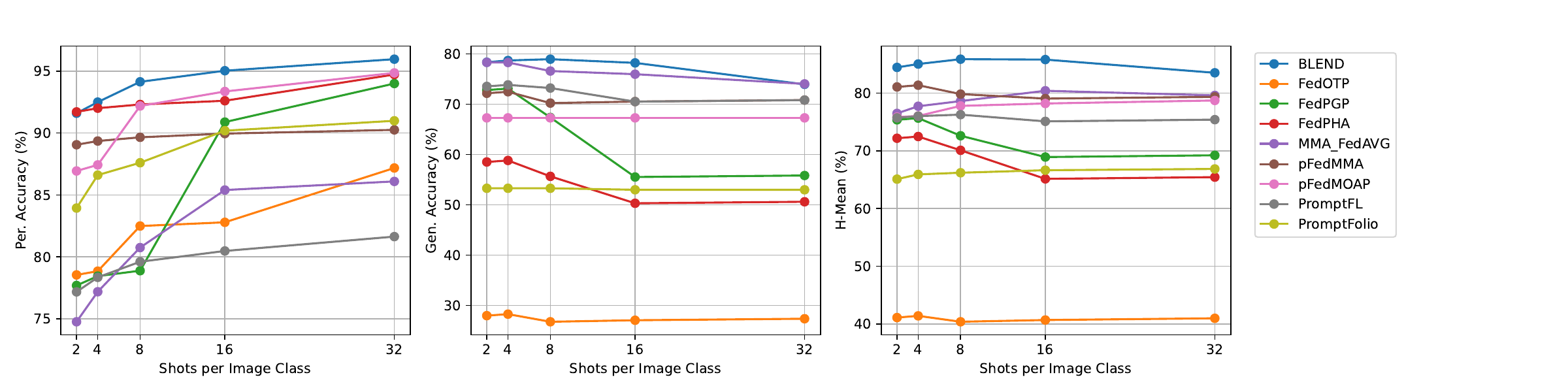}
    \caption{Personalization and generalization test accuracies alongside their harmonic mean (H-Mean) under changing number of shots per class for the \textbf{\textit{UCF101}} dataset.}
    \label{fig:shots-UCF101}
\end{figure}
\begin{figure}[h]
    \centering
    \includegraphics[width=.99\linewidth]{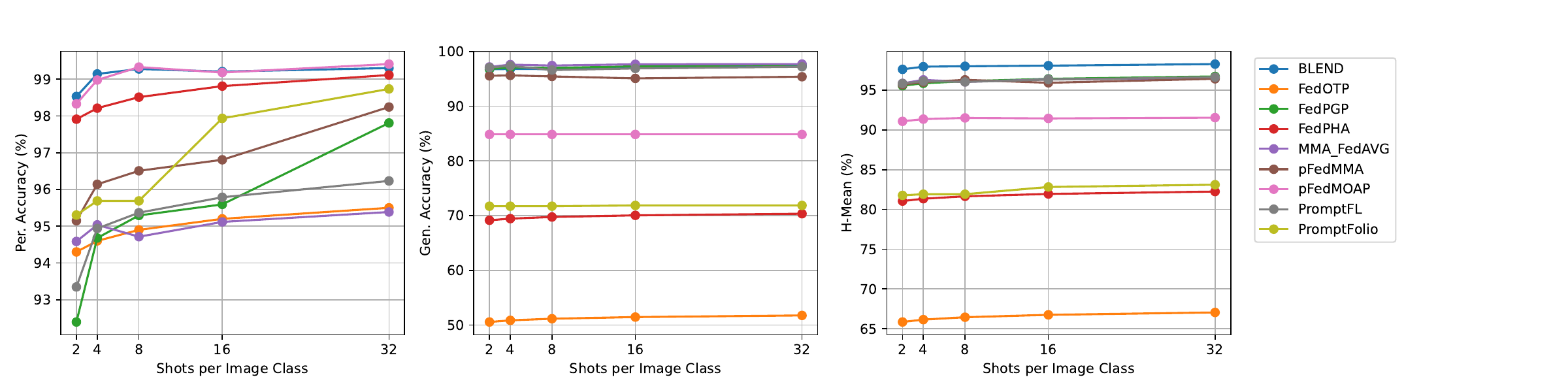}
    \caption{Personalization and generalization test accuracies alongside their harmonic mean (H-Mean) under changing number of shots per class for the \textbf{\textit{OxfordPets}} dataset.}
    \label{fig:shots-OxPets}
\end{figure}
\begin{figure}[h]
    \centering
    \includegraphics[width=.99\linewidth]{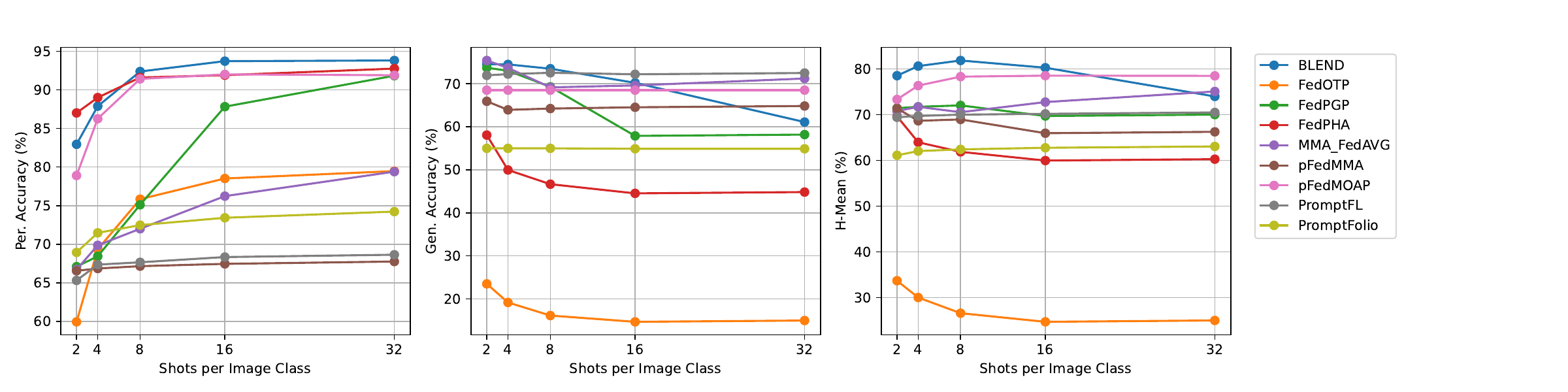}
    \caption{Personalization and generalization test accuracies alongside their harmonic mean (H-Mean) under changing number of shots per class for the \textbf{\textit{Stanford-Cars}} dataset.}
    \label{fig:shots-StanCards}
\end{figure}
\begin{figure}[h]
    \centering
    \includegraphics[width=.99\linewidth]{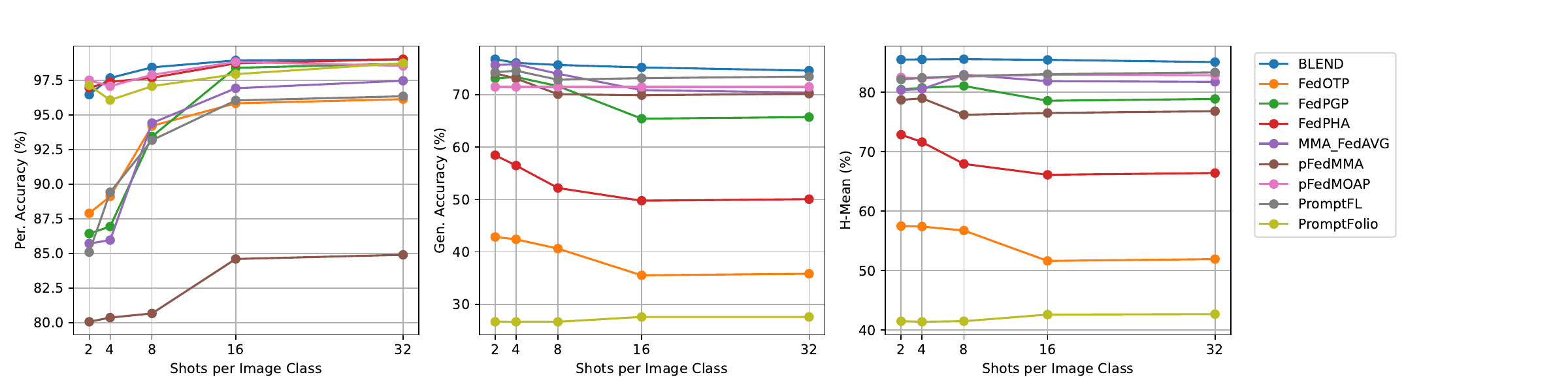}
    \caption{Personalization and generalization test accuracies alongside their harmonic mean (H-Mean) under changing number of shots per class for the \textbf{\textit{Flowers}} dataset.}
    \label{fig:shots-OxFlowers}
\end{figure}
\begin{figure}[h]
    \centering
    \includegraphics[width=.99\linewidth]{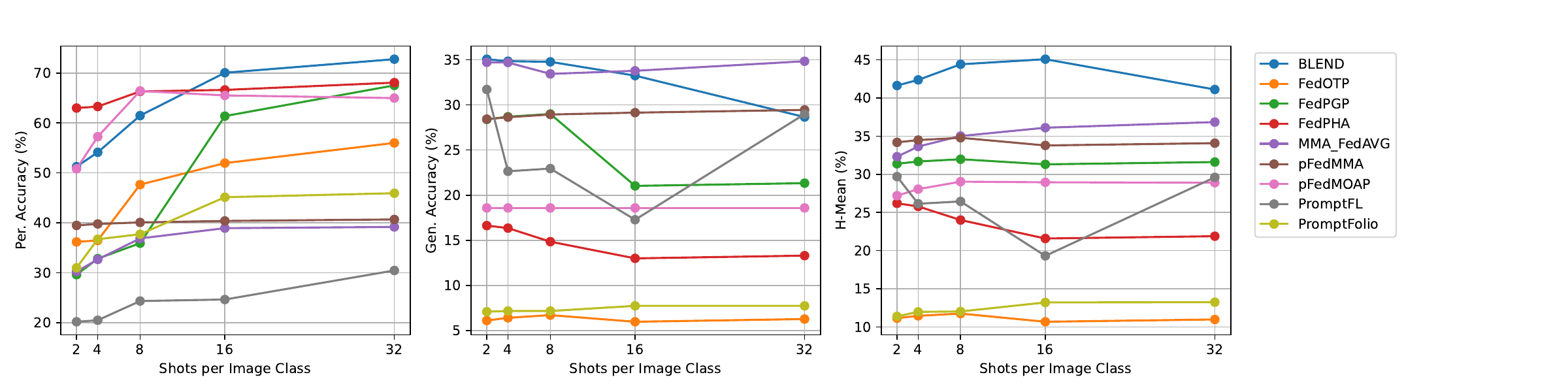}
    \caption{Personalization and generalization test accuracies alongside their harmonic mean (H-Mean) under changing number of shots per class for the \textbf{\textit{FGVC Aircraft}} dataset.}
    \label{fig:shots-fvgcair}
\end{figure}
\begin{figure}[h]
    \centering
    \includegraphics[width=.99\linewidth]{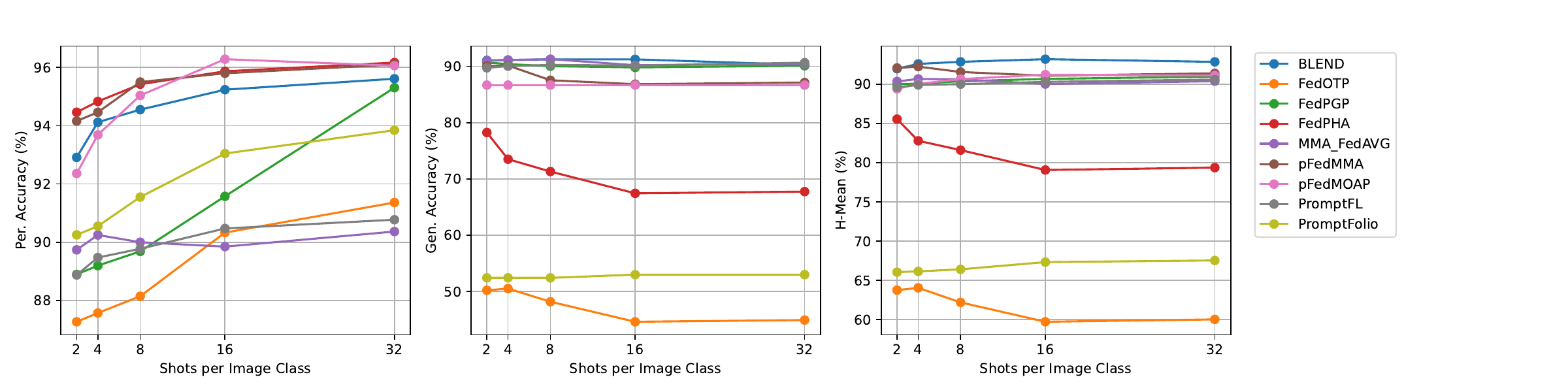}
    \caption{Personalization and generalization test accuracies alongside their harmonic mean (H-Mean) under changing number of shots per class for the \textbf{\textit{Food101}} dataset.}
    \label{fig:shots-Food101}
\end{figure}
\begin{figure}[h]
    \centering
    \includegraphics[width=.99\linewidth]{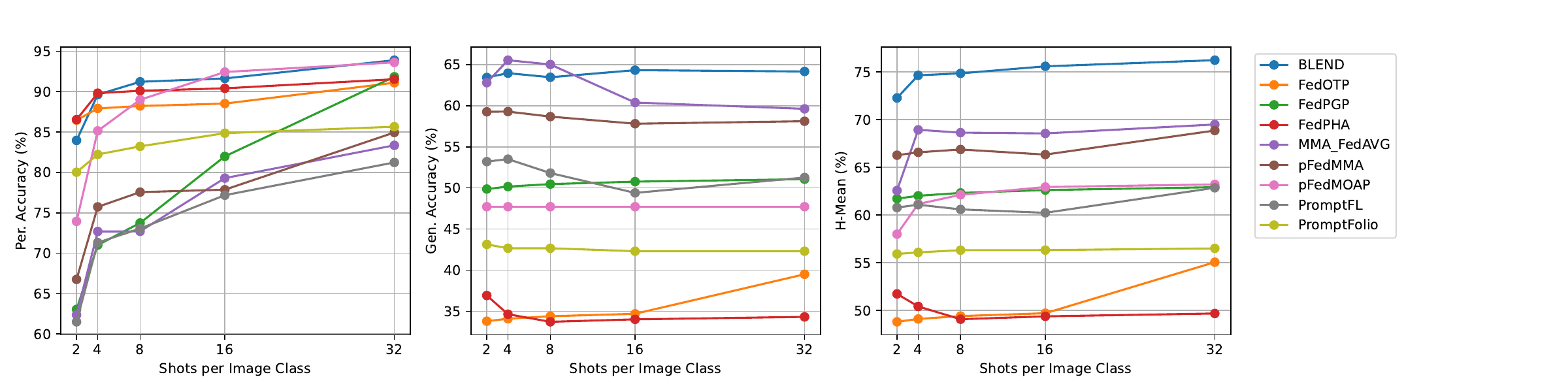}
    \caption{Personalization and generalization test accuracies alongside their harmonic mean (H-Mean) under changing number of shots per class for the \textbf{\textit{DTD}} dataset.}
    \label{fig:shots-DTD}
\end{figure}
\begin{figure}[h]
    \centering
    \includegraphics[width=.99\linewidth]{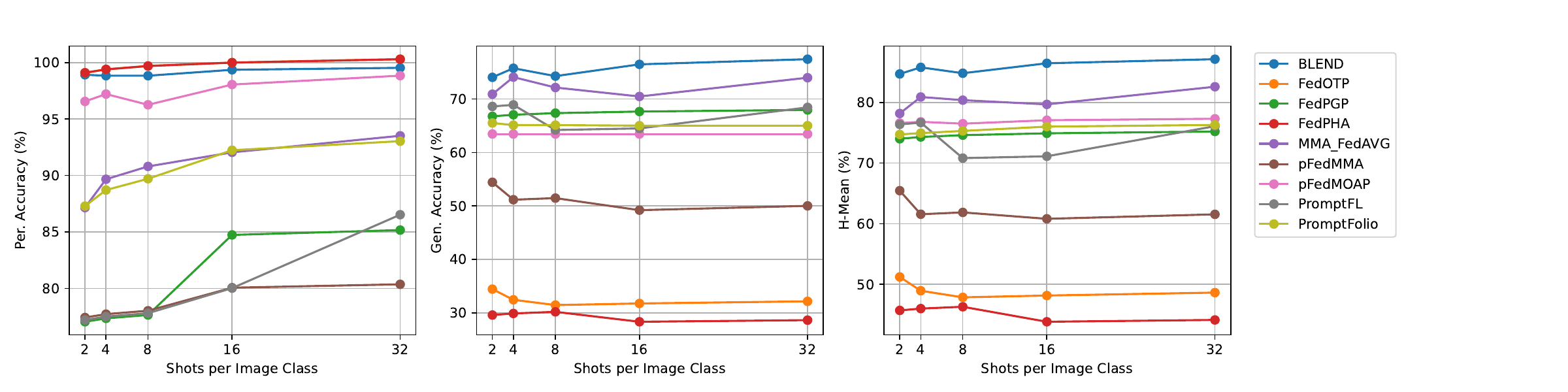}
    \caption{Personalization and generalization test accuracies alongside their harmonic mean (H-Mean) under changing number of shots per class for the \textbf{\textit{EuroSAT}} dataset.}
    \label{fig:shots-EuroSAT}
\end{figure}
\begin{figure}[h]
    \centering
    \includegraphics[width=.99\linewidth]{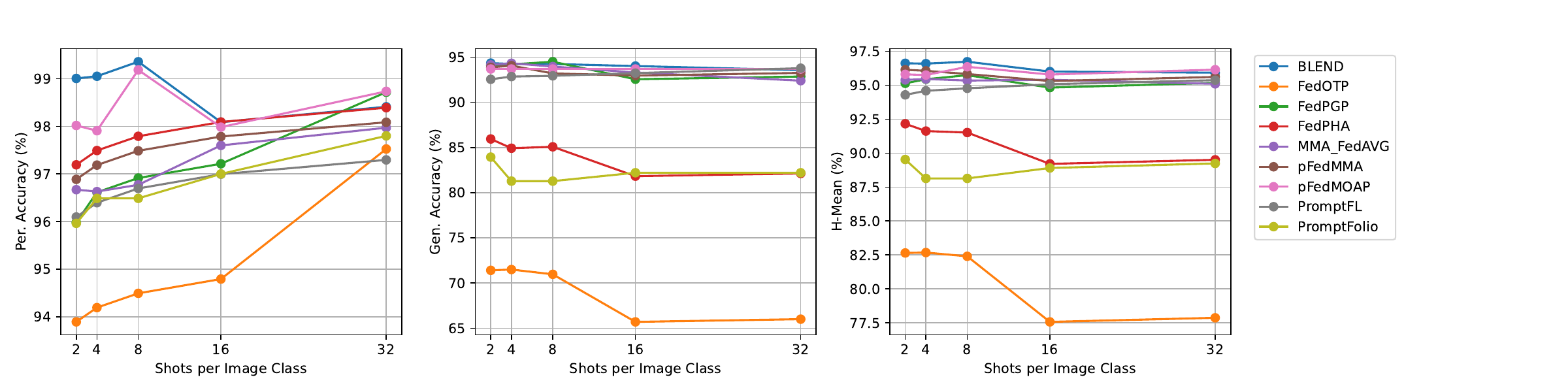}
    \caption{Personalization and generalization test accuracies alongside their harmonic mean (H-Mean) under changing number of shots per class for the \textbf{\textit{Caltech}} dataset.}
    \label{fig:shots-Caltech101}
\end{figure}
\begin{figure}[h]
    \centering
    \includegraphics[width=.99\linewidth]{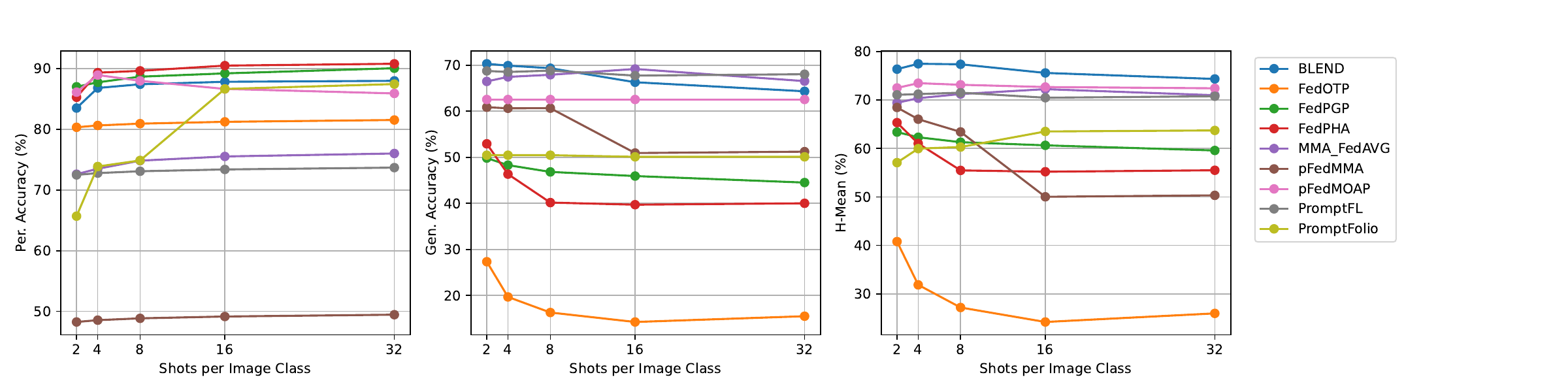}
    \caption{Personalization and generalization test accuracies alongside their harmonic mean (H-Mean) under changing number of shots per class for the \textbf{\textit{ImageNet}} dataset.}
    \label{fig:shots-ImageNet}
\end{figure}

\clearpage
\section{Different Bottleneck Sizes for Global Adapters}\label{ap:bottle_global}
We study the impact of global adapter bottleneck rank $r = r_x = r_w$ across all datasets by sweeping the rank $r\in\{\emptyset, 4, 8, 16, 32, 64, 128, 256, \text{Full-rank}\}$, where $\emptyset$ corresponds to the case where no adapter is used. We have summarized the result for each dataset in Tables \ref{tab:gabs-DTD}-\ref{tab:gabs-Caltech101}, while the average performance across all the datasets is shown in  Table~\ref{tab:gabs-avgAll}. 

 Based on the complexity of the dataset under test, the best personalization and generalization performance can be achieved by employing different global adapter bottleneck sizes. Taking Flowers (Table~\ref{tab:gabs-OxFlowers}) as an example, the best generalization performance occurs when no global adapter is employed while the best generalization performance is realized under full-rank bottleneck size for DTD dataset (Table~\ref{tab:gabs-DTD}). A similar phenomenon can be observed when focusing on the personalization performance. However, based on the average behavior observed in Table~\ref{tab:gabs-avgAll}, we can infer that the performance that maintains a balance between personalization and generalization is realized under $r=16$ which requires low computation and communication budget.
\begin{table*}[h]
\centering
\begin{minipage}{0.32\linewidth}
\centering
\scriptsize
\captionof{table}{\textbf{Avg across datasets.}}
\begin{tabular}{c|ccc}
\hline
\rowcolor{gray!20} Rank ($r$) & Per. & Gen. & H-Mean \\
\hline
No Adapter & 90.12 & 73.81 & \cellcolor{lightpink}{80.66} \\
4 & 92.25 & 74.69 & \cellcolor{lightpink}{82.05} \\
8 & 92.45 & \textbf{74.70} & \cellcolor{lightpink}{82.10} \\
16 & 92.91 & 74.62 & \cellcolor{lightpink}{\textbf{82.15}} \\
32 & \textbf{92.93} & 74.39 & \cellcolor{lightpink}{81.97} \\
64 & 92.91 & 74.09 & \cellcolor{lightpink}{81.74} \\
128 & 92.80 & 73.70 & \cellcolor{lightpink}{81.43} \\
256 & 92.68 & 73.47 & \cellcolor{lightpink}{81.20} \\
Full-rank & 92.48 & 73.63 & \cellcolor{lightpink}{81.26} \\
\hline
\end{tabular}
\label{tab:gabs-avgAll}
\end{minipage}
\hfill
\begin{minipage}{0.32\linewidth}
\centering
\scriptsize
\captionof{table}{DTD}
\begin{tabular}{c|ccc}
\hline
\rowcolor{gray!20} Rank ($r$) & Per. & Gen. & H-Mean \\
\hline
No Adapter & 85.93 & 59.66 & \cellcolor{lightpink}{70.42} \\
4 & 91.32 & 64.21 & \cellcolor{lightpink}{75.40} \\
8 & 91.46 & 64.26 & \cellcolor{lightpink}{75.49} \\
16 & 91.64 & 64.32 & \cellcolor{lightpink}{75.59} \\
32 & \textbf{91.92} & 65.12 & \cellcolor{lightpink}{76.23} \\
64 & 91.89 & 64.69 & \cellcolor{lightpink}{75.93} \\
128 & 91.51 & 64.98 & \cellcolor{lightpink}{76.00} \\
256 & 91.59 & 65.33 & \cellcolor{lightpink}{76.26} \\
Full-rank & 91.49 & \textbf{65.78} & \cellcolor{lightpink}{\textbf{76.53}} \\
\hline
\end{tabular}
\label{tab:gabs-DTD}
\end{minipage}
\hfill
\begin{minipage}{0.32\linewidth}
\centering
\scriptsize
\captionof{table}{EuroSAT}
\begin{tabular}{c|ccc}
\hline
\rowcolor{gray!20} Rank ($r$) & Per. & Gen. & H-Mean \\
\hline
No Adapter & 98.31 & 66.50 & \cellcolor{lightpink}{79.33} \\
4 & \textbf{99.54} & 72.16 & \cellcolor{lightpink}{83.66} \\
8 & 99.39 & 74.73 & \cellcolor{lightpink}{85.31} \\
16 & 99.36 & \textbf{76.50} & \cellcolor{lightpink}{\textbf{86.44}} \\
32 & 99.48 & 76.16 & \cellcolor{lightpink}{86.27} \\
64 & 99.34 & 75.42 & \cellcolor{lightpink}{85.74} \\
128 & 99.20 & 73.74 & \cellcolor{lightpink}{84.60} \\
256 & 99.02 & 72.14 & \cellcolor{lightpink}{83.47} \\
Full-rank & 98.59 & 71.04 & \cellcolor{lightpink}{82.57} \\
\hline
\end{tabular}
\label{tab:gabs-EuroSAT}
\end{minipage}
\end{table*}

\vspace{-2mm}
\begin{table*}[h]
\centering
\begin{minipage}{0.32\linewidth}
\centering
\scriptsize
\captionof{table}{FGVC Aircraft}
\begin{tabular}{c|ccc}
\hline
\rowcolor{gray!20} Rank ($r$) & Per. & Gen. & H-Mean \\
\hline
No Adapter & 58.17 & 32.92 & \cellcolor{lightpink}{42.05} \\
4 & 64.39 & \textbf{34.54} & \cellcolor{lightpink}{44.96} \\
8 & 66.51 & 33.71 & \cellcolor{lightpink}{44.74} \\
16 & 70.05 & 33.24 & \cellcolor{lightpink}{\textbf{45.09}} \\
32 & 71.04 & 32.65 & \cellcolor{lightpink}{44.74} \\
64 & 71.45 & 32.04 & \cellcolor{lightpink}{44.24} \\
128 & \textbf{72.47} & 31.83 & \cellcolor{lightpink}{44.23} \\
256 & 72.16 & 30.86 & \cellcolor{lightpink}{43.24} \\
Full-rank & 71.83 & 31.74 & \cellcolor{lightpink}{44.03} \\
\hline
\end{tabular}
\label{tab:gabs-fgvcair}
\end{minipage}
\hfill
\begin{minipage}{0.32\linewidth}
\centering
\scriptsize
\captionof{table}{Food101}
\begin{tabular}{c|ccc}
\hline
\rowcolor{gray!20} Rank ($r$) & Per. & Gen. & H-Mean \\
\hline
No Adapter & 95.24 & 90.51 & \cellcolor{lightpink}{92.82} \\
4 & \textbf{95.57} & \textbf{91.39} & \cellcolor{lightpink}{\textbf{93.43}} \\
8 & 95.44 & 91.34 & \cellcolor{lightpink}{93.34} \\
16 & 95.24 & 91.25 & \cellcolor{lightpink}{93.20} \\
32 & 95.10 & 91.02 & \cellcolor{lightpink}{93.02} \\
64 & 94.70 & 90.64 & \cellcolor{lightpink}{92.62} \\
128 & 94.29 & 90.33 & \cellcolor{lightpink}{92.27} \\
256 & 94.23 & 90.35 & \cellcolor{lightpink}{92.25} \\
Full-rank & 93.82 & 90.15 & \cellcolor{lightpink}{91.95} \\
\hline
\end{tabular}
\label{tab:gabs-Food101}
\end{minipage}
\hfill
\begin{minipage}{0.32\linewidth}
\centering
\scriptsize
\captionof{table}{Flowers}
\begin{tabular}{c|ccc}
\hline
\rowcolor{gray!20} Rank ($r$) & Per. & Gen. & H-Mean \\
\hline
No Adapter & 94.23 & \textbf{75.80} & \cellcolor{lightpink}{84.01} \\
4 & 98.99 & 74.92 & \cellcolor{lightpink}{85.29} \\
8 & 98.72 & 75.32 & \cellcolor{lightpink}{85.45} \\
16 & 98.94 & 75.21 & \cellcolor{lightpink}{85.46} \\
32 & 99.11 & 74.99 & \cellcolor{lightpink}{85.38} \\
64 & \textbf{99.33} & 75.09 & \cellcolor{lightpink}{\textbf{85.52}} \\
128 & 99.10 & 74.91 & \cellcolor{lightpink}{85.32} \\
256 & 99.11 & 74.07 & \cellcolor{lightpink}{84.78} \\
Full-rank & 99.08 & 74.33 & \cellcolor{lightpink}{84.94} \\
\hline
\end{tabular}
\label{tab:gabs-OxFlowers}
\end{minipage}
\end{table*}

\vspace{-2mm}
\begin{table*}[h]
\centering
\begin{minipage}{0.32\linewidth}
\centering
\scriptsize
\captionof{table}{OxfordPets}
\begin{tabular}{c|ccc}
\hline
\rowcolor{gray!20} Rank ($r$) & Per. & Gen. & H-Mean \\
\hline
No Adapter & 98.19 & 95.86 & \cellcolor{lightpink}{97.01} \\
4 & 99.14 & 96.79 & \cellcolor{lightpink}{97.95} \\
8 & 99.14 & 96.81 & \cellcolor{lightpink}{97.96} \\
16 & \textbf{99.21} & 96.99 & \cellcolor{lightpink}{98.09} \\
32 & 99.13 & 97.21 & \cellcolor{lightpink}{98.16} \\
64 & 98.98 & 97.21 & \cellcolor{lightpink}{98.09} \\
128 & 98.99 & 97.35 & \cellcolor{lightpink}{\textbf{98.16}} \\
256 & 98.84 & 97.35 &\cellcolor{lightpink}{ 98.09} \\
Full-rank & 98.85 & \textbf{97.47} & \cellcolor{lightpink}{98.15} \\
\hline
\end{tabular}
\label{tab:gabs-OxPets}
\end{minipage}
\hfill
\begin{minipage}{0.32\linewidth}
\centering
\scriptsize
\captionof{table}{Stanford Cars}
\begin{tabular}{c|ccc}
\hline
\rowcolor{gray!20} Rank ($r$) & Per. & Gen. & H-Mean \\
\hline
No Adapter & 90.88 & \textbf{73.39} & \cellcolor{lightpink}{81.21} \\
4 & 93.04 & 72.66 & \cellcolor{lightpink}{\textbf{81.60}} \\
8 & 93.48 & 71.50 & \cellcolor{lightpink}{81.03} \\
16 & \textbf{93.72} & 70.20 & \cellcolor{lightpink}{80.27} \\
32 & 93.58 & 68.14 & \cellcolor{lightpink}{78.86} \\
64 & 93.63 & 67.46 & \cellcolor{lightpink}{78.42} \\
128 & 93.20 & 66.56 & \cellcolor{lightpink}{77.66} \\
256 & 93.06 & 67.36 & \cellcolor{lightpink}{78.15} \\
Full-rank & 92.95 & 68.54 & \cellcolor{lightpink}{78.90} \\
\hline
\end{tabular}
\label{tab:gabs-StanCars}
\end{minipage}
\hfill
\begin{minipage}{0.32\linewidth}
\centering
\scriptsize
\captionof{table}{UCF101}
\begin{tabular}{c|ccc}
\hline
\rowcolor{gray!20} Rank ($r$) & Per. & Gen. & H-Mean \\
\hline
No Adapter & 92.19 & 75.64 & \cellcolor{lightpink}{83.10} \\
4 & 94.63 & 78.61 & \cellcolor{lightpink}{85.88} \\
8 & 94.59 & \textbf{78.68} & \cellcolor{lightpink}{\textbf{85.90}} \\
16 & 95.03 & 78.20 & \cellcolor{lightpink}{85.80} \\
32 & 94.69 & 77.70 &\cellcolor{lightpink}{ 85.36} \\
64 & \textbf{95.21} & 77.59 & \cellcolor{lightpink}{85.50} \\
128 & 94.83 & 75.92 & \cellcolor{lightpink}{84.33} \\
256 & 94.93 & 76.16 & \cellcolor{lightpink}{84.52} \\
Full-rank & 94.43 & 75.76 & \cellcolor{lightpink}{84.07} \\
\hline
\end{tabular}
\label{tab:gabs-UCF101}
\end{minipage}
\end{table*}

\vspace{-2mm}
\begin{table*}[!h]
\centering
\begin{minipage}{0.32\linewidth}
\centering
\scriptsize
\captionof{table}{ImageNet}
\begin{tabular}{c|ccc}
\hline
\rowcolor{gray!20} Rank ($r$) & Per. & Gen. & H-Mean \\
\hline
4 & 87.81 & 67.28 & \cellcolor{lightpink}{\textbf{76.19}} \\
8 & 87.73 & 66.51 & \cellcolor{lightpink}{75.66} \\
16 & \textbf{87.83} & 66.31 & \cellcolor{lightpink}{75.56} \\
32 & 87.11 & 66.96 & \cellcolor{lightpink}{75.72} \\
64 & 86.53 & 66.79 & \cellcolor{lightpink}{75.39} \\
128 & 86.37 & 67.37 & \cellcolor{lightpink}{75.70} \\
256 & 85.85 & 67.37 & \cellcolor{lightpink}{75.50} \\
Full-rank & 85.81 & \textbf{67.47} & \cellcolor{lightpink}{75.54} \\
\hline
\end{tabular}
\label{tab:gabs-ImageNet}
\end{minipage}
\begin{minipage}{0.32\linewidth}
\centering
\scriptsize
\captionof{table}{Caltech}
\begin{tabular}{c|ccc}
\hline
\rowcolor{gray!20} Rank ($r$) & Per. & Gen. & H-Mean \\
\hline
No Adapter & 97.97 & 94.04 & \cellcolor{lightpink}{95.97} \\
4 & \textbf{98.12} & \textbf{94.32} & \cellcolor{lightpink}{\textbf{96.18}} \\
8 & 98.08 & 94.18 & \cellcolor{lightpink}{96.09} \\
16 & 98.09 & 94.01 & \cellcolor{lightpink}{96.01} \\
32 & 98.11 & 93.92 & \cellcolor{lightpink}{95.97} \\
64 & 98.07 & 93.98 & \cellcolor{lightpink}{95.98} \\
128 & 98.05 & 94.04 & \cellcolor{lightpink}{96.00} \\
256 & 97.96 & 93.72 & \cellcolor{lightpink}{95.79} \\
Full-rank & 97.96 & 94.02 & \cellcolor{lightpink}{95.95} \\
\hline
\end{tabular}
\label{tab:gabs-Caltech101}
\end{minipage}
\end{table*}
 
\clearpage
\section{Different Bottleneck Sizes for Personalized Projection Adapter}\label{ap:bottle_per}
To study the effect of the personalized projection adapter's bottleneck size, we have conducted a separate set of experiments, sweeping the bottleneck rank $r_p$ over the range $r_p\in\{4, 8, 16, 32, 64, 128, 256, \text{Full-rank}\}$.
The results for individual datasets are presented in Tables~\ref{tab:pabs-DTD}-\ref{tab:pabs-Caltech101}, while the average performance across all datasets is reported in  Table~\ref{tab:pabs-avgAll}.

Given the utility of the personalized projection adapter in our method (discussed in Sec.~\ref{sec:method}), it is expected that a more sophisticated structure for this adapter will result in better personalization performance. This hypothesis is verified as in 7 out of 10 cases in our experiments (Tables~\ref{tab:pabs-DTD}, \ref{tab:pabs-EuroSAT}, \ref{tab:pabs-Food101}, \ref{tab:pabs-StanCars}, \ref{tab:pabs-UCF101}, \ref{tab:pabs-ImageNet}, \ref{tab:pabs-Caltech101}) the best personalization performance occurs under full-rank settings. On the other hand, the generalization performance often peaks under the two shallowest ranks $r_p\in\{4, 8\}$ (Tables~\ref{tab:pabs-DTD}, \ref{tab:pabs-EuroSAT}, \ref{tab:pabs-OxFlowers}, \ref{tab:pabs-OxPets}, \ref{tab:pabs-StanCars}), showing how dedication of relatively larger number of parameters to personalization in the trained global model can negatively affect the generalization performance. 

Focusing on Table~\ref{tab:pabs-avgAll} that contains the average personalization and generalization performances alongside their H-Mean for all of the tested $r_p$ values across all datasets, we observe that the best personalization performance is achieved  under the $\text{Full-rank}$ setting and the best balance between generalization and personalization (i.e., the H-Mean) is realized under one of the larger bottleneck ranks ($r_p = 256$).
We can thus conclude that sophisticated personalized projection adapters, even though  harmful to the generalization performance in some cases, have proven to be significantly beneficial in improving personalization and consequently the  balance between personalization and generalization.
\vspace{-4mm}
\begin{table*}[h]
\centering
\begin{minipage}{0.32\linewidth}
\centering
\scriptsize
\captionof{table}{\textbf{Avg across datasets.}}
\begin{tabular}{c|ccc}
\hline
\rowcolor{gray!20} Rank ($r_p$) & Per. & Gen. & H-Mean \\
\hline
4 & 89.99 & \textbf{75.00} & \cellcolor{lightpink}{81.44} \\
8 & 90.45 & 74.83 & \cellcolor{lightpink}{81.50} \\
16 & 91.00 & 74.86 & \cellcolor{lightpink}{81.73} \\
32 & 91.59 & 74.91 & \cellcolor{lightpink}{81.96} \\
64 & 92.12 & 74.64 & \cellcolor{lightpink}{81.92} \\
128 & 92.57 & 74.71 & \cellcolor{lightpink}{82.12} \\
256 & 92.91 & 74.62 & \cellcolor{lightpink}{\textbf{82.15}} \\
Full-rank & \textbf{92.95} & 74.46 & \cellcolor{lightpink}{82.08} \\
\hline
\end{tabular}
\label{tab:pabs-avgAll}
\end{minipage}
\hfill
\begin{minipage}{0.32\linewidth}
\centering
\scriptsize
\captionof{table}{DTD}
\begin{tabular}{c|ccc}
\hline
\rowcolor{gray!20} Rank ($r_p$) & Per. & Gen. & H-Mean \\
\hline
4 & 88.15 & \textbf{65.00} & \cellcolor{lightpink}{74.83} \\
8 & 89.21 & 64.56 & \cellcolor{lightpink}{74.91} \\
16 & 89.79 & 64.71 & \cellcolor{lightpink}{75.22} \\
32 & 90.13 & 64.77 & \cellcolor{lightpink}{75.38} \\
64 & 90.81 & 64.80 & \cellcolor{lightpink}{75.63} \\
128 & 91.40 & 64.67 & \cellcolor{lightpink}{75.75} \\
256 & 91.64 & 64.32 & \cellcolor{lightpink}{75.59} \\
Full-rank & \textbf{92.93} & 63.93 & \cellcolor{lightpink}{\textbf{75.75}} \\
\hline
\end{tabular}
\label{tab:pabs-DTD}
\end{minipage}
\hfill
\begin{minipage}{0.32\linewidth}
\centering
\scriptsize
\captionof{table}{EuroSAT}
\begin{tabular}{c|ccc}
\hline
\rowcolor{gray!20} Rank ($r_p$) & Per. & Gen. & H-Mean \\
\hline
4 & 97.32 & \textbf{78.69} & \cellcolor{lightpink}{\textbf{87.02}} \\
8 & 98.08 & 77.86 & \cellcolor{lightpink}{86.81} \\
16 & 98.40 & 77.39 & \cellcolor{lightpink}{86.64} \\
32 & 98.86 & 77.55 & \cellcolor{lightpink}{86.92} \\
64 & 98.96 & 75.57 & \cellcolor{lightpink}{85.70} \\
128 & 99.22 & 76.38 & \cellcolor{lightpink}{86.31} \\
256 & 99.36 & 76.50 & \cellcolor{lightpink}{86.44} \\
Full-rank & \textbf{99.53} & 76.19 &\cellcolor{lightpink}{86.31} \\
\hline
\end{tabular}
\label{tab:pabs-EuroSAT}
\end{minipage}
\end{table*}

\vspace{-4mm}
\begin{table*}[h]
\centering
\begin{minipage}{0.32\linewidth}
\centering
\scriptsize
\captionof{table}{FGVC Aircraft}
\begin{tabular}{c|ccc}
\hline
\rowcolor{gray!20} Rank ($r_p$) & Per. & Gen. & H-Mean \\
\hline
4 & 57.93 & 33.28 & \cellcolor{lightpink}{42.27} \\
8 & 58.47 & 32.67 & \cellcolor{lightpink}{41.92} \\
16 & 60.25 & 33.46 & \cellcolor{lightpink}{43.03} \\
32 & 62.78 & \textbf{33.78} & \cellcolor{lightpink}{43.92} \\
64 & 66.02 & 32.85 & \cellcolor{lightpink}{43.87} \\
128 & 67.60 & 33.08 & \cellcolor{lightpink}{44.42} \\
256 & \textbf{70.05} & 33.24 & \cellcolor{lightpink}{\textbf{45.09}} \\
Full-rank & 68.19 & 32.80 & \cellcolor{lightpink}{44.29} \\
\hline
\end{tabular}
\label{tab:pabs-FGVCAir}
\end{minipage}
\hfill
\begin{minipage}{0.32\linewidth}
\centering
\scriptsize
\captionof{table}{Food101}
\begin{tabular}{c|ccc}
\hline
\rowcolor{gray!20} Rank ($r_p$) & Per. & Gen. & H-Mean \\
\hline
4 & 94.75 & 91.26 & \cellcolor{lightpink}{\cellcolor{lightpink}{92.97}} \\
8 & 94.79 & 91.22 & \cellcolor{lightpink}{92.97} \\
16 & 94.87 & 91.28 & \cellcolor{lightpink}{93.04} \\
32 & 94.97 & 91.21 & \cellcolor{lightpink}{93.06} \\
64 & 95.15 & 91.13 & \cellcolor{lightpink}{93.10} \\
128 & 95.22 & \textbf{91.31} & \cellcolor{lightpink}{93.22} \\
256 & 95.24 & 91.25 & \cellcolor{lightpink}{93.20} \\
Full-rank & \textbf{95.43} & 91.16 & \cellcolor{lightpink}{\textbf{93.24}} \\
\hline
\end{tabular}
\label{tab:pabs-Food101}
\end{minipage}
\hfill
\begin{minipage}{0.32\linewidth}
\centering
\scriptsize
\captionof{table}{Flowers}
\begin{tabular}{c|ccc}
\hline
\rowcolor{gray!20} Rank ($r_p$) & Per. & Gen. & H-Mean \\
\hline
4 & 98.71 & 75.17 & \cellcolor{lightpink}{85.35} \\
8 & 98.83 & \textbf{75.69} & \cellcolor{lightpink}{\textbf{85.73}} \\
16 & 98.65 & 75.52 & \cellcolor{lightpink}{85.55} \\
32 & 99.01 & 75.54 & \cellcolor{lightpink}{85.70} \\
64 & 99.00 & 75.34 & \cellcolor{lightpink}{85.56} \\
128 & \textbf{99.17} & 75.31 & \cellcolor{lightpink}{85.61} \\
256 & 98.94 & 75.21 & \cellcolor{lightpink}{85.46} \\
Full-rank & 99.05 & 75.48 & \cellcolor{lightpink}{85.68} \\
\hline
\end{tabular}
\label{tab:pabs-OxFlowers}
\end{minipage}
\end{table*}

\vspace{-4mm}
\begin{table*}[h]
\centering
\begin{minipage}{0.32\linewidth}
\centering
\scriptsize
\captionof{table}{OxfordPets}
\begin{tabular}{c|ccc}
\hline
\rowcolor{gray!20} Rank ($r_p$) & Per. & Gen. & H-Mean \\
\hline
4 & 98.44 & \textbf{97.54} &  \cellcolor{lightpink}{97.99} \\
8 & 98.87 & 97.44 & \cellcolor{lightpink}{\textbf{98.15}} \\
16 & 99.02 & 97.23 & \cellcolor{lightpink}{98.12} \\
32 & 99.08 & 97.21 & \cellcolor{lightpink}{98.14} \\
64 & 99.12 & 96.93 & \cellcolor{lightpink}{98.02} \\
128 & 99.19 & 96.99 & \cellcolor{lightpink}{98.08} \\
256 & \textbf{99.21} & 96.99 & \cellcolor{lightpink}{98.09} \\
Full-rank & 99.20 & 96.99 & \cellcolor{lightpink}{98.08} \\
\hline
\end{tabular}
\label{tab:pabs-OxPets}
\end{minipage}
\hfill
\begin{minipage}{0.32\linewidth}
\centering
\scriptsize
\captionof{table}{Stanford Cars}
\begin{tabular}{c|ccc}
\hline
\rowcolor{gray!20} Rank ($r_p$) & Per. & Gen. & H-Mean \\
\hline
4 & 89.14 & 70.52 & \cellcolor{lightpink}{78.74} \\
8 & 90.29 & \textbf{70.87} & \cellcolor{lightpink}{79.41} \\
16 & 91.13 & 70.70 & \cellcolor{lightpink}{79.62} \\
32 & 92.21 & 70.75 & \cellcolor{lightpink}{80.07} \\
64 & 92.85 & 70.45 & \cellcolor{lightpink}{80.11} \\
128 & 93.47 & 70.55 & \cellcolor{lightpink}{\textbf{80.41}} \\
256 & 93.72 & 70.20 & \cellcolor{lightpink}{80.27} \\
Full-rank & \textbf{93.80} & 69.53 & \cellcolor{lightpink}{79.86} \\
\hline
\end{tabular}
\label{tab:pabs-StanCars}
\end{minipage}
\hfill
\begin{minipage}{0.32\linewidth}
\centering
\scriptsize
\captionof{table}{UCF101}
\begin{tabular}{c|ccc}
\hline
\rowcolor{gray!20} Rank ($r_p$) & Per. & Gen. & H-Mean \\
\hline
4 & 93.27 & 77.55 & \cellcolor{lightpink}{84.69} \\
8 & 92.94 & 77.77 & \cellcolor{lightpink}{84.68} \\
16 & 93.82 & 77.62 & \cellcolor{lightpink}{84.96} \\
32 & 94.22 & 78.17 & \cellcolor{lightpink}{85.45} \\
64 & 94.17 & \textbf{78.29} & \cellcolor{lightpink}{85.50} \\
128 & 94.87 & 78.18 & \cellcolor{lightpink}{85.72} \\
256 & 95.03 & 78.20 & \cellcolor{lightpink}{\textbf{85.80}} \\
Full-rank & \textbf{95.22} & 77.87 & \cellcolor{lightpink}{85.68} \\
\hline
\end{tabular}
\label{tab:pabs-UCF101}
\end{minipage}
\hfill
\end{table*}

\vspace{-4mm}
\begin{table*}[!h]
\centering
\begin{minipage}{0.32\linewidth}
\centering
\scriptsize
\captionof{table}{ImageNet}
\begin{tabular}{c|ccc}
\hline
\rowcolor{gray!20} Rank ($r_p$) & Per. & Gen. & H-Mean \\
\hline
4 & 84.23 & 66.76 & \cellcolor{lightpink}{74.48} \\
8 & 85.04 & 66.15 & \cellcolor{lightpink}{74.41} \\
16 & 86.11 & 66.51 & \cellcolor{lightpink}{75.05} \\
32 & 86.61 & 66.28 & \cellcolor{lightpink}{75.09} \\
64 & 87.02 & \textbf{66.78} & \cellcolor{lightpink}{75.57} \\
128 & 87.42 & 66.38 & \cellcolor{lightpink}{75.46} \\
256 & 87.83 & 66.31 & \cellcolor{lightpink}{75.56} \\
Full-rank & \textbf{88.00} & 66.49 & \cellcolor{lightpink}{\textbf{75.75}} \\
\hline
\end{tabular}
\label{tab:pabs-ImageNet}
\end{minipage}
\begin{minipage}{0.32\linewidth}
\centering
\scriptsize
\captionof{table}{Caltech}
\begin{tabular}{c|ccc}
\hline
\rowcolor{gray!20} Rank ($r_p$) & Per. & Gen. & H-Mean \\
\hline
4 & 97.97 & 94.20 & \cellcolor{lightpink}{96.05} \\
8 & 98.00 & 94.10 & \cellcolor{lightpink}{96.01} \\
16 & 97.92 & 94.19 & \cellcolor{lightpink}{96.02} \\
32 & 97.98 & 93.85 & \cellcolor{lightpink}{95.87} \\
64 & 98.06 & 94.23 & \cellcolor{lightpink}{96.11} \\
128 & 98.15 & \textbf{94.28} & \cellcolor{lightpink}{\textbf{96.17}} \\
256 & 98.09 & 94.01 & \cellcolor{lightpink}{96.01} \\
Full-rank & \textbf{98.16} & 94.19 & \cellcolor{lightpink}{96.13} \\
\hline
\end{tabular}
\label{tab:pabs-Caltech101}
\end{minipage}
\end{table*}

\clearpage
\section{Adapter Injection at Different Layers}\label{ap:adapter_injection}
We further explore the effect of injecting adapters throughout the backbone. To this end, we  conduct a series of experiments through which we sweep different depths and placements of the adapters between the transformer blocks of the backbone. Our experiments include the following $\mathsf{start-end}$ (i.e., $\ell^{\mathsf{S}}-\ell^{\mathsf{E}}$) combinations for transformer blocks in which we have used adapters: $\{1-1, 1-2, 1-3, \cdots, 1-11, 1-12, 2-12, 3-12, \cdots, 11-12, 12-12\}$. Our results for these experiments regarding the personalization, generalization, and H-Mean of these two are included in Figs.~\ref{fig:injection-UCF101}-\ref{fig:injection-Caltech101}, and the average performance across all the datasets is reported in Fig.~\ref{fig:injection-avgAll}.

The H-Mean curves (i.e., the right sub-plots) in Figs.~\ref{fig:injection-UCF101}–\ref{fig:injection-Caltech101} indicate that the optimal placement of the adapters depends on dataset-specific characteristics. In some datasets, the most important features are captured by the early transformer layers, in which case inserting adapters in those layers yields the best performance (e.g., Fig.~\ref{fig:injection-StanCars}-right). In contrast, some datasets may benefit more from adapters in the later transformer layers, suggesting that their key features emerge deeper in the network (e.g., Fig.~\ref{fig:injection-OxFlowers}-right). Nevertheless, when performance is averaged across all datasets (Fig.~\ref{fig:injection-avgAll}-right), the peak H-Mean is achieved when adapters are distributed across a broader portion of the backbone (layers 1 to 12), highlighting the advantage of multi-layer adaptation. Further, from Fig.~\ref{fig:injection-avgAll}-right, it can be seen that a comparable H-Mean performance can be achieved when inserting the adapters from layer 5 to 12 (i.e., under a lower computation/communication overhead).

\begin{figure}[h]
    \centering
    \includegraphics[width=.98\linewidth]{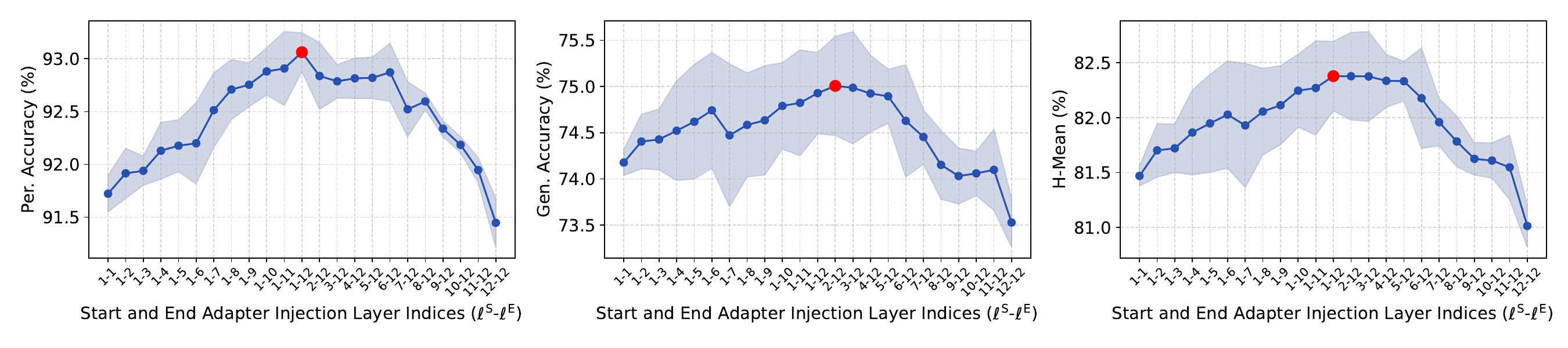}
    \caption{Personalization and generalization test accuracies alongside their harmonic mean (H-Mean) under changing adapter injection settings \textbf{averaged across all datasets}.}
    \label{fig:injection-avgAll}
\end{figure}

\begin{figure}[h]
    \centering
    \includegraphics[width=.98\linewidth]{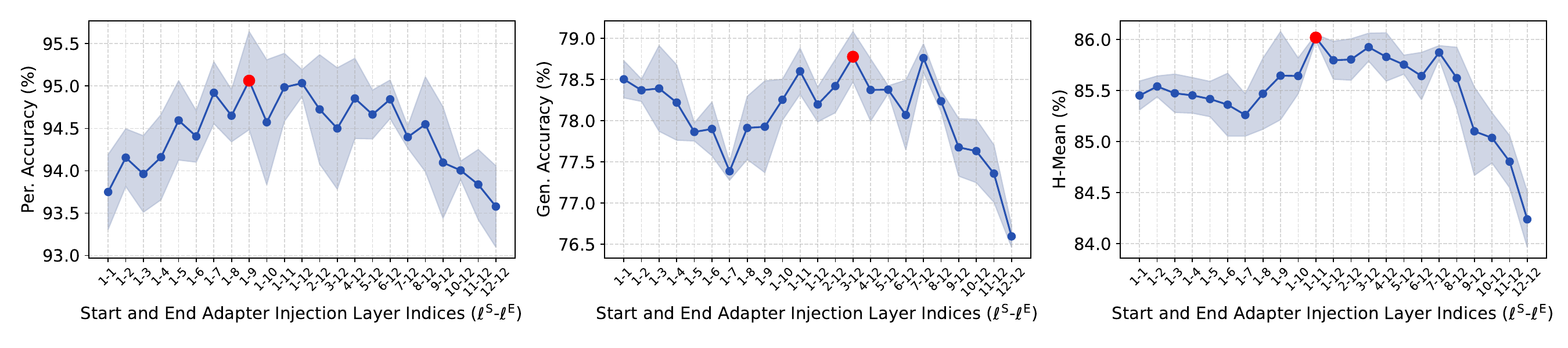}
    \caption{Personalization and generalization test accuracies alongside their harmonic mean (H-Mean) under changing adapter injection settings for the \textbf{\textit{UCF101}} dataset.}
    \label{fig:injection-UCF101}
\end{figure}
\begin{figure}[h]
    \centering
    \includegraphics[width=.98\linewidth]{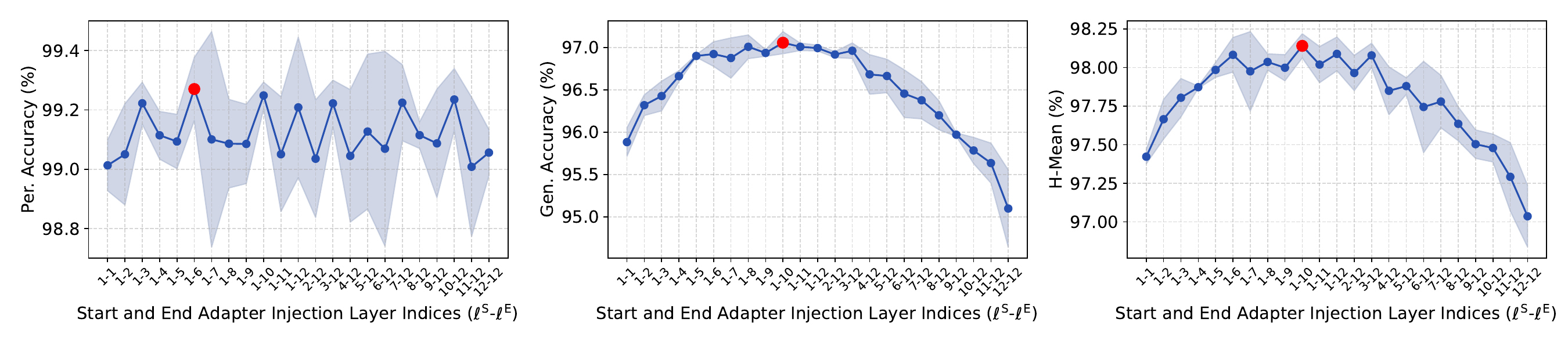}
    \caption{Personalization and generalization test accuracies alongside their harmonic mean (H-Mean) under changing adapter injection settings for the \textbf{\textit{OxfordPets}} dataset.}
    \label{fig:injection-OxPets}
\end{figure}
\begin{figure}[h]
    \centering
    \includegraphics[width=.98\linewidth]{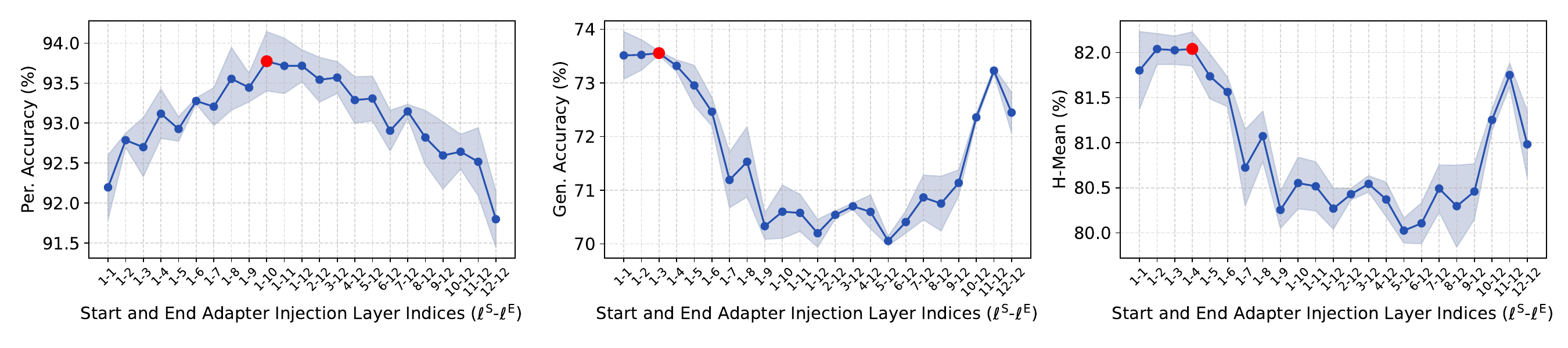}
    \caption{Personalization and generalization test accuracies alongside their harmonic mean (H-Mean) under changing adapter injection settings for the \textbf{\textit{Stanford-Cars}} dataset.}
    \label{fig:injection-StanCars}
\end{figure}
\begin{figure}[h]
    \centering
    \includegraphics[width=.98\linewidth]{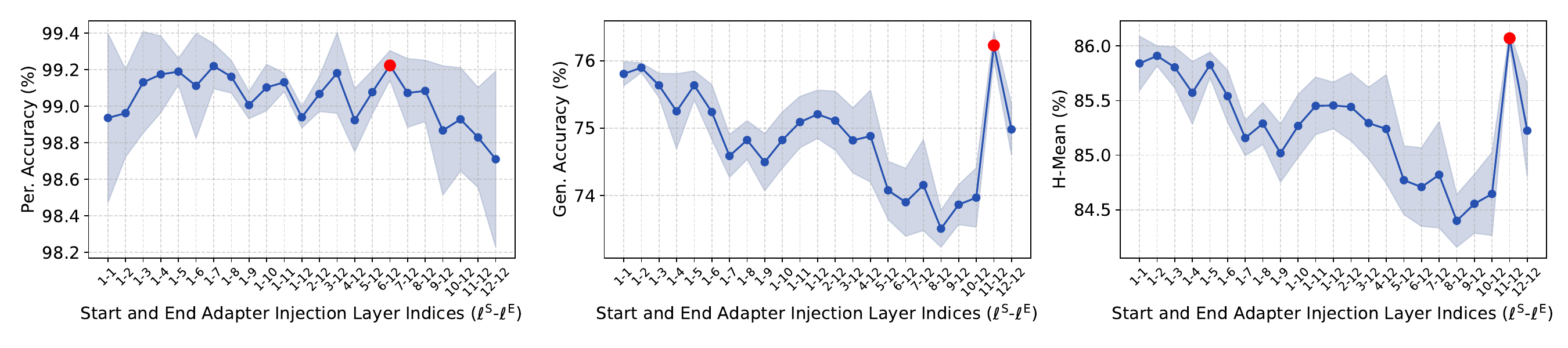}
    \caption{Personalization and generalization test accuracies alongside their harmonic mean (H-Mean) under changing adapter injection settings for the \textbf{\textit{Flowers}} dataset.}
    \label{fig:injection-OxFlowers}
\end{figure}
\begin{figure}[h]
    \centering
    \includegraphics[width=.98\linewidth]{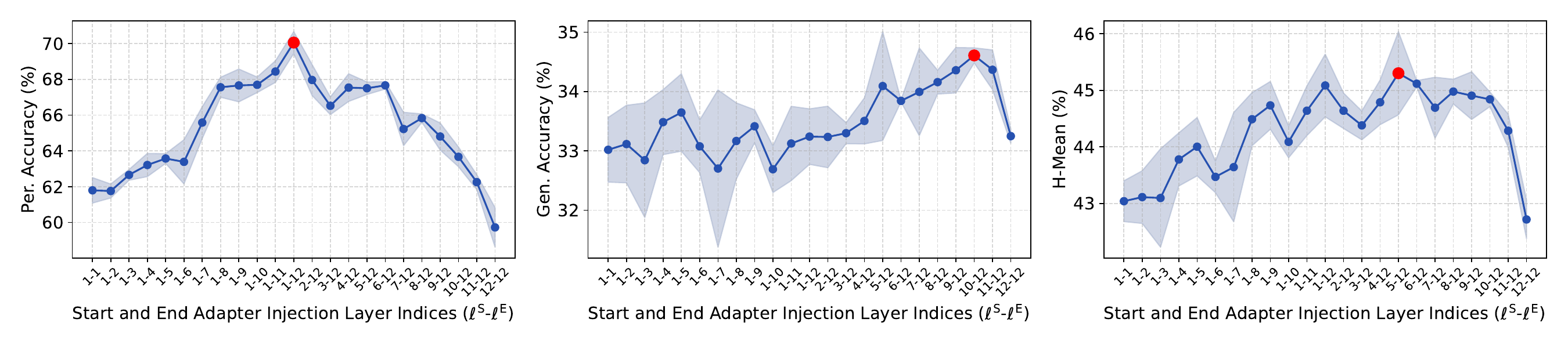}
    \caption{Personalization and generalization test accuracies alongside their harmonic mean (H-Mean) under changing adapter injection settings for the \textbf{\textit{FGVC Aircraft}} dataset.}
    \label{fig:injection-FGVCAir}
\end{figure}
\begin{figure}[h]
    \centering
    \includegraphics[width=.98\linewidth]{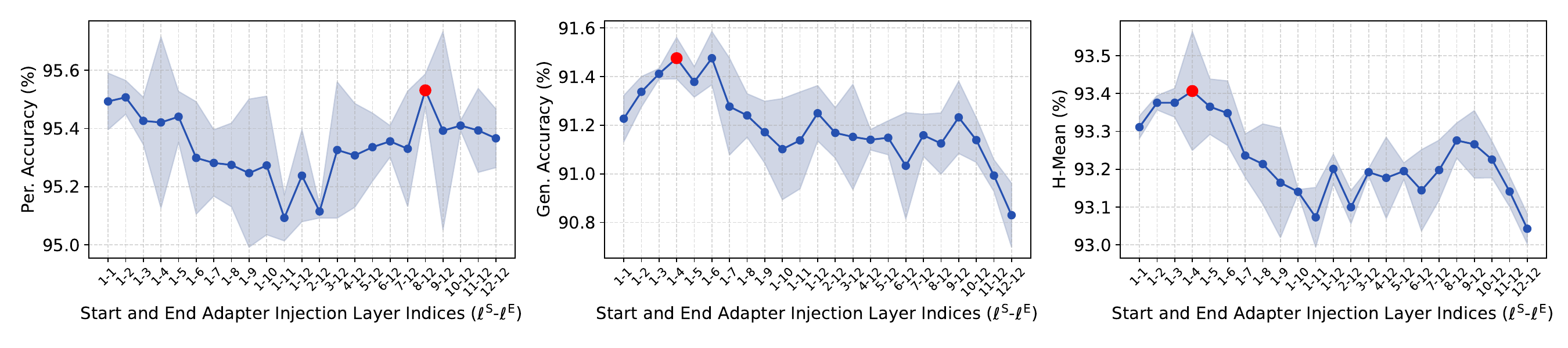}
    \caption{Personalization and generalization test accuracies alongside their harmonic mean (H-Mean) under changing adapter injection settings for the \textbf{\textit{Food101}} dataset.}
    \label{fig:injection-Food101}
\end{figure}
\begin{figure}[h]
    \centering
    \includegraphics[width=.98\linewidth]{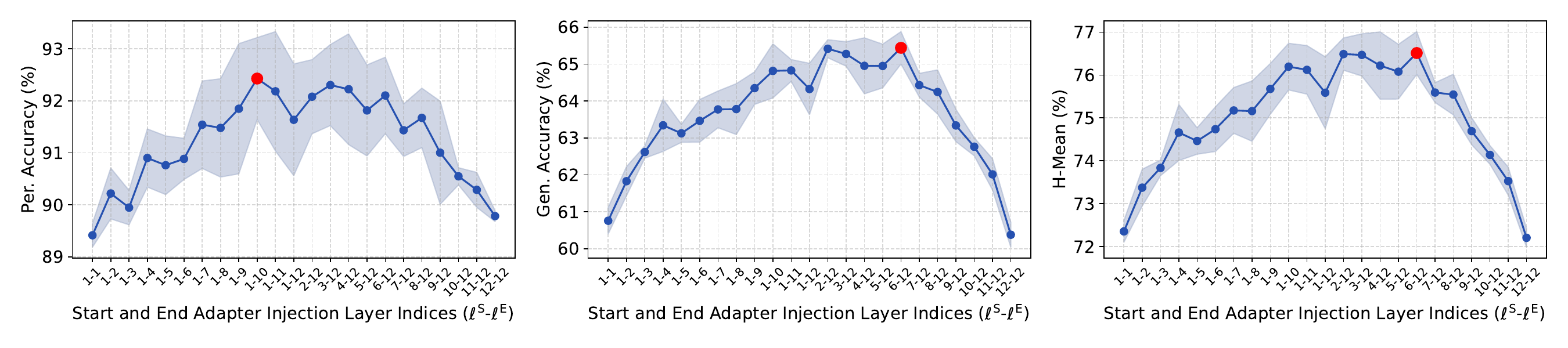}
    \caption{Personalization and generalization test accuracies alongside their harmonic mean (H-Mean) under changing adapter injection settings for the \textbf{\textit{DTD}} dataset.}
    \label{fig:injection-DTD}
\end{figure}
\begin{figure}[h]
    \centering
    \includegraphics[width=.98\linewidth]{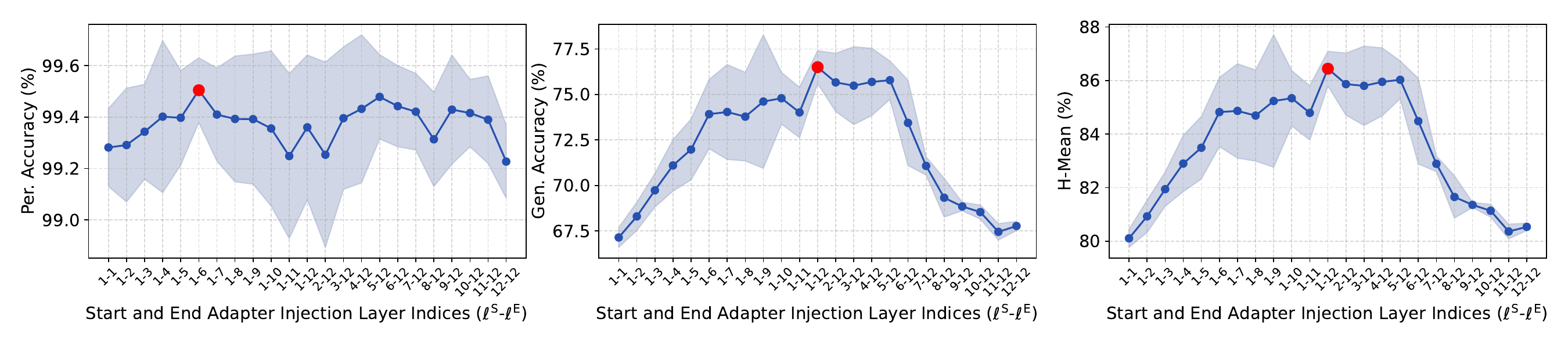}
    \caption{Personalization and generalization test accuracies alongside their harmonic mean (H-Mean) under changing adapter injection settings for the \textbf{\textit{EuroSAT}} dataset.}
    \label{fig:injection-EuroSAT}
\end{figure}
\begin{figure}[h]
    \centering
    \includegraphics[width=.98\linewidth]{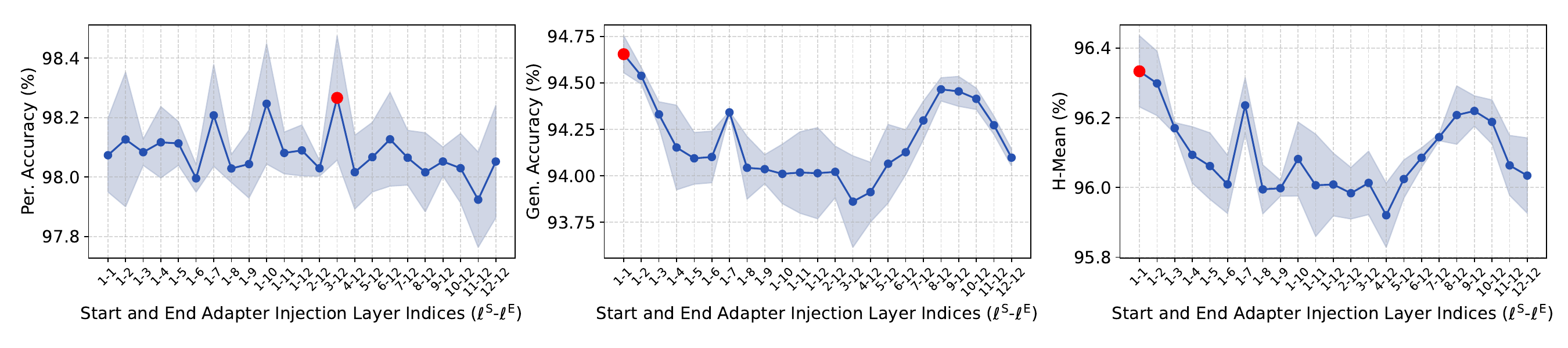}
    \caption{Personalization and generalization test accuracies alongside their harmonic mean (H-Mean) under changing adapter injection settings for the \textbf{\textit{Caltech}} dataset.}
    \label{fig:injection-Caltech101}
\end{figure}
\begin{figure}[h]
    \centering
    \includegraphics[width=.98\linewidth]{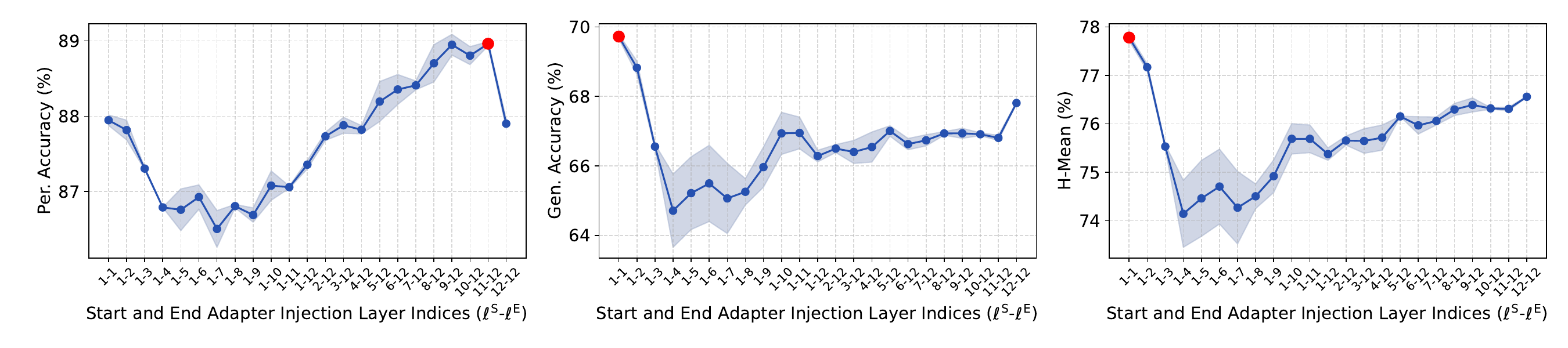}
    \caption{Personalization and generalization test accuracies alongside their harmonic mean (H-Mean) under changing adapter injection settings for the \textbf{\textit{ImageNet}} dataset.}
    \label{fig:injection-ImageNet}
\end{figure}

\clearpage
\section{Effect of $\gamma$ on Performance}\label{ap:gamma}
To evaluate the effect of $\gamma$ on the model performance, we reported the personalization, generalization, and the H-Mean of the two across the datasets
in Figs.~\ref{fig:gamma-UCF101}-\ref{fig:gamma-Imagenet}, while Fig.~\ref{fig:gamma-AvgAll} reports the average performance across the datasets.

As explained in Eq.~\eqref{eq:fusion_rep}, the parameter $\gamma$ corresponds to the emphasis of personalization vs. generalization in training. This is captured in our results as the personalization performance (left sub-plot in all result sets) improves for $\gamma>0$ for all datasets compared to the case of $\gamma=0$. Also, we observe a decline in the generalization performance for higher values of $\gamma$ (middle sub-plot in all result sets). Focusing on the left sub-plot in the result set for the \textit{UCF101} dataset (Fig.~\ref{fig:gamma-UCF101}), we can see that the gap between the best ($\gamma=0.4$) and worst ($\gamma=0$) personalization performances is almost $10\%$, where the sharp increase in the personalization performance with $\gamma$ increasing from $0$ to a slightly higher value ($\gamma=0.09$) shows the immediate effectiveness of adding the personalization term. Also, the right sub-plot of Fig.~\ref{fig:gamma-UCF101} shows the H-Mean performance where the optimal balance of personalization vs. generalization is reached at $\gamma=0.09$.

Focusing on Fig.~\ref{fig:gamma-AvgAll} that reports the average performance across the datasets, it can be observed that the best test accuracy considering both the generalization and personalization is achieved with small values of $\gamma$, indicating the equilibrium point of $\gamma=0.09$ when the H-Mean performance is measured. The same trends can also be observed with minor fluctuations in the experiments on other datasets (Figs.~\ref{fig:gamma-UCF101}-\ref{fig:gamma-Imagenet}).

\begin{figure}[h]
    \centering
    \includegraphics[width=.9\linewidth]{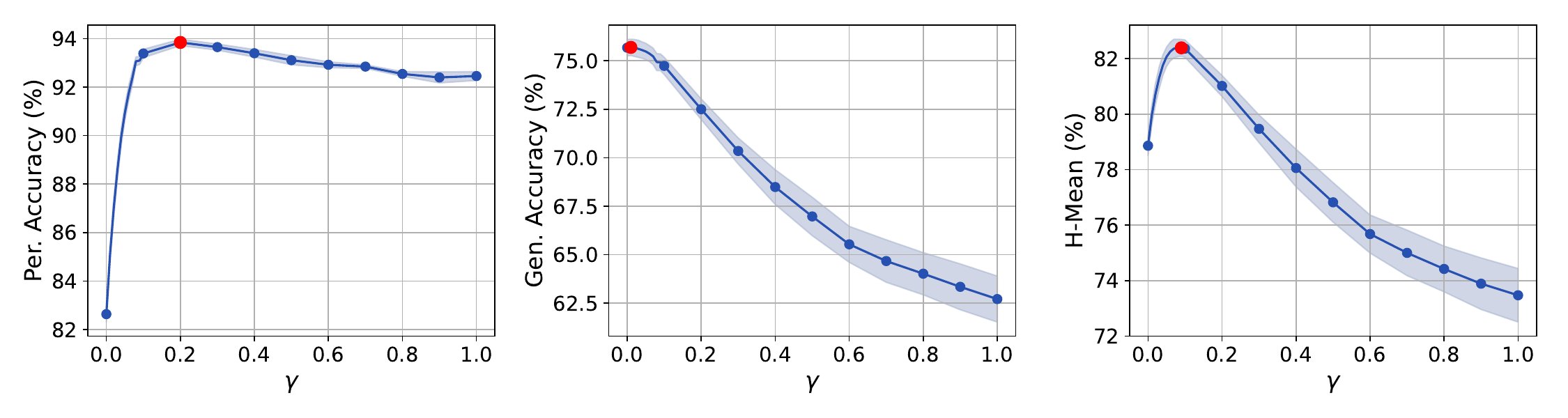}
    \vspace{-2mm}
    \caption{Personalization and generalization test accuracies alongside their harmonic mean (H-Mean) under changing values of parameter $\gamma$ \textbf{averaged across all datasets}.}
    \label{fig:gamma-AvgAll}
\end{figure}
\begin{figure}[h]
    \centering
    \includegraphics[width=.9\linewidth]{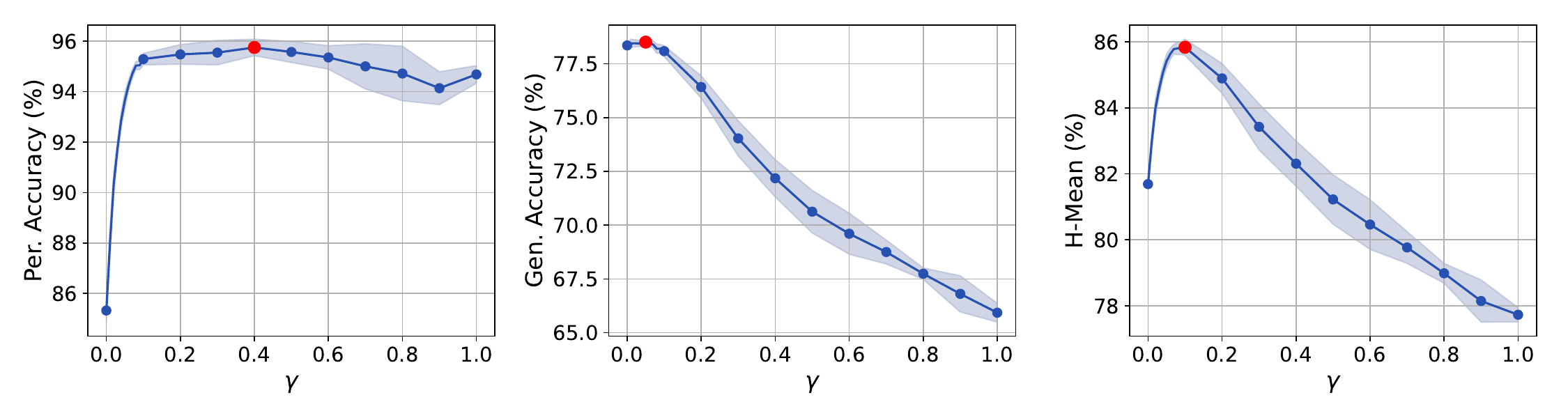}
    \vspace{-2mm}
    \caption{Personalization and generalization test accuracies alongside their harmonic mean (H-Mean) under changing values of parameter $\gamma$ for the \textbf{\textit{UCF101}} dataset.}
    \label{fig:gamma-UCF101}
\end{figure}
\begin{figure}[h]
    \centering
    \includegraphics[width=.9\linewidth]{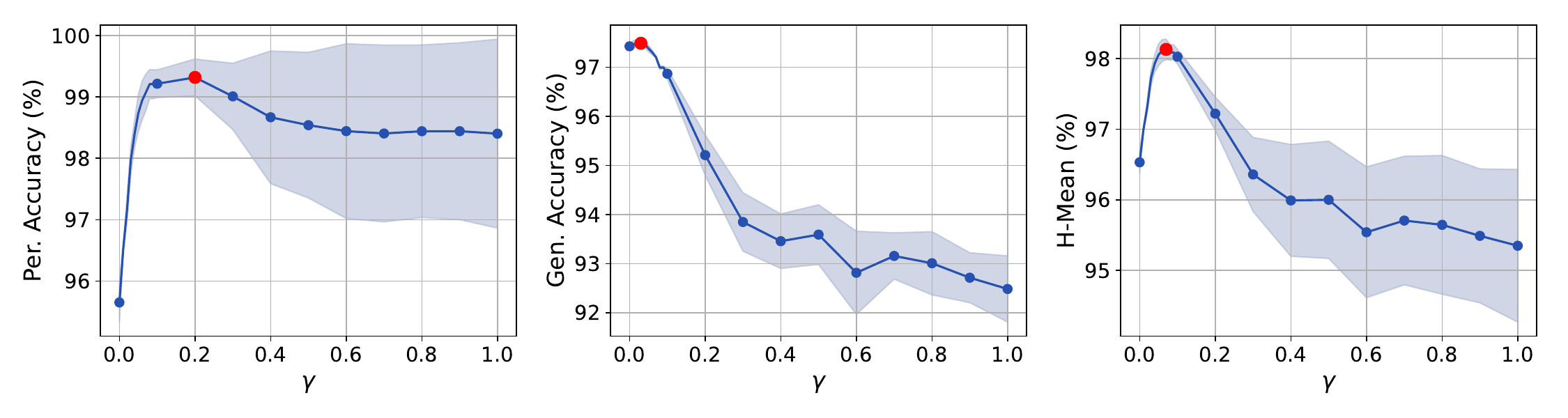}
    \caption{Personalization and generalization test accuracies alongside their harmonic mean (H-Mean) under changing values of parameter $\gamma$ for the \textbf{\textit{Oxford Pets}} dataset.}
    \label{fig:gamma-OxPets}
\end{figure}
\begin{figure}[h]
    \centering
    \includegraphics[width=.9\linewidth]{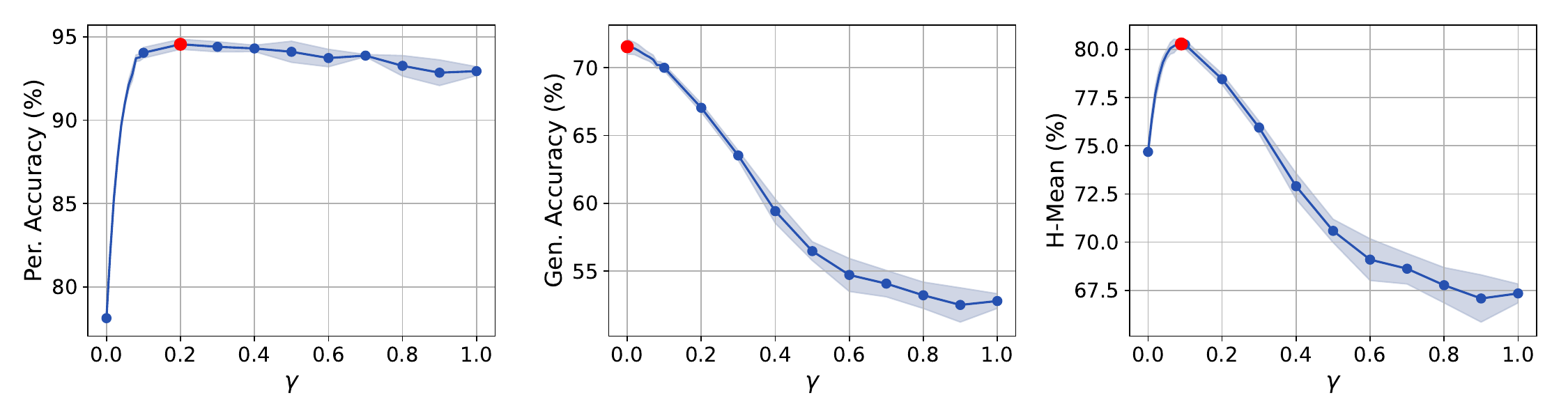}
    \caption{Personalization and generalization test accuracies alongside their harmonic mean (H-Mean) under changing values of parameter $\gamma$ for the \textbf{\textit{Stanford Cars}} dataset.}
    \label{fig:gamma-StanCars}
\end{figure}
\begin{figure}[h]
    \centering
    \includegraphics[width=.9\linewidth]{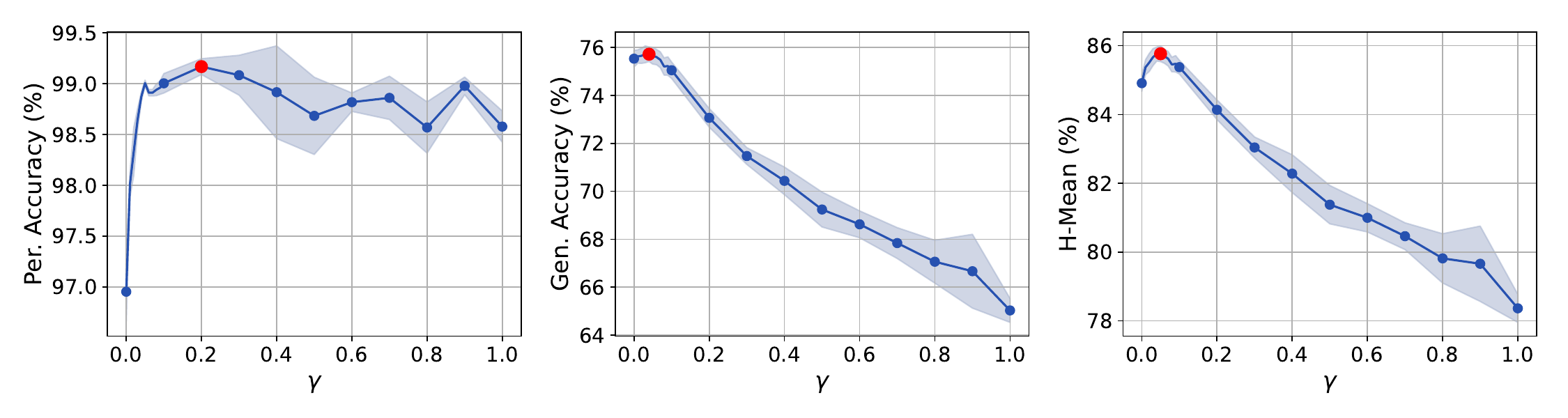}
    \caption{Personalization and generalization test accuracies alongside their harmonic mean (H-Mean) under changing values of parameter $\gamma$ for the \textbf{\textit{Flowers}} dataset.}
    \label{fig:gamma-OxFlowers}
\end{figure}
\begin{figure}[h]
    \centering
    \includegraphics[width=.9\linewidth]{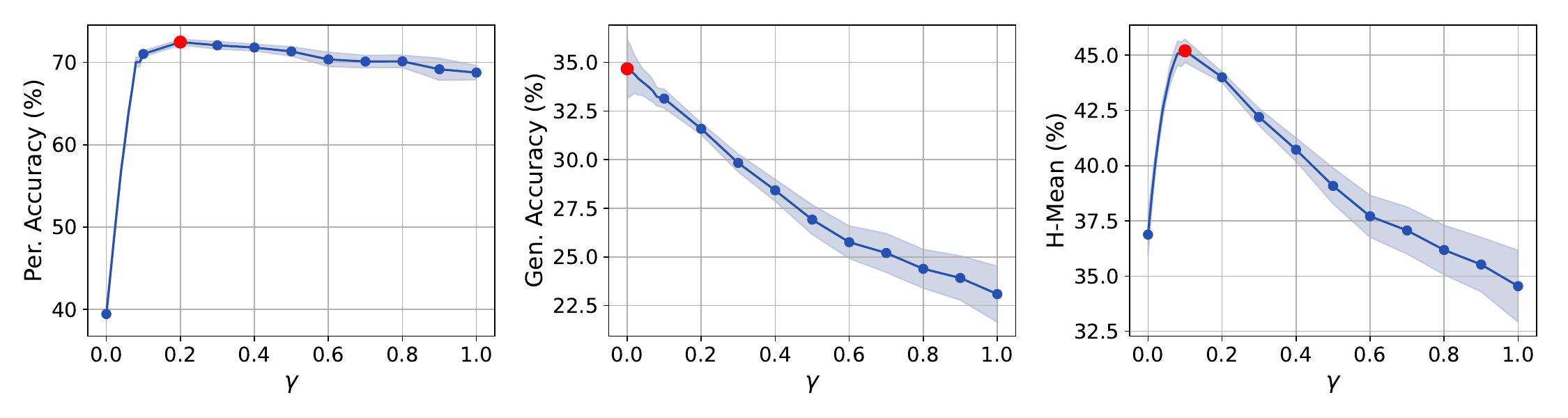}
    \caption{Personalization and generalization test accuracies alongside their harmonic mean (H-Mean) under changing values of parameter $\gamma$ for the \textbf{\textit{FGVC Aircraft}} dataset.}
    \label{fig:gamma-FGVCAir}
\end{figure}
\begin{figure}[h]
    \centering
    \includegraphics[width=.9\linewidth]{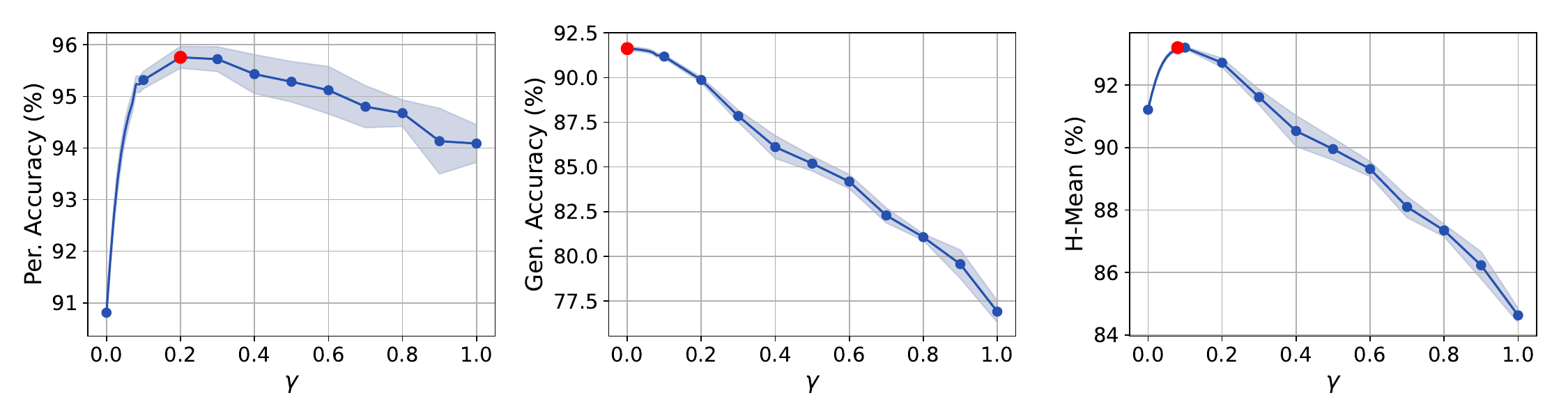}
    \caption{Personalization and generalization test accuracies alongside their harmonic mean (H-Mean) under changing values of parameter $\gamma$ for the \textbf{\textit{Food101}} dataset.}
    \label{fig:gamma-Food101}
\end{figure}
\begin{figure}[h]
    \centering
    \includegraphics[width=.9\linewidth]{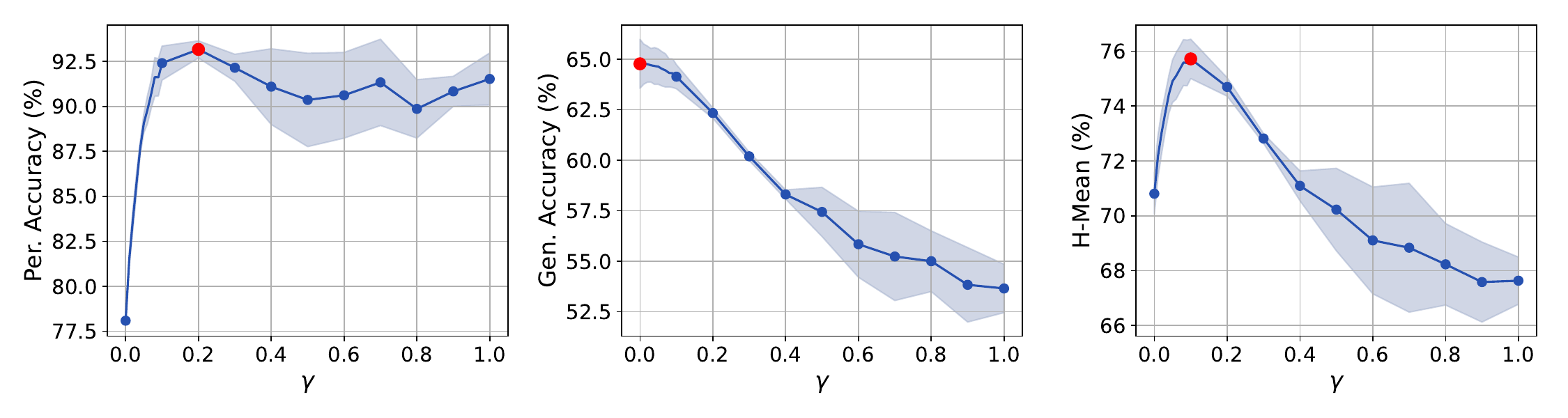}
    \caption{Personalization and generalization test accuracies alongside their harmonic mean (H-Mean) under changing values of parameter $\gamma$ for the \textbf{\textit{DTD}} dataset.}
    \label{fig:gamma-DTD}
\end{figure}
\begin{figure}[h]
    \centering
    \includegraphics[width=.9\linewidth]{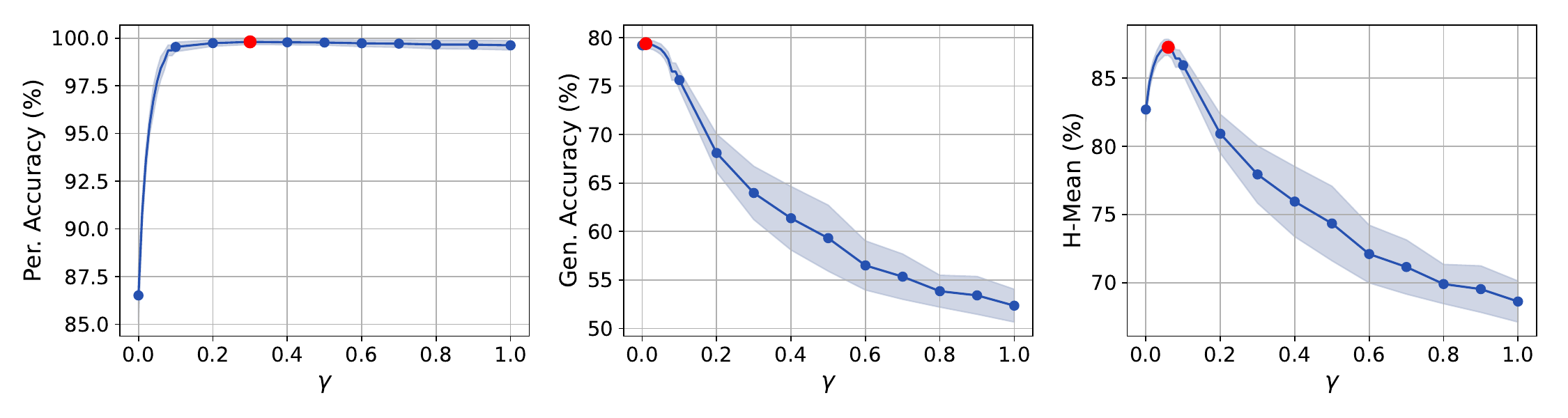}
    \caption{Personalization and generalization test accuracies alongside their harmonic mean (H-Mean) under changing values of parameter $\gamma$ for the \textbf{\textit{EuroSAT}} dataset.}
    \label{fig:gamma-EuroSat}
\end{figure}
\begin{figure}[h]
    \centering
    \includegraphics[width=.9\linewidth]{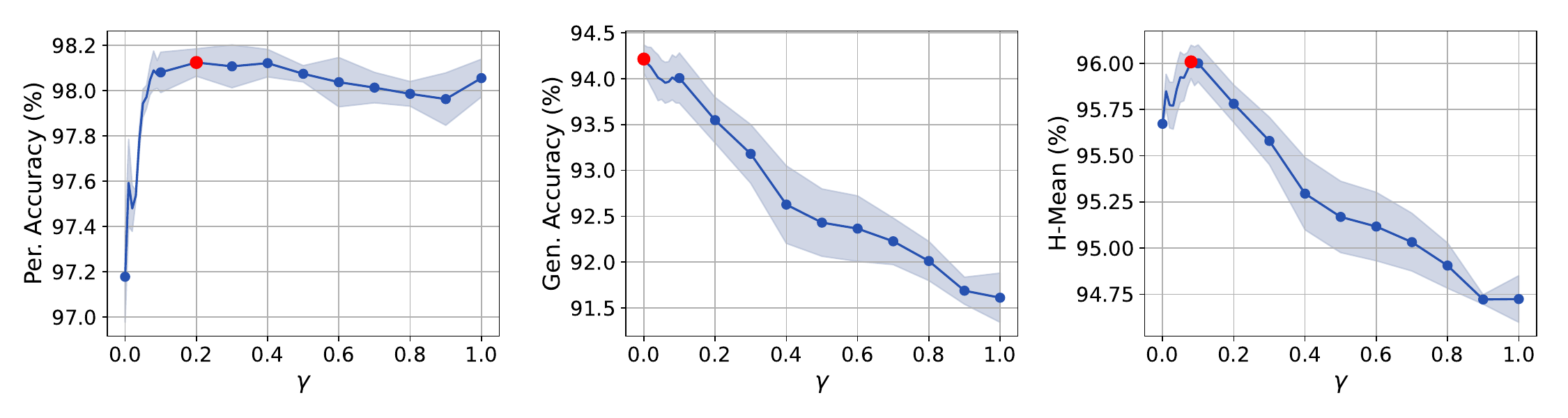}
    \caption{Personalization and generalization test accuracies alongside their harmonic mean (H-Mean) under changing values of parameter $\gamma$ for the \textbf{\textit{Caltech}} dataset.}
    \label{fig:gamma-Caltech101}
\end{figure}
\begin{figure}[!t]
    \centering
    \includegraphics[width=.9\linewidth]{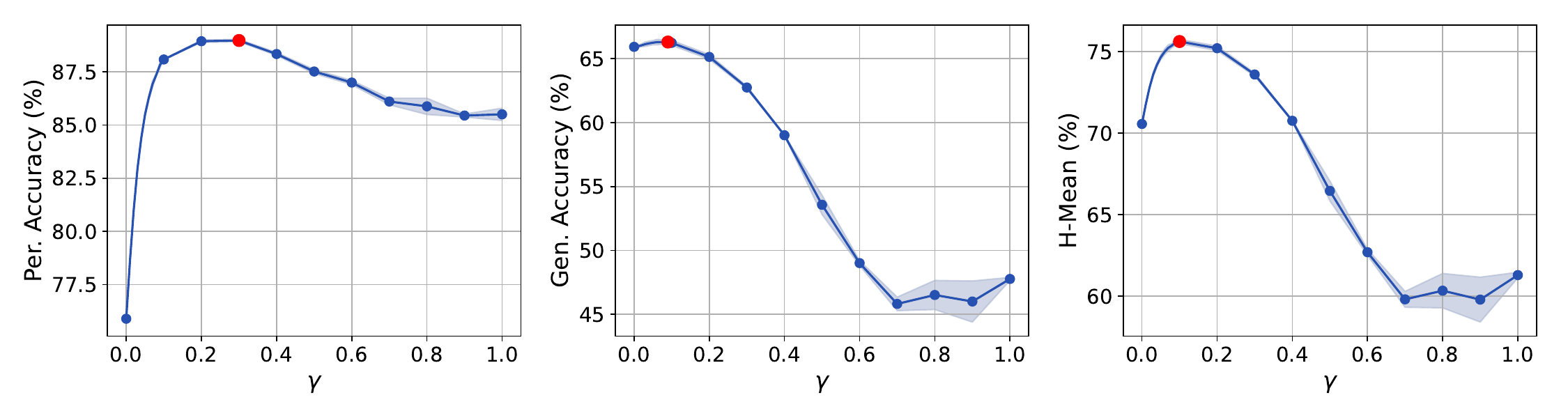}
    \caption{Personalization and generalization test accuracies alongside their harmonic mean (H-Mean) under changing values of parameter $\gamma$ for the \textbf{\textit{ImageNet}} dataset.}
    \label{fig:gamma-Imagenet}
\end{figure}

\clearpage
\section{Effect of $\lambda$ on Performance}\label{ap:lambda}
We study the effect of the parameter $\lambda$ which involves the tradeoff between the running anchor loss $\big(\mathcal{L}^{\mathsf{Anch}}_n\big)$ and the fusion loss $\big(\mathcal{L}^{\mathsf{Fus}}_n\big)$ in the loss function (refer to Eq.~\eqref{eq:final_loss}). 
In Figs.~\ref{fig:lambda-UCF101}-\ref{fig:lambda-Caltech101}, we have shown the personalization and generalization performances of the trained global models on various datasets alongside their H-Mean with the best performance for each one shown in a red marker/dot for each corresponding sub-plot.

In Eq.~\eqref{eq:final_loss}, we can see that increasing $\lambda$ will result in more emphasis on the fusion term ($\mathcal{L}^{\mathsf{Fus}}_n$). Doing so, as pointed out in Sec.~\ref{sec:local-model-ft}, will improve the personalization performance while it may sacrifice the generalization performance of the model which can be generally observed in Figs.~\ref{fig:lambda-UCF101}-~\ref{fig:lambda-Caltech101}. 
In rare cases, such as the Flowers dataset (Fig.~\ref{fig:lambda-OxFlowers}), we can observe that the generalization performance may improve as $\lambda$ increases. Such behaviors can be attributed to dataset-specific characteristics. Nevertheless, in most  scenarios (Fig.~\ref{fig:lambda-UCF101}-\ref{fig:lambda-StanCars} and \ref{fig:lambda-FGVCAir}-\ref{fig:lambda-Caltech101}), a decline in generalization and improvement in personalization can be observed upon increasing the value of $\lambda$. Collectively, the optimal choice of $\lambda$, resulting in the best H-Mean of personalization and generalization, is obtained at $\lambda=0.2$ (Fig.~\ref{fig:lambda_avg_datasets}-right).
\vspace{-2mm}
\begin{figure}[h]
    \centering
    \includegraphics[width=.9\linewidth]{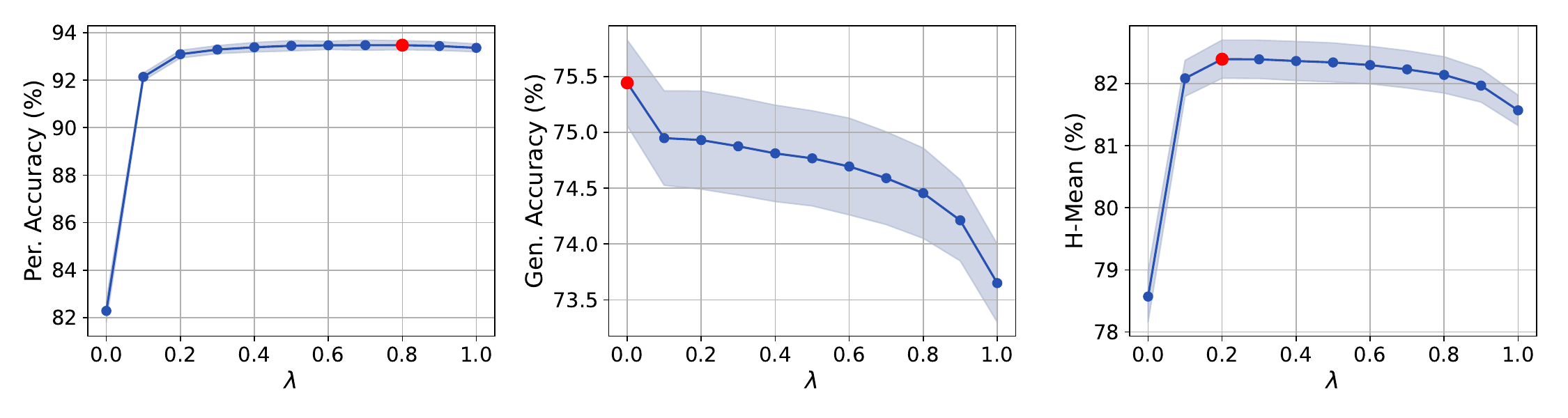}
    \vspace{-5mm}
    \caption{Personalization and generalization test accuracies alongside their harmonic mean (H-Mean) under changing values of parameter $\lambda$ \textbf{averaged across all datasets}.}
    \label{fig:lambda_avg_datasets}
\end{figure}
\vspace{-2mm}
\begin{figure}[h]
    \centering
    \includegraphics[width=.9\linewidth]{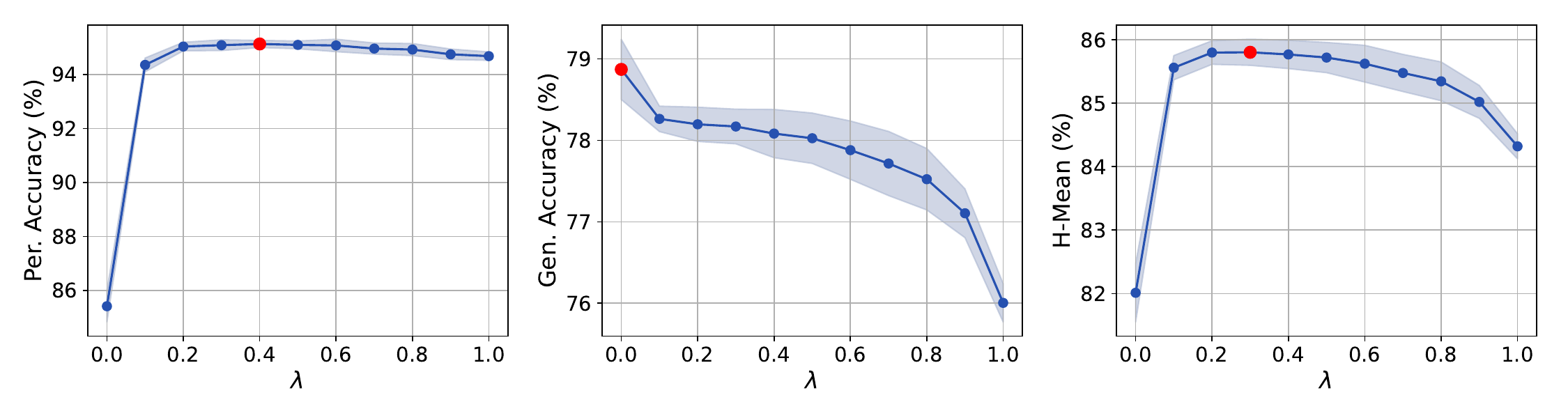}
    \vspace{-5mm}
    \caption{Personalization and generalization test accuracies alongside their harmonic mean (H-Mean) under changing values of parameter $\lambda$ for the \textbf{\textit{UCF101}} dataset.}
    \label{fig:lambda-UCF101}
\end{figure}
\vspace{-2mm}
\begin{figure}[h]
    \centering
    \includegraphics[width=.9\linewidth]{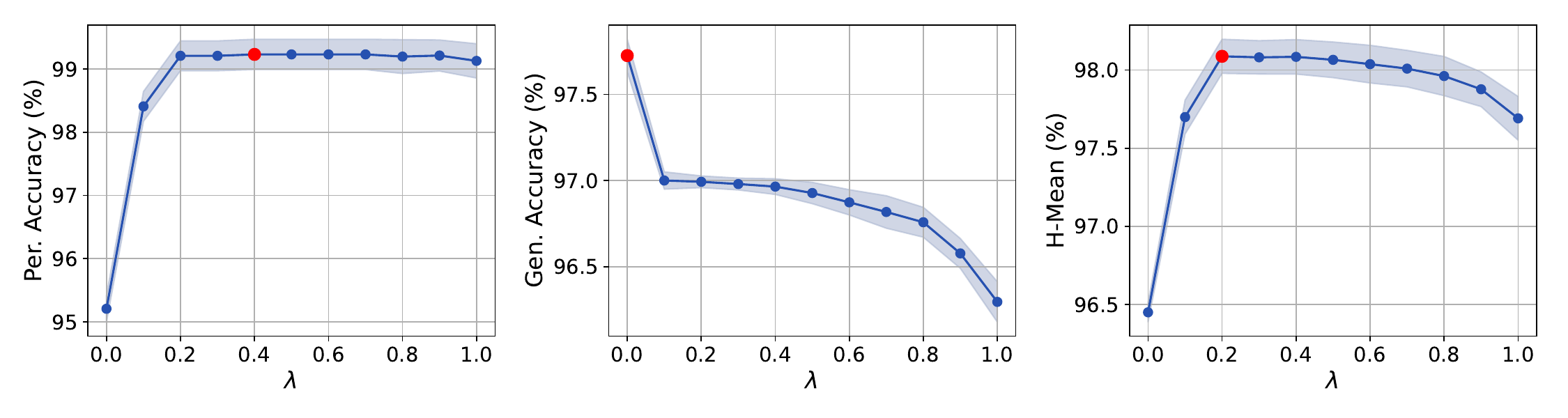}
    \vspace{-5mm}
    \caption{Personalization and generalization test accuracies alongside their harmonic mean (H-Mean) under changing values of parameter $\lambda$ for the \textbf{\textit{OxfordPets}} dataset.}
    \label{fig:lambda-OxPets}
\end{figure}
\begin{figure}[h]
    \centering
    \includegraphics[width=.9\linewidth]{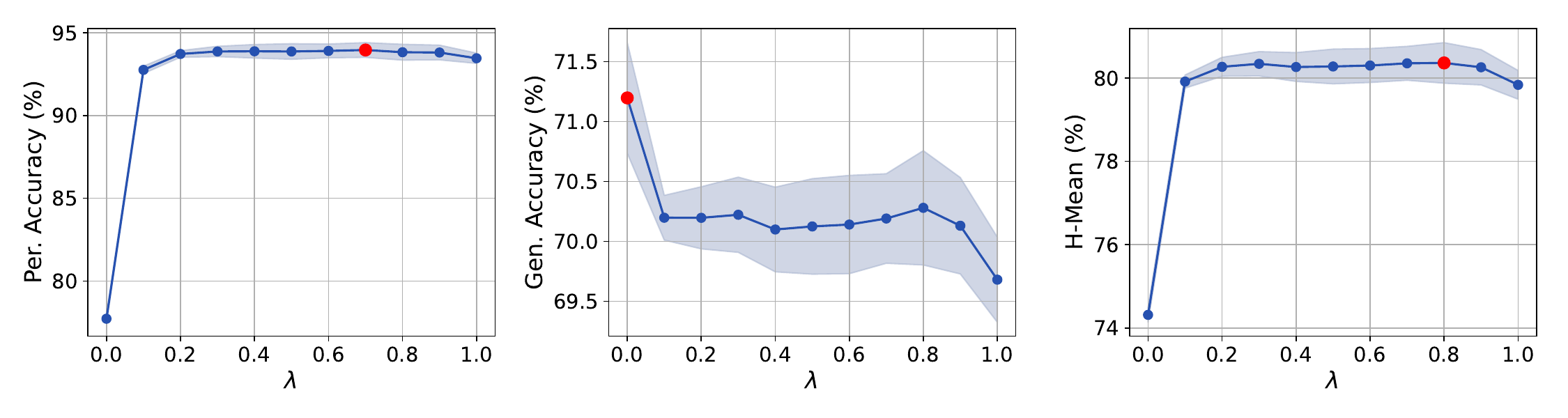}
    \caption{Personalization and generalization test accuracies alongside their harmonic mean (H-Mean) under changing values of parameter $\lambda$ for the \textbf{\textit{Stanford-Cars}} dataset.}
    \label{fig:lambda-StanCars}
\end{figure}
\begin{figure}[h]
    \centering
    \includegraphics[width=.9\linewidth]{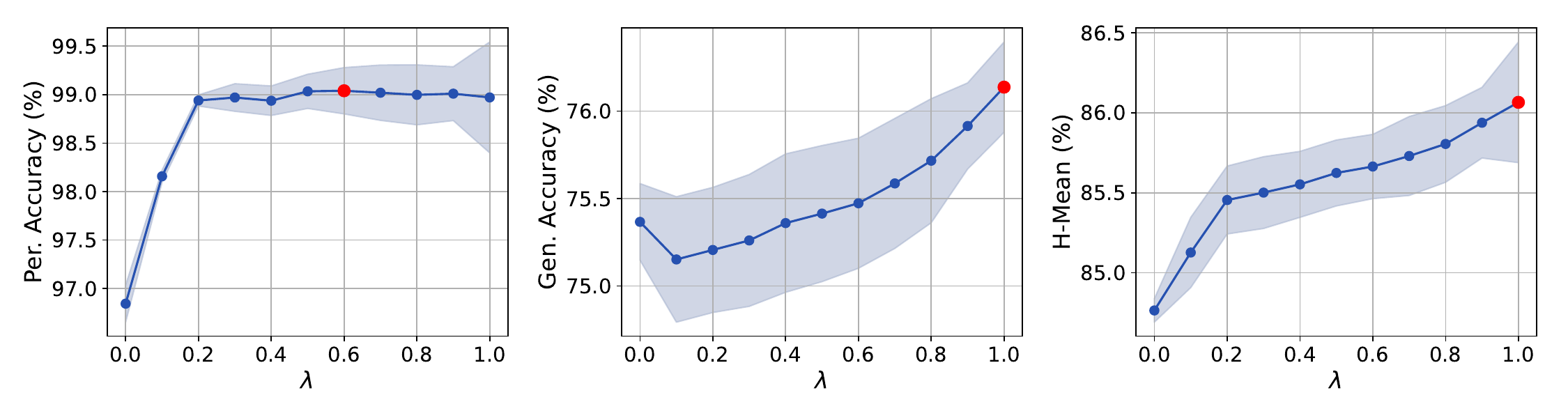}
    \caption{Personalization and generalization test accuracies alongside their harmonic mean (H-Mean) under changing values of parameter $\lambda$ for the \textbf{\textit{Flowers}} dataset.}
    \label{fig:lambda-OxFlowers}
\end{figure}
\begin{figure}[h]
    \centering
    \includegraphics[width=.9\linewidth]{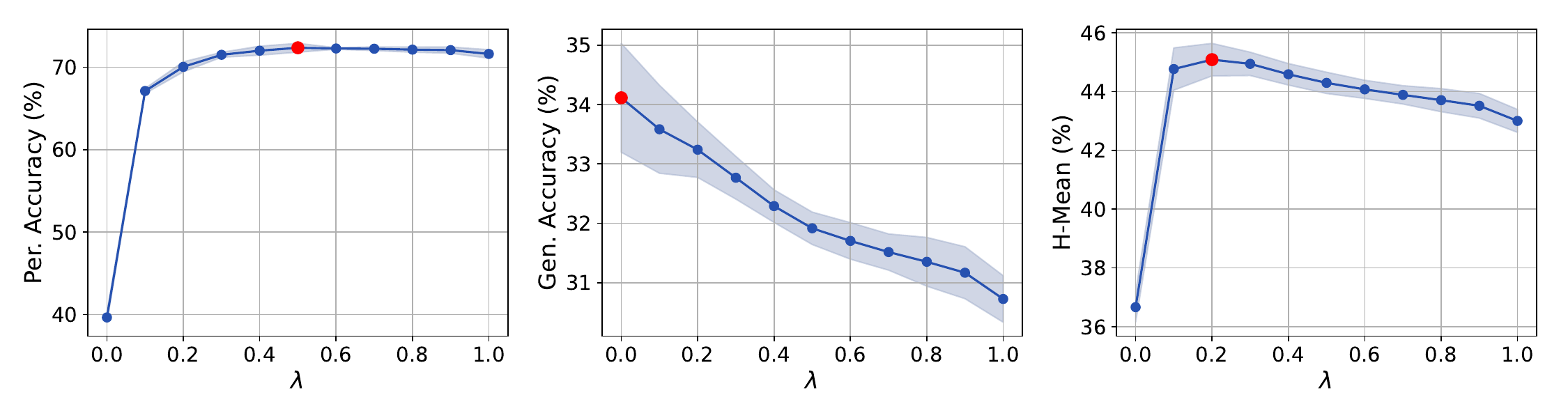}
    \caption{Personalization and generalization test accuracies alongside their harmonic mean (H-Mean) under changing values of parameter $\lambda$ for the \textbf{\textit{FGVC Aircraft}} dataset.}
    \label{fig:lambda-FGVCAir}
\end{figure}
\begin{figure}[h]
    \centering
    \includegraphics[width=.9\linewidth]{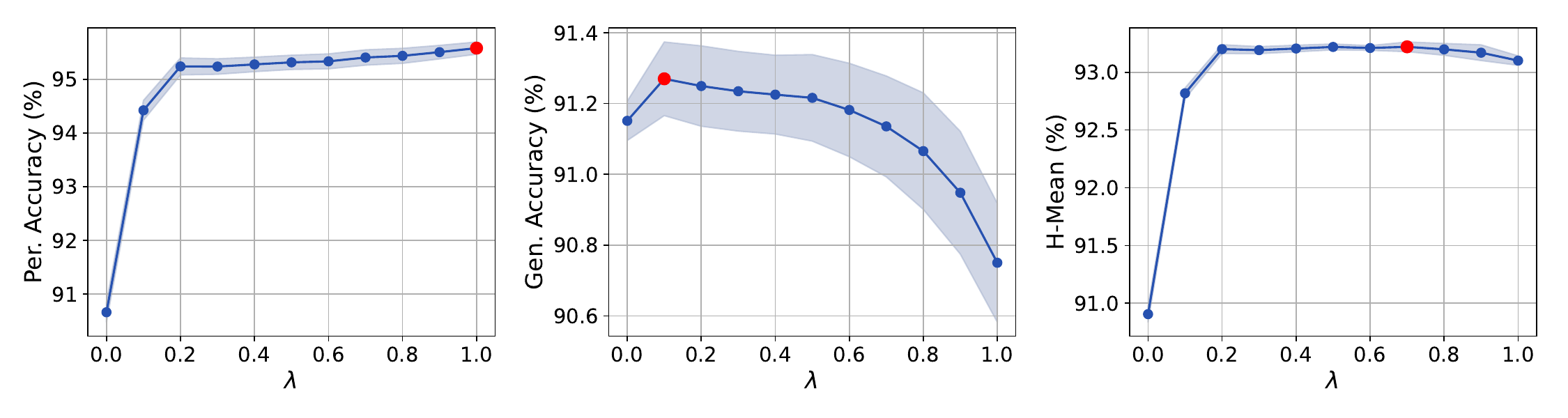}
    \caption{Personalization and generalization test accuracies alongside their harmonic mean (H-Mean) under changing values of parameter $\lambda$ for the \textbf{\textit{Food101}} dataset.}
    \label{fig:lambda-Food101}
\end{figure}
\begin{figure}[h]
    \centering
    \includegraphics[width=.9\linewidth]{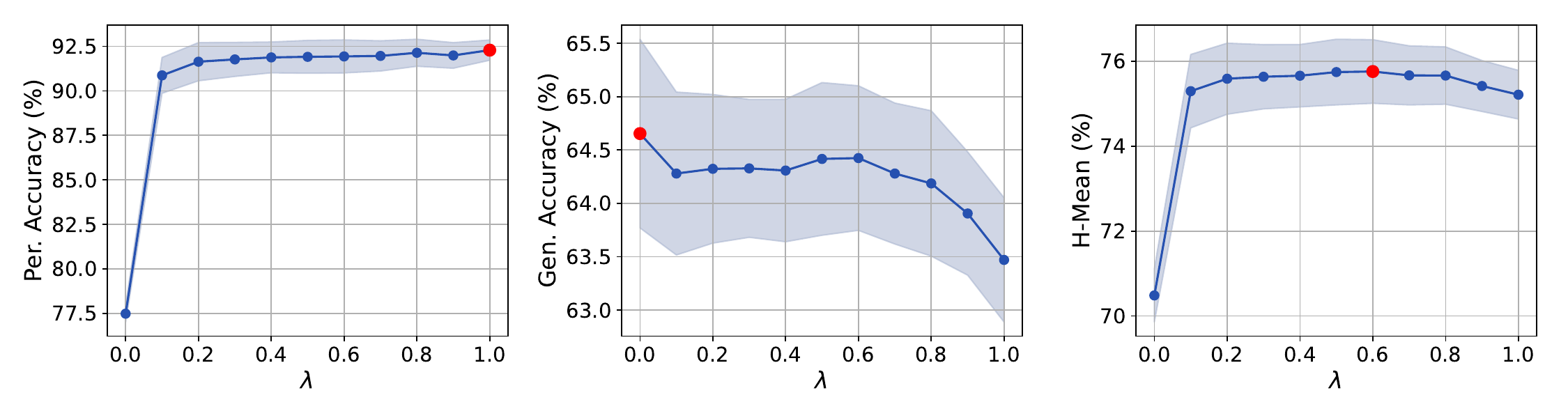}
    \caption{Personalization and generalization test accuracies alongside their harmonic mean (H-Mean) under changing values of parameter $\lambda$ for the \textbf{\textit{DTD}} dataset.}
    \label{fig:lambda-DTD}
\end{figure}
\begin{figure}[h]
    \centering
    \includegraphics[width=.9\linewidth]{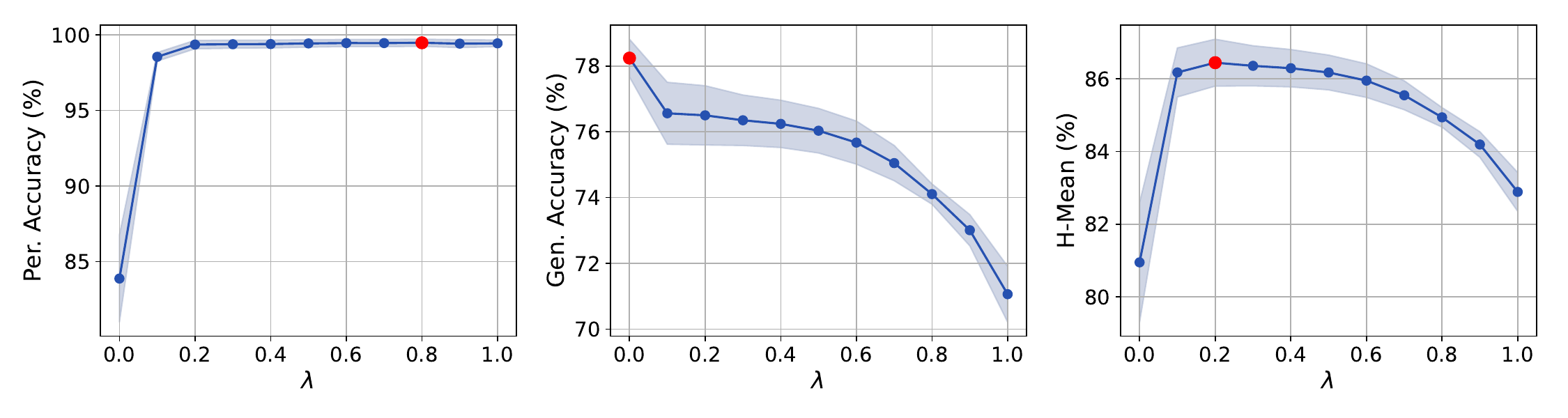}
    \caption{Personalization and generalization test accuracies alongside their harmonic mean (H-Mean) under changing values of parameter $\lambda$ for the \textit{EuroSAT} dataset.}
    \label{fig:lambda-EuroSat}
\end{figure}
\begin{figure}[h]
    \centering
    \includegraphics[width=.9\linewidth]{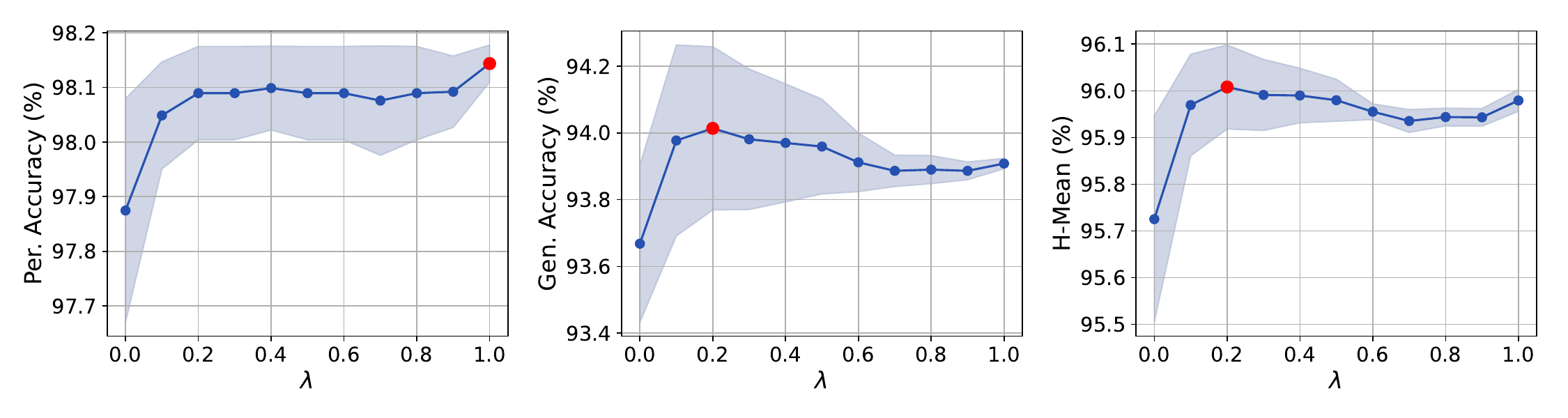}
    \caption{Personalization and generalization test accuracies alongside their harmonic mean (H-Mean) under changing values of parameter $\lambda$ for the \textbf{\textit{Caltech}} dataset.}
    \label{fig:lambda-Caltech101}
\end{figure}
\begin{figure}[h]
    \centering
    \includegraphics[width=.9\linewidth]{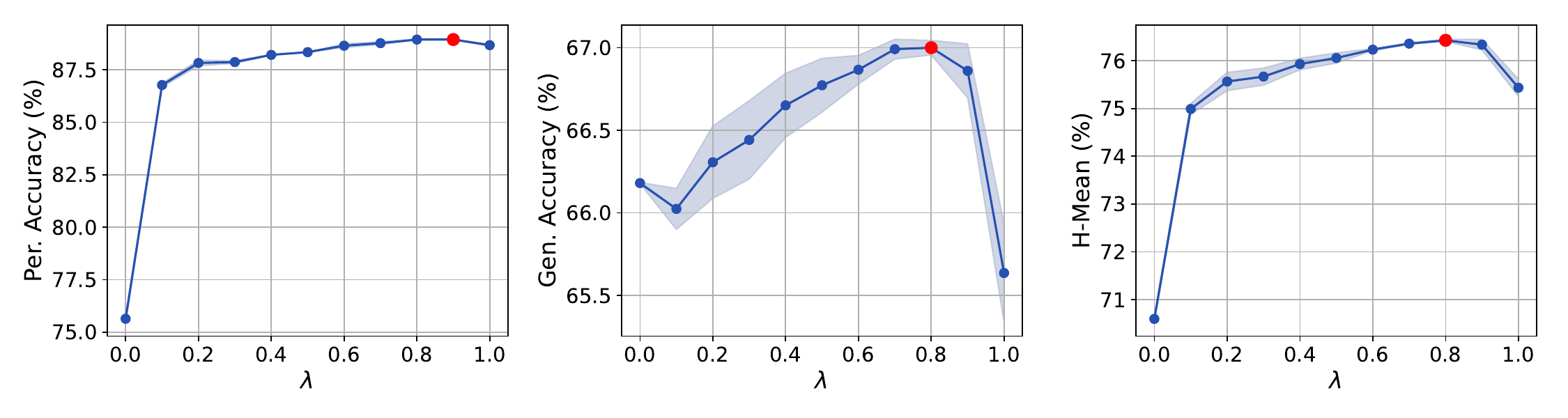}
    \caption{Personalization and generalization test accuracies alongside their harmonic mean (H-Mean) under changing values of parameter $\lambda$ for the \textbf{\textit{ImageNet}} dataset.}
    \label{fig:lambda-ImageNet}
\end{figure}

\clearpage
\section{Effect of Aggregation on Various Components}\label{ap:aggregation}
We devised a set of experiments to localize the effect of aggregating each trainable component in the model. We consider the following scenarios:
\begin{itemize}
    \item Our proposed method ({BLEND}) where the vision and text adapters are aggregated while the personalized projection adapter is preserved locally.
    \item Aggregation of all the trainable components (corresponding to ``Full" in the tables)
    \item No aggregation for any component where all the trained components are kept local (corresponding to ``NoAgg" in the tables)
    \item Only vision adapters are aggregated (corresponding to ``Only Vision" in the tables)
    \item Only text adapters are aggregated (corresponding to ``Only Text" in the tables)
\end{itemize}
Our results indicate that for 8 out of 10 scenarios (Tables~\ref{tab:agg-DTD}-\ref{tab:agg-FGVCAir}, \ref{tab:agg-OxFlowers}, \ref{tab:agg-StanCars}-\ref{tab:agg-Caltech101}), aggregating only the text adapters results in the best personalization performance. This comes naturally, as the information about the classes is primarily encoded into the text modality. Hence, aggregating only the text encoders, given the use of pretrained CLIP models for the vision backbones, can enable the models to swiftly adapt to the local data distributions, making personalization less effortful.
Intuitively, since the (Full) scenario aggregates all components, we expect it to result in better generalization performance. Our results verify this hypothesis as 9 out of 10 experiments (Tables~\ref{tab:agg-DTD}-\ref{tab:agg-Food101}, \ref{tab:agg-OxPets}-\ref{tab:agg-Caltech101}) have their best generalization test accuracy under the full aggregation scenario.
However, in all datasets (Tables~\ref{tab:agg-DTD}-\ref{tab:agg-Caltech101}) along with the case that the average performance across the datasets is considered (Table~\ref{tab:agg-avgAll}), 
considering the balance between the generalization and personalization (i.e., the H-Mean performance), {BLEND} results in the best performance.

\begin{table*}[h]
\centering
\begin{minipage}{0.32\linewidth}
\centering
\scriptsize
\captionof{table}{\textbf{Avg across datasets.}}
\begin{tabular}{c|ccc}
\hline
\rowcolor{gray!20} \textbf{Agg. Strategy} & Per. & Gen. & H-Mean \\
\hline
{BLEND} & 92.91 & 74.62 & \cellcolor{lightpink}{\textbf{82.15}} \\
Full & 83.35 & \textbf{75.41} & \cellcolor{lightpink}{79.01} \\
NoAgg & 89.12 & 69.71 & \cellcolor{lightpink}{77.71} \\
Only Vision & 74.64 & 70.02 & \cellcolor{lightpink}{71.82} \\
Only Text & \textbf{93.84} & 67.31 & \cellcolor{lightpink}{77.04} \\
\hline
\end{tabular}
\label{tab:agg-avgAll}
\end{minipage}
\hfill
\begin{minipage}{0.32\linewidth}
\centering
\scriptsize
\captionof{table}{DTD}
\begin{tabular}{c|ccc}
\hline
\rowcolor{gray!20} \textbf{Agg. Strategy} & Per. & Gen. & H-Mean \\
\hline
{BLEND} & 91.64 & 64.32 & \cellcolor{lightpink}{\textbf{75.59}} \\
Full & 81.83 & \textbf{65.62} & \cellcolor{lightpink}{72.84} \\
NoAgg & 92.76 & 62.96 & \cellcolor{lightpink}{75.01} \\
Only Vision & 86.15 & 61.92 & \cellcolor{lightpink}{72.05} \\
Only Text & \textbf{94.56} & 62.77 & \cellcolor{lightpink}{75.46} \\
\hline
\end{tabular}
\label{tab:agg-DTD}
\end{minipage}
\hfill
\begin{minipage}{0.32\linewidth}
\centering
\scriptsize
\captionof{table}{EuroSAT}
\begin{tabular}{c|ccc}
\hline
\rowcolor{gray!20} \textbf{Agg. Strategy} & Per. & Gen. & H-Mean \\
\hline
BLEND & 99.36 & 76.50 & \cellcolor{lightpink}{\textbf{86.44}} \\
Full & 89.50 & \textbf{78.88} & \cellcolor{lightpink}{83.85} \\
NoAgg & 99.56 & 72.15 & \cellcolor{lightpink}{83.66} \\
Only Vision & 99.43 & 72.85 & \cellcolor{lightpink}{84.09} \\
Only Text & \textbf{99.71} & 73.89 & \cellcolor{lightpink}{84.88} \\
\hline
\end{tabular}
\label{tab:agg-EuroSAT}
\end{minipage}
\end{table*}

\vspace{-3mm}
\begin{table*}[h]
\centering
\begin{minipage}{0.32\linewidth}
\centering
\scriptsize
\captionof{table}{FGVC Aircraft}
\begin{tabular}{c|ccc}
\hline
\rowcolor{gray!20} \textbf{Agg. Strategy} & Per. & Gen. & H-Mean \\
\hline
BLEND& 70.05 & 33.24 & \cellcolor{lightpink}{\textbf{45.09}} \\
Full & 40.22 & \textbf{35.17} & \cellcolor{lightpink}{37.53} \\
NoAgg & 52.46 & 26.49 & \cellcolor{lightpink}{35.20} \\
Only Vision & 25.28 & 30.35 & \cellcolor{lightpink}{27.58} \\
Only Text & \textbf{72.04} & 23.09 & \cellcolor{lightpink}{34.97} \\
\hline
\end{tabular}
\label{tab:agg-FGVCAir}
\end{minipage}
\hfill
\begin{minipage}{0.32\linewidth}
\centering
\scriptsize
\captionof{table}{Food101}
\begin{tabular}{c|ccc}
\hline
\rowcolor{gray!20} \textbf{Agg. Strategy} & Per. & Gen. & H-Mean \\
\hline
BLEND & 95.24 & 91.25 & \cellcolor{lightpink}{\textbf{93.20}} \\
Full & 90.61 & \textbf{91.27} & \cellcolor{lightpink}{90.94} \\
NoAgg & \textbf{96.00} & 87.10 & \cellcolor{lightpink}{91.33} \\
Only Vision & 86.58 & 88.00 & \cellcolor{lightpink}{87.28} \\
Only Text & 95.59 & 84.33 & \cellcolor{lightpink}{89.61} \\
\hline
\end{tabular}
\label{tab:agg-Food101}
\end{minipage}
\hfill
\begin{minipage}{0.32\linewidth}
\centering
\scriptsize
\captionof{table}{Flowers}
\begin{tabular}{c|ccc}
\hline
\rowcolor{gray!20} \textbf{Agg. Strategy} & Per. & Gen. & H-Mean \\
\hline
BLEND & 98.94 & \textbf{75.21} & \cellcolor{lightpink}{\textbf{85.46}} \\
Full & 97.84 & 74.11 & \cellcolor{lightpink}{84.34} \\
NoAgg & 96.64 & 69.52 & \cellcolor{lightpink}{80.87} \\
Only Vision & 80.51 & 74.09 & \cellcolor{lightpink}{77.17} \\
Only Text & \textbf{99.07} & 65.18 & \cellcolor{lightpink}{78.63} \\
\hline
\end{tabular}
\label{tab:agg-OxFlowers}
\end{minipage}
\end{table*}

\vspace{-3mm}
\begin{table*}[h]
\centering
\begin{minipage}{0.32\linewidth}
\centering
\scriptsize
\captionof{table}{OxfordPets}
\begin{tabular}{c|ccc}
\hline
\rowcolor{gray!20} \textbf{Agg. Strategy} & Per. & Gen. & H-Mean \\
\hline
BLEND & 99.21 & 96.99 & \cellcolor{lightpink}{\textbf{98.09}} \\
Full & 94.87 & \textbf{97.45} & \cellcolor{lightpink}{96.14} \\
NoAgg & \textbf{99.27} & 96.28 & \cellcolor{lightpink}{97.75} \\
Only Vision & 99.02 & 96.62 & \cellcolor{lightpink}{97.81} \\
Only Text & 99.23 & 96.06 & \cellcolor{lightpink}{97.62} \\
\hline
\end{tabular}
\label{tab:agg-OxPets}
\end{minipage}
\hfill
\begin{minipage}{0.32\linewidth}
\centering
\scriptsize
\captionof{table}{Stanford Cars}
\begin{tabular}{c|ccc}
\hline
\rowcolor{gray!20} \textbf{Agg. Strategy} & Per. & Gen. & H-Mean \\
\hline
BLEND & 93.72 & 70.20 & \cellcolor{lightpink}{\textbf{80.27}} \\
Full & 78.95 & \textbf{71.11} & \cellcolor{lightpink}{74.82} \\
NoAgg & 81.11 & 56.14 & \cellcolor{lightpink}{66.35} \\
Only Vision & 47.83 & 57.44 & \cellcolor{lightpink}{52.20} \\
Only Text & \textbf{94.47} & 48.10 & \cellcolor{lightpink}{63.75} \\
\hline
\end{tabular}
\label{tab:agg-StanCars}
\end{minipage}
\hfill
\begin{minipage}{0.32\linewidth}
\centering
\scriptsize
\captionof{table}{UCF101}
\begin{tabular}{c|ccc}
\hline
\rowcolor{gray!20} \textbf{Agg. Strategy} & Per. & Gen. & H-Mean \\
\hline
BLEND & 95.03 & 78.20 & \cellcolor{lightpink}{\textbf{85.80}} \\
Full & 86.51 & \textbf{79.65} & \cellcolor{lightpink}{82.94} \\
NoAgg & 94.04 & 71.37 & \cellcolor{lightpink}{81.15} \\
Only Vision & 73.16 & 68.97 & \cellcolor{lightpink}{71.00} \\
Only Text & \textbf{95.32} & 71.45 & \cellcolor{lightpink}{81.67} \\
\hline
\end{tabular}
\label{tab:agg-UCF101}
\end{minipage}
\end{table*}

\vspace{-3mm}
\begin{table*}[!h]
\centering
\begin{minipage}{0.32\linewidth}
\centering
\scriptsize
\captionof{table}{ImageNet}
\begin{tabular}{c|ccc}
\hline
\rowcolor{gray!20} \textbf{Agg. Strategy} & Per. & Gen. & H-Mean \\
\hline
BLEND & 87.83 & 66.31 & \cellcolor{lightpink}{\textbf{75.56}} \\
Full & 76.08 & \textbf{67.15} & \cellcolor{lightpink}{71.34} \\
NoAgg & 81.57 & 61.38 & \cellcolor{lightpink}{70.05} \\
Only Vision & 51.66 & 56.73 & \cellcolor{lightpink}{54.08} \\
Only Text & \textbf{90.30} & 54.57 & \cellcolor{lightpink}{68.03} \\
\hline
\end{tabular}
\label{tab:agg-ImageNet}
\end{minipage}
\begin{minipage}{0.32\linewidth}
\centering
\scriptsize
\captionof{table}{Caltech}
\begin{tabular}{c|ccc}
\hline
\rowcolor{gray!20} \textbf{Agg. Strategy} & Per. & Gen. & H-Mean \\
\hline
BLEND & 98.09 & \textbf{94.01} & \cellcolor{lightpink}{\textbf{96.01}} \\
Full & 97.11 & 93.67 & \cellcolor{lightpink}{95.36} \\
NoAgg & 97.80 & 93.68 & \cellcolor{lightpink}{95.69} \\
Only Vision & 96.75 & 93.20 & \cellcolor{lightpink}{94.94} \\
Only Text & \textbf{98.12} & 93.65 & \cellcolor{lightpink}{95.83} \\
\hline
\end{tabular}
\label{tab:agg-Caltech101}
\end{minipage}
\end{table*}

\clearpage
\section{Effect of Trainable Text Projection Head}\label{ap:trainable_txt_encoder}
To examine the effect of frozen vs. trainable text projection head, we experiment with different text projection head adapter ranks for the values $\{\emptyset, 4, 8, 16, 32, 64, 128, 256, \text{Full-rank}\}$, where $\emptyset$ corresponds to the case where no adapter is used (i.e., the case where the text projection head is frozen). We  include the results for all datasets in Tables~\ref{tab:text-dtd}-\ref{tab:text-Caltech101} with the average performance across the datasets included in Table~\ref{tab:text-avgAll}. The best personalization result in 6 out of 10 scenarios (Tables~\ref{tab:text-EuroSAT}, \ref{tab:text-FGVCAir}, \ref{tab:text-OxPets}, \ref{tab:text-UCF101}-\ref{tab:text-Caltech101}) is achieved without adapting the text projection head to the new data (i.e., by keeping it frozen). The same situation holds for the generalization performance in 5 out of 10 scenarios (Tables~\ref{tab:text-Food101}-\ref{tab:text-UCF101}). Given that the best average H-Mean across all datasets is also observed in the absence of the text projection head training (Table~\ref{tab:text-avgAll}, right column), we conclude that fine-tuning the text projection head over new sets of data can be redundant on average as it introduces training overhead with no performance gains in terms of H-Mean.
\begin{table*}[h]
\centering
\begin{minipage}{0.32\linewidth}
\centering
\scriptsize
\captionof{table}{\textbf{Avg across datasets.}}
\begin{tabular}{c|ccc}
\hline
\rowcolor{gray!20} Text Proj. Rank & Per. & Gen. & H-Mean \\
\hline
No Adapter & 92.91 & \textbf{74.62} & \cellcolor{lightpink}{\textbf{82.15}} \\
4 & 92.92 & 73.96 & \cellcolor{lightpink}{81.74} \\
8 & \textbf{92.98} & 73.88 & \cellcolor{lightpink}{81.72} \\
16 & 92.56 & 73.76 & \cellcolor{lightpink}{81.48} \\
32 & 92.60 & 73.76 & \cellcolor{lightpink}{81.47} \\
64 & 92.49 & 73.72 & \cellcolor{lightpink}{81.40} \\
128 & 92.11 & 73.56 & \cellcolor{lightpink}{81.19} \\
256 & 91.70 & 73.18 & \cellcolor{lightpink}{80.77} \\
Full-rank & 91.40 & 72.84 & \cellcolor{lightpink}{80.46} \\
\hline
\end{tabular}
\label{tab:text-avgAll}
\end{minipage}
\hfill
\begin{minipage}{0.32\linewidth}
\centering
\scriptsize
\captionof{table}{DTD}
\begin{tabular}{c|ccc}
\hline
\rowcolor{gray!20} Text Proj. Rank & Per. & Gen. & H-Mean \\
\hline
No Adapter & 91.64 & 64.32 & \cellcolor{lightpink}{75.59} \\
4 & 92.93 & \textbf{64.84} & \cellcolor{lightpink}{\textbf{76.38}} \\
8 & \textbf{92.99} & 64.76 & \cellcolor{lightpink}{76.35} \\
16 & 92.50 & 64.46 & \cellcolor{lightpink}{75.98} \\
32 & 92.89 & 63.80 & \cellcolor{lightpink}{75.65} \\
64 & 92.20 & 62.92 & \cellcolor{lightpink}{74.80} \\
128 & 92.25 & 62.25 & \cellcolor{lightpink}{74.33} \\
256 & 91.39 & 61.17 & \cellcolor{lightpink}{73.28} \\
Full-rank & 91.12 & 60.83 & \cellcolor{lightpink}{72.95} \\
\hline
\end{tabular}
\label{tab:text-dtd}
\end{minipage}
\hfill
\begin{minipage}{0.32\linewidth}
\centering
\scriptsize
\captionof{table}{EuroSAT}
\begin{tabular}{c|ccc}
\hline
\rowcolor{gray!20} Text Proj. Rank & Per. & Gen. & H-Mean \\
\hline
No Adapter & \textbf{99.36} & 76.50 & \cellcolor{lightpink}{\textbf{86.44}} \\
4 & 99.01 & 73.43 & \cellcolor{lightpink}{84.33} \\
8 & 99.07 & 73.35 & \cellcolor{lightpink}{84.29} \\
16 & 98.78 & 75.01 & \cellcolor{lightpink}{85.27} \\
32 & 98.62 & 75.44 & \cellcolor{lightpink}{85.49} \\
64 & 98.60 & \textbf{76.53} & \cellcolor{lightpink}{86.17} \\
128 & 98.27 & 75.95 & \cellcolor{lightpink}{85.68} \\
256 & 98.58 & 75.13 & \cellcolor{lightpink}{85.27} \\
Full-rank & 98.07 & 71.33 & \cellcolor{lightpink}{82.59} \\
\hline
\end{tabular}
\label{tab:text-EuroSAT}
\end{minipage}
\end{table*}

\vspace{-3mm}
\begin{table*}[h]
\centering
\begin{minipage}{0.32\linewidth}
\centering
\scriptsize
\captionof{table}{FGVC Aircraft}
\begin{tabular}{c|ccc}
\hline
\rowcolor{gray!20} Text Proj. Rank & Per. & Gen. & H-Mean \\
\hline
No Adapter & \textbf{70.05} & 33.24 & \cellcolor{lightpink}{45.09} \\
4 & 69.65 & \textbf{33.37} & \cellcolor{lightpink}{\textbf{45.12}} \\
8 & 69.71 & 33.29 & \cellcolor{lightpink}{45.06} \\
16 & 68.48 & 32.92 & \cellcolor{lightpink}{44.46} \\
32 & 69.25 & 32.48 & \cellcolor{lightpink}{44.22} \\
64 & 68.97 & 32.04 & \cellcolor{lightpink}{43.75} \\
128 & 68.05 & 32.62 & \cellcolor{lightpink}{44.10} \\
256 & 66.91 & 31.86 & \cellcolor{lightpink}{43.16} \\
Full-rank & 67.02 & 33.31 & \cellcolor{lightpink}{44.50} \\
\hline
\end{tabular}
\label{tab:text-FGVCAir}
\end{minipage}
\hfill
\begin{minipage}{0.32\linewidth}
\centering
\scriptsize
\captionof{table}{Food101}
\begin{tabular}{c|ccc}
\hline
\rowcolor{gray!20} Text Proj. Rank & Per. & Gen. & H-Mean \\
\hline
No Adapter & 95.24 & \textbf{91.25} & \cellcolor{lightpink}{\textbf{93.20}} \\
4 & 95.36 & 91.13 & \cellcolor{lightpink}{\textbf{93.20}} \\
8 & \textbf{95.42} & 91.05 & \cellcolor{lightpink}{93.18} \\
16 & 94.93 & 90.72 & \cellcolor{lightpink}{92.78} \\
32 & 94.98 & 90.37 & \cellcolor{lightpink}{92.62} \\
64 & 94.86 & 90.10 & \cellcolor{lightpink}{92.42} \\
128 & 94.66 & 90.18 & \cellcolor{lightpink}{92.36} \\
256 & 93.91 & 90.06 & \cellcolor{lightpink}{91.95} \\
Full-rank & 93.13 & 89.88 & \cellcolor{lightpink}{91.48} \\
\hline
\end{tabular}
\label{tab:text-Food101}
\end{minipage}
\hfill
\begin{minipage}{0.32\linewidth}
\centering
\scriptsize
\captionof{table}{Flowers}
\begin{tabular}{c|ccc}
\hline
\rowcolor{gray!20} Text Proj. Rank & Per. & Gen. & H-Mean \\
\hline
No Adapter & 98.94 & \textbf{75.21} & \cellcolor{lightpink}{\textbf{85.46}} \\
4 & 99.35 & 74.35 & \cellcolor{lightpink}{85.05} \\
8 & \textbf{99.41} & 74.27 & \cellcolor{lightpink}{85.02} \\
16 & 99.37 & 73.39 & \cellcolor{lightpink}{84.43} \\
32 & 99.32 & 73.47 & \cellcolor{lightpink}{84.46} \\
64 & 99.30 & 73.34 & \cellcolor{lightpink}{84.37} \\
128 & 99.30 & 73.17 & \cellcolor{lightpink}{84.25} \\
256 & 99.33 & 72.76 & \cellcolor{lightpink}{84.00} \\
Full-rank & 99.27 & 72.48 & \cellcolor{lightpink}{83.79} \\
\hline
\end{tabular}
\label{tab:text-Flowers}
\end{minipage}
\end{table*}

\vspace{-3mm}
\begin{table*}[h]
\centering
\begin{minipage}{0.32\linewidth}
\centering
\scriptsize
\captionof{table}{OxfordPets}
\begin{tabular}{c|ccc}
\hline
\rowcolor{gray!20} Text Proj. Rank & Per. & Gen. & H-Mean \\
\hline
No Adapter & \textbf{99.21} & \textbf{96.99} & \cellcolor{lightpink}{\textbf{98.09}} \\
4 & 99.11 & 96.05 & \cellcolor{lightpink}{97.56} \\
8 & 99.17 & 95.97 & \cellcolor{lightpink}{97.55} \\
16 & 99.13 & 96.29 & \cellcolor{lightpink}{97.69} \\
32 & 98.24 & 96.07 & \cellcolor{lightpink}{97.14} \\
64 & 98.41 & 95.76 & \cellcolor{lightpink}{97.07} \\
128 & 98.38 & 95.20 & \cellcolor{lightpink}{96.76} \\
256 & 98.08 & 95.77 & \cellcolor{lightpink}{96.91} \\
Full-rank & 98.49 & 96.16 & \cellcolor{lightpink}{97.31} \\
\hline
\end{tabular}
\label{tab:text-OxPets}
\end{minipage}
\hfill
\begin{minipage}{0.32\linewidth}
\centering
\scriptsize
\captionof{table}{Stanford Cars}
\begin{tabular}{c|ccc}
\hline
\rowcolor{gray!20} Text Proj. Rank & Per. & Gen. & H-Mean \\
\hline
No Adapter & 93.72 & \textbf{70.20} & \cellcolor{lightpink}{\textbf{80.27}} \\
4 & 93.75 & 68.51 & \cellcolor{lightpink}{79.17} \\
8 & \textbf{93.81} & 68.43 & \cellcolor{lightpink}{79.14} \\
16 & 92.90 & 66.61 & \cellcolor{lightpink}{77.59} \\
32 & 92.89 & 67.92 & \cellcolor{lightpink}{78.47} \\
64 & 93.76 & 68.79 & \cellcolor{lightpink}{79.35} \\
128 & 92.74 & 69.49 & \cellcolor{lightpink}{79.45} \\
256 & 92.73 & 68.75 & \cellcolor{lightpink}{78.96} \\
Full-rank & 92.72 & 69.30 & \cellcolor{lightpink}{79.32} \\
\hline
\end{tabular}
\label{tab:text-StanCars}
\end{minipage}
\hfill
\begin{minipage}{0.32\linewidth}
\centering
\scriptsize
\captionof{table}{UCF101}
\begin{tabular}{c|ccc}
\hline
\rowcolor{gray!20} Text Proj. Rank & Per. & Gen. & H-Mean \\
\hline
No Adapter & \textbf{95.03} & \textbf{78.20} & \cellcolor{lightpink}{\textbf{85.80}} \\
4 & 94.76 & 76.67 & \cellcolor{lightpink}{84.76} \\
8 & 94.82 & 76.59 & \cellcolor{lightpink}{84.73} \\
16 & 94.37 & 76.24 & \cellcolor{lightpink}{84.35} \\
32 & 94.85 & 76.58 & \cellcolor{lightpink}{84.74} \\
64 & 94.60 & 76.28 & \cellcolor{lightpink}{84.45} \\
128 & 93.99 & 76.41 & \cellcolor{lightpink}{84.29} \\
256 & 94.02 & 76.44 & \cellcolor{lightpink}{84.33} \\
Full-rank & 92.69 & 75.13 & \cellcolor{lightpink}{82.99} \\
\hline
\end{tabular}
\label{tab:text-UCF101}
\end{minipage}
\end{table*}

\vspace{-3mm}
\begin{table*}[!h]
\centering
\begin{minipage}{0.32\linewidth}
\centering
\scriptsize
\captionof{table}{ImageNet}
\begin{tabular}{c|ccc}
\hline
\rowcolor{gray!20} Text Proj. Rank & Per. & Gen. & H-Mean \\
\hline
No Adapter & \textbf{87.83} & 66.31 & \cellcolor{lightpink}{75.56} \\
4 & 87.25 & 66.93 & \cellcolor{lightpink}{75.75} \\
8 & 87.31 & 66.85 & \cellcolor{lightpink}{75.72} \\
16 & 87.14 & \textbf{67.49} & \cellcolor{lightpink}{\textbf{76.07}} \\
32 & 86.86 & 67.46 & \cellcolor{lightpink}{75.94} \\
64 & 86.15 & 67.36 & \cellcolor{lightpink}{75.61} \\
128 & 85.46 & 66.15 & \cellcolor{lightpink}{74.58} \\
256 & 84.07 & 65.70 & \cellcolor{lightpink}{73.76} \\
Full-rank & 83.67 & 65.84 & \cellcolor{lightpink}{73.69} \\
\hline
\end{tabular}
\label{tab:text-ImageNet}
\end{minipage}
\begin{minipage}{0.32\linewidth}
\centering
\scriptsize
\captionof{table}{Caltech}
\begin{tabular}{c|ccc}
\hline
\rowcolor{gray!20} Text Proj. Rank & Per. & Gen. & H-Mean \\
\hline
No Adapter & \textbf{98.09} & 94.01 & \cellcolor{lightpink}{96.01} \\
4 & 97.99 & 94.32 & \cellcolor{lightpink}{96.12} \\
8 & 98.05 & 94.24 & \cellcolor{lightpink}{96.11} \\
16 & 98.01 & \textbf{94.47} & \cellcolor{lightpink}{\textbf{96.21}} \\
32 & 98.05 & 94.02 & \cellcolor{lightpink}{95.99} \\
64 & 98.02 & 94.09 & \cellcolor{lightpink}{96.02} \\
128 & 97.99 & 94.17 & \cellcolor{lightpink}{96.04} \\
256 & 97.96 & 94.18 & \cellcolor{lightpink}{96.03} \\
Full-rank & 97.82 & 94.14 & \cellcolor{lightpink}{95.95} \\
\hline
\end{tabular}
\label{tab:text-Caltech101}
\end{minipage}
\end{table*}

\clearpage
\section{Performance Variation Across Different Settings}\label{app:performance_variation}

Table~\ref{tab:blend_performance_variation} summarizes BLEND's performance under varying levels of data heterogeneity, number of clients, and local data scarcity. Across all settings, BLEND achieves larger H-mean gains over the best baseline as task difficulty increases. Specifically, the gains increase with stronger data heterogeneity, more clients, and fewer local training samples. Similar improvements are observed on the real-world iNaturalist-User-120k dataset as the number of clients increases. These results highlight BLEND's effectiveness in challenging federated/distributed fine-tuning scenarios.

\begin{table*}[!h]
\centering
\caption{Performance variation of BLEND across different settings. H-mean gains are reported relative to the best baseline.}
\label{tab:blend_performance_variation}
\resizebox{\textwidth}{!}{%
\begin{tabular}{l|ll}
\hline
\rowcolor{gray!20} \textbf{Setting} &
\textbf{Increasing Difficulty} &
\textbf{H-Mean Gain vs. Best Baseline} \\
\hline

Data heterogeneity (across 10 datasets in Sec.~\ref{sec:experiments}) &
$\alpha^{\mathrm{Dir}}: 10 \rightarrow 0.1 \rightarrow 0.01$ &
$+0.9 \rightarrow +2.9 \rightarrow +4.8$ \\

\hline

Number of clients (across 10 datasets in Sec.~\ref{sec:experiments}) &
$N: 10 \rightarrow 20 \rightarrow 30$ &
$+4.8 \rightarrow +5.6 \rightarrow +6.1$ \\

\hline

Real-World iNaturalist-User-120k Dataset (Appendix~\ref{ap:real_large}) &
$N: 100 \rightarrow 200$ &
$+4.9 \rightarrow +5.6$ \\

\hline

Local data scarcity (across 10 datasets in Sec.~\ref{sec:experiments}) &
\#Shots: $32 \rightarrow 16 \rightarrow 8 \rightarrow 4 \rightarrow 2$ &
$+3.0 \rightarrow +4.8 \rightarrow +5.2
\rightarrow +5.3 \rightarrow +5.7$ \\

\hline
\end{tabular}%
}
\end{table*}

\clearpage

\section{Effect of the Running Anchor Loss}
\label{app:running_anchor}

To further investigate the effectiveness of BLEND's running anchor loss, we compare it against two alternative zero-shot CLIP regularization strategies: (i) \textbf{Zero-Shot G-Head}, which uses the frozen zero-shot CLIP vision representation for the G-head, and (ii) \textbf{Zero-Shot Distilled G-Head}, which employs KL divergence to distill zero-shot knowledge from frozen CLIP's vision projection into the G-head.

All experiments follow the experimental setting of Table~I in the main paper. As shown in Table~\ref{tab:running_anchor}, our proposed running anchor loss achieves the highest H-mean of 82.15\%, outperforming the zero-shot G-head and zero-shot distilled G-head by 2.20 and 1.61 percentage points, respectively. These results demonstrate the effectiveness of our running anchor loss in balancing personalization and generalization.

\begin{table}[!h]
\centering
\scriptsize
\caption{Comparison of BLEND's running anchor with alternative zero-shot CLIP regularization strategies.}
\label{tab:running_anchor}
\begin{tabular}{l|ccc}
\hline
\rowcolor{gray!20}\textbf{Method} & \textbf{Per.} & \textbf{Gen.} & \textbf{H-Mean} \\
\hline
BLEND with Zero-Shot G-Head & 89.50 & 72.31 & 79.95 \\
BLEND with Zero-Shot Distilled G-Head & 92.83 & 72.35 & 80.54 \\
BLEND with Running Anchor Loss & \textbf{92.91} & \textbf{74.62} & \textbf{82.15} \\
\hline
\end{tabular}
\end{table}

\clearpage

\section{Effect of the Adaptive Fusion Weight}
\label{app:adaptive_fusion}
To further investigate the sensitivity of BLEND to fusion weight $\gamma$, we conduct additional ablation studies over different values of $\gamma$ while varying the degree of data heterogeneity and the number of clients. Table~\ref{tab:adaptive_fusion} summarizes the optimal fusion weights and their corresponding H-mean performance. As the Dirichlet concentration parameter decreases from $10$ to $0.01$, indicating increasingly heterogeneous client data, the optimal $\gamma$ increases from $0.03$ to $0.09$ (this is because under more heterogeneous data, the need for model personalization increases, which is achieved under a larger value of $\gamma$). Similarly, increasing the number of clients from $10$ to $30$ results in a small increase in the optimal $\gamma$ from $0.09$ to $0.10$. These observations suggest that setting-dependent fusion weights can further improve BLEND's performance.

\begin{table}[!h]
\centering
\scriptsize
\caption{Effect of fusion weight $\gamma$ under varying levels of data heterogeneity and different numbers of clients. }
\label{tab:adaptive_fusion}
\begin{minipage}[t]{0.46\columnwidth}
\raggedleft
\begin{tabular}{ccc}
\hline
\rowcolor{gray!20} $\alpha^{\mathrm{Dir}}$ & Optimal $\gamma$ & H-Mean \\
\hline
10 & 0.03 & 79.66 \\
0.1 & 0.06 & 80.86 \\
0.01 & 0.09 & 82.15 \\
\hline
\end{tabular}
\end{minipage}%
\hspace{2.5em}%
\begin{minipage}[t]{0.46\columnwidth}
\raggedright
\begin{tabular}{ccc}
\hline
\rowcolor{gray!20} $N$ & Optimal $\gamma$ & H-Mean \\
\hline
10 & 0.09 & 82.15 \\
20 & 0.10 & 82.83 \\
30 & 0.10 & 83.10 \\
\hline
\end{tabular}
\end{minipage}
\end{table}

\clearpage

\section{Effect of Explicit Personalization}
\label{app:explicit_personalization}

To investigate the effect of explicitly supervising the personalized branch (i.e., P-head), we introduce an additional cross-entropy (CE) loss that directly optimizes the personalized representation, rather than relying solely on its contribution through the fusion CE loss. We refer to this variant as BLEND (Sep-Per). As shown in Table~\ref{tab:explicit_personalization}, introducing explicit personalization increases personalization accuracy from 92.91\% to 93.56\%, but decreases generalization accuracy from 74.62\% to 70.96\%. Consequently, the H-mean decreases from 82.15\% to 79.78\%. These results suggest that directly optimizing the personalized branch can improve client-specific performance at the expense of generalization, supporting BLEND's original formulation, which optimizes the personalized representation through the fusion objective.

\begin{table}[!h]
\centering
\scriptsize
\caption{Effect of explicitly supervising the personalized branch with a separate cross-entropy loss.}
\label{tab:explicit_personalization}
\begin{tabular}{l|ccc}
\hline
\rowcolor{gray!20} \textbf{Method} & \textbf{Per.} & \textbf{Gen.} & \textbf{H-Mean} \\
\hline
BLEND (Sep-Per) & \textbf{93.56} & 70.96 & 79.78 \\
BLEND (Original) & 92.91 & \textbf{74.62} & \textbf{82.15} \\
\hline
\end{tabular}
\end{table}

\clearpage

\section{Hyperparameter Selection and Held-Out Validation}
\label{app:hyperparameter_validation}

In this ablation, we evaluate the robustness of BLEND's hyperparameter selection using a held-out validation split that is independent of the training and test sets. As shown in Tables~\ref{tab:validation_gamma_lambda} and~\ref{tab:validation_adapter_ranks}, the highest validation H-mean is obtained with $\gamma=0.09$, $\lambda=0.2$, a global adapter dimension of $16$, and a projection adapter dimension of $256$. These values coincide with the configuration identified in our original test-set sensitivity analyses. This agreement provides independent validation of the selected configuration and indicates that the reported hyperparameter choices are not specific to the original test split.

\begin{table}[!h]
\centering
\scriptsize
\caption{Hyperparameter sensitivity of the fusion weight $\gamma$ and regularization coefficient $\lambda$ on the held-out validation split. The highest H-mean for each hyperparameter is highlighted in bold.}
\label{tab:validation_gamma_lambda}
\begin{tabular}{l|ccccccc}
\hline
\rowcolor{gray!20} $\gamma$ & 0.0 & 0.06 & 0.08 & 0.09 & 0.1 & 0.3 & 1 \\
\hline
H-Mean & 78.86 & 82.25 & 82.38 & \textbf{82.39} & 82.36 & 79.47 & 73.47 \\
\hline
\rowcolor{gray!20} $\lambda$ & 0.0 & 0.1 & 0.19 & 0.2 & 0.21 & 0.5 & 1 \\
\hline
H-Mean & 77.32 & 82.08 & 82.24 & \textbf{82.39} & 82.36 & 82.21 & 81.16 \\
\hline
\end{tabular}
\end{table}

\begin{table}[!h]
\centering
\scriptsize
\caption{Sensitivity to the global and projection adapter dimensions on the held-out validation split. The highest H-mean for each adapter is highlighted in bold.}
\label{tab:validation_adapter_ranks}
\begin{tabular}{l|ccccccccc}
\hline
\rowcolor{gray!20} Global adapter dim. & 0 & 4 & 8 & 16 & 32 & 64 & 128 & 256 & 512 \\
\hline
H-Mean & 80.84 & 82.11 & 82.18 & \textbf{82.39} & 82.06 & 81.97 & 81.75 & 81.31 & 81.32 \\
\hline
\end{tabular}

\vspace{1em}
\begin{tabular}{l|cccccccc}
\hline
\rowcolor{gray!20} Projection adapter dim. & 4 & 8 & 16 & 32 & 64 & 128 & 256 & 768 \\
\hline
H-Mean & 81.78 & 81.92 & 82.08 & 82.14 & 82.20 & 82.23 & \textbf{82.39} & 82.18 \\
\hline
\end{tabular}
\end{table}

\section{Limitations and Generalization Beyond CLIP}
\label{app:generalization}

\paragraph{Extension to Other Multimodal Architectures.} BLEND currently focuses on CLIP-based image--text models, consistent with prior work on personalized federated learning with vision-language models. Extending BLEND to other multimodal architectures primarily requires reconsidering where global and personalized modules are inserted into the model pipeline. For instance, in Qwen2.5-VL, one possible extension is to maintain a global vision merger while introducing a separate personalized vision merger. Their resulting visual representations could then be fused and passed to the language model as visual tokens for generation.

\paragraph{Extension to Other Downstream Tasks.} While BLEND currently focuses on classification, extending it to image--text retrieval would require adapting its fusion mechanism and training objectives. Unlike classification, retrieval involves ranking image--text pairs rather than selecting from a fixed set of classes. Therefore, the fusion of global and personalized visual representations must preserve cross-modal alignment while balancing generalization and client-specific personalization. For generative tasks such as visual question answering (VQA), BLEND's global--local decomposition could be retained, while its fusion mechanism and anchor objectives would need to be adapted to autoregressive generation.

\paragraph{Extension to Multi-Task and Multi-Modal Settings.} Extending BLEND to more complex multi-task and multi-modal settings introduces two important challenges. First, different tasks may rely on different modalities to varying degrees, making the information imbalance across modalities task-dependent. This introduces an additional form of heterogeneity, which we refer to as \textit{heterogeneity in information imbalance across tasks}. Addressing this challenge may require task-dependent placement of personalized modules and adaptive fusion weights. Second, globally shared and aggregatable components must account for task similarity, modality relevance, and potential negative cross-task transfer under heterogeneous client data distributions. Investigating these challenges represents an important direction for extending BLEND to broader multimodal federated learning scenarios.

\newpage

\end{document}